\documentclass{article}

\usepackage{microtype}
\usepackage{graphicx}
\usepackage{subcaption}
\DeclareCaptionLabelFormat{exCustom}{\normalfont{\textbf{Example~(#2)}}}%
\usepackage{booktabs} 
\usepackage{csquotes}
\usepackage{nicefrac}
\usepackage{booktabs,tabularx,makecell,threeparttable}
\usepackage{tablefootnote}

\usepackage[colorlinks=true,linkcolor=blue,citecolor=blue,urlcolor=blue]{hyperref}
\usepackage{enumitem}

\newcommand{\jb}[1]{\scriptsize {\bf \color{red}[JB: #1]}\normalsize}

\newcommand{\ufa}[1]{\scriptsize {\bf \color{red}[UFA: #1]}\normalsize}
\newcommand{\jr}[1]{\scriptsize {\bf \color{blue}[JR: #1]}\normalsize}

\newcommand{\krik}[1]{\scriptsize {\bf \color{cyan}[KM: #1]}\normalsize}

\renewcommand{\jb}[1]{}
\renewcommand{\jr}[1]{}
\renewcommand{\ufa}[1]{}
\renewcommand{\krik}[1]{}

\usepackage[accepted]{icml2026}

\usepackage{amsmath}
\usepackage{amssymb}
\usepackage{mathtools}
\usepackage{amsthm}
\usepackage{mathrsfs}

\DeclareMathOperator*{\argmin}{arg\,min}

\DeclareMathOperator{\Tr}{\mathrm{Tr}}

\makeatletter
\NewDocumentCommand{\ifOneColumnElse}{m m}{%
\if@twocolumn%
  #2
\else%
  #1
\fi%
}
\makeatother

\usepackage[capitalize,noabbrev]{cleveref}

\usepackage{xcolor}
\usepackage[colorlinks=true, linkcolor=blue, citecolor=blue, urlcolor=blue]{hyperref}

\newcounter{rq}
\renewcommand{\therq}{\arabic{rq}}

\newcommand{\RQ}[1]{%
  \refstepcounter{rq}%
  \phantomsection%
  \textbf{RQ\therq}\label{#1}%
}

\newcommand{\RQref}[1]{%
  \hyperref[#1]{\textbf{RQ\ref*{#1}}}%
}

\theoremstyle{plain}
\newtheorem{theorem}{Theorem}[section]

\newtheorem{lemma}[theorem]{Lemma}
\newtheorem{corollary}[theorem]{Corollary}
\theoremstyle{definition}
\newtheorem{definition}[theorem]{Definition}

\newtheorem{condition}[theorem]{Condition}
\theoremstyle{remark}

\usepackage[textsize=tiny]{todonotes}

\usepackage{tcolorbox}
\tcbuselibrary{theorems}
\tcbset{
    takeawaybox/.style={
        colback=gray!10,
        colframe=gray!50,
        fonttitle=\bfseries,
        title=Key Takeaways,
        rounded corners,
        boxrule=0.5pt,
    }
}

\icmltitlerunning{Performative Learning Theory}

\begin{document}

\twocolumn[
  \icmltitle{Performative Learning Theory 
  }



  \icmlsetsymbol{part-cond}{$\ddagger$}

  \begin{icmlauthorlist}
    \icmlauthor{Julian Rodemann}{cispa,lmu}
    \icmlauthor{Unai Fischer-Abaigar}{lmu,mcml}
    \icmlauthor{James Bailie}{chalmers}
    \icmlauthor{Krikamol Muandet}{cispa}
  \end{icmlauthorlist}
  \icmlaffiliation{cispa}{Rational Intelligence Lab, CISPA Helmholtz Center for Information Security, Saarbrücken, Germany}
  \icmlaffiliation{lmu}{Department of Statistics, LMU Munich, Germany}
  \icmlaffiliation{chalmers}{Department of Computer Science and Engineering, Chalmers University of Technology and University of Gothenburg, Sweden}
  \icmlaffiliation{mcml}{Munich Center for Machine Learning (MCML), Germany}
  \icmlcorrespondingauthor{Julian Rodemann}{julian.rodemann@cispa.de}
  \icmlkeywords{Machine Learning, ICML}

  \vskip 0.3in
]



\printAffiliationsAndNotice{}

\begin{abstract}

Performative predictions influence the very outcomes they aim to forecast. 
We study performative predictions that affect a sample (e.g., only existing users of an app) and/or the population (e.g., all potential app users). 
This raises the question of how well models generalize under performativity. For example, how well can we draw insights about new app users based on existing users when both of them react to the app's predictions?
We address such questions by embedding performative predictions into statistical learning theory. Our goal is to initiate the study of learnability under performativity. 
We prove generalization bounds 
under performative effects on the sample, on the population, and on both. 
A key intuition behind our results is that in the worst case, the population negates predictions, while the sample deceptively fulfills them. 
We cast such self-negating 
and self-fulfilling 
predictions as min-max and min-min risk functionals in Wasserstein space, respectively. 
Our analysis reveals both a fundamental trade-off between performatively changing the world and learning from it
, as well as a surprising insight on how to improve generalization guarantees by retraining on performatively distorted samples. 
We illustrate our bounds using real data on prediction-informed assignments to job trainings. 
\end{abstract}

{\renewcommand\thesubfigure{1.\arabic{subfigure}}%
\captionsetup[sub]{labelformat=simple}%
\begin{figure*}[ht]
\centering
\begin{subfigure}[b]{0.33\textwidth}
    \centering
    \includegraphics[width=\textwidth]{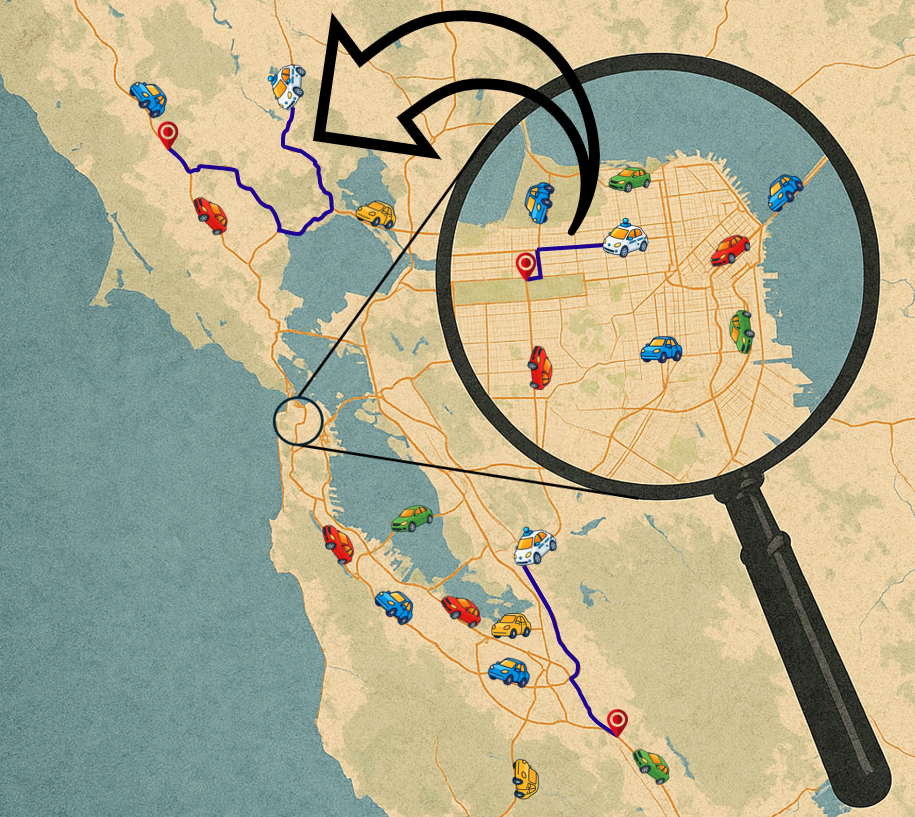}
    \caption{\textbf{ Example (A)}}
    \label{label for subfigure 1}
\end{subfigure}
\hfill
\begin{subfigure}[b]{0.33\textwidth}
    \centering
    \includegraphics[width=\textwidth]{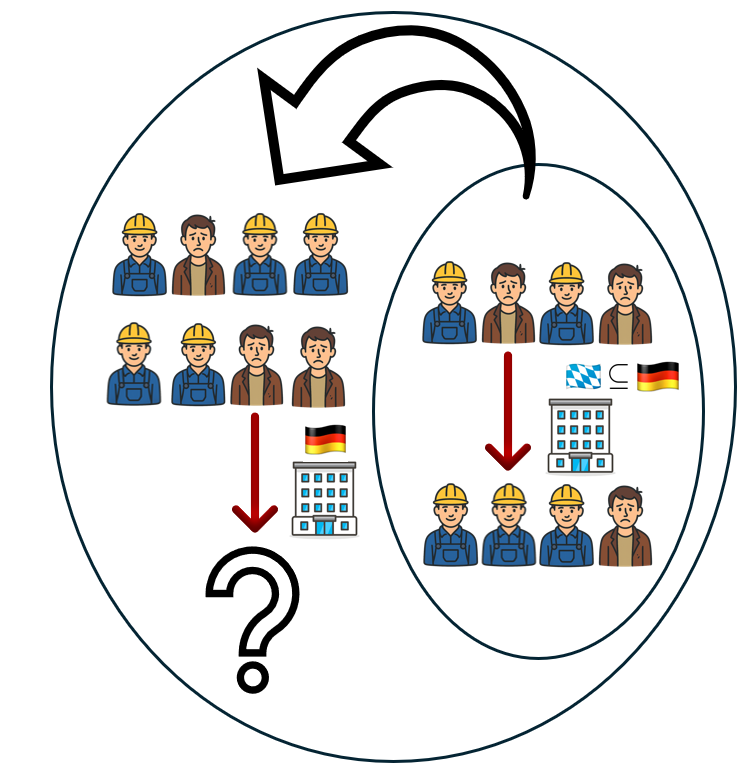}
    \caption{\textbf{ Example (B)}}
    \label{label for subfigure 2}
\end{subfigure}
\hfill
\begin{subfigure}[b]{0.33\textwidth}
    \centering
    \includegraphics[width=\textwidth]{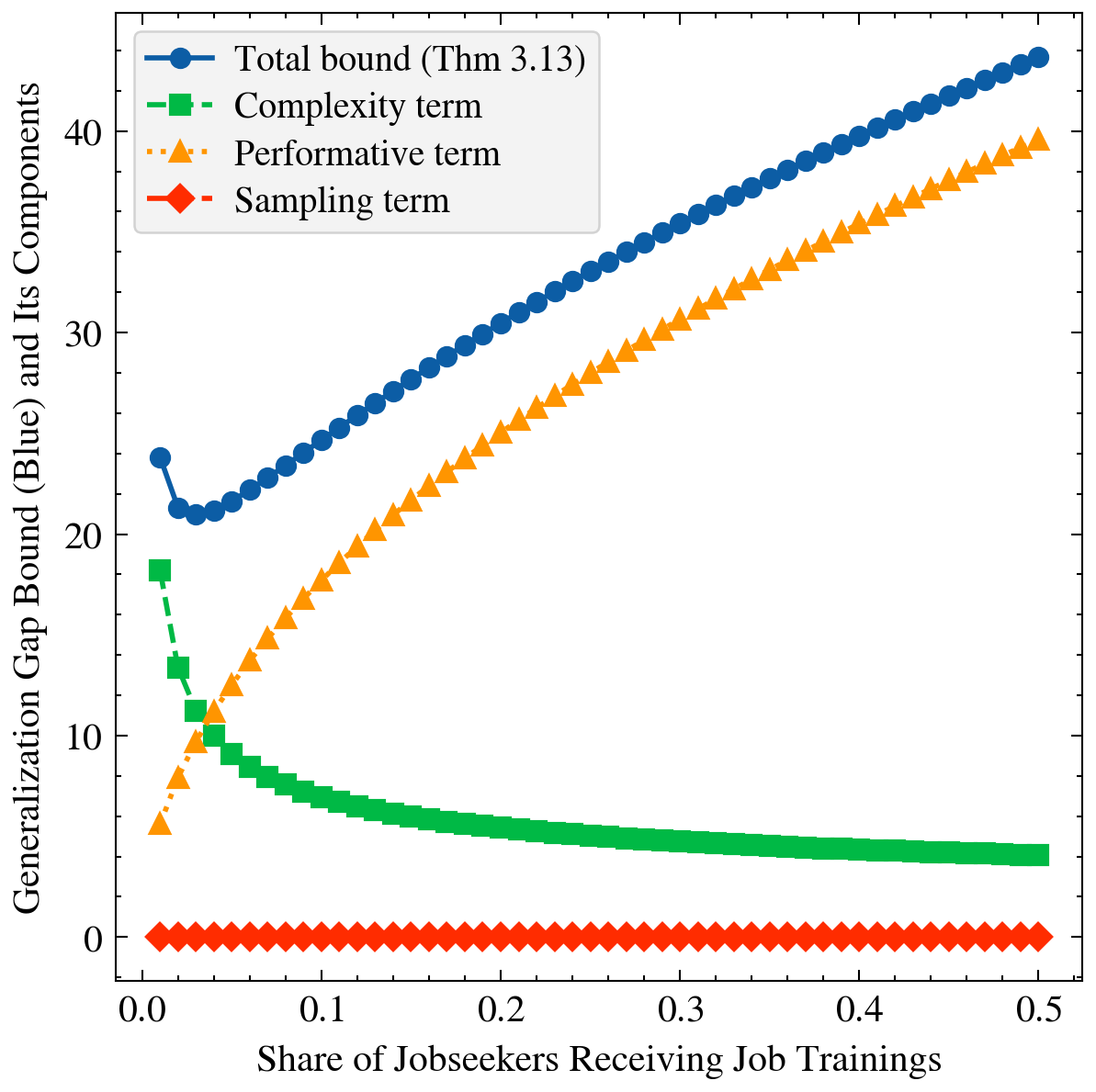}
    \caption{\textbf{ \textit{Change vs.~Learn trade-off} illustrated by Job seeker data (Section~\ref{sec:job-trainings}) }}
    \label{label for subfigure 3}
\end{subfigure}
    \caption{
    \textit{1.1} \textbf{ Example (A):} Route predictions are known to have performative effects: Drivers avoid routes with predicted congestion, thereby rendering these predictions less accurate. Can routing apps still generalize from San Francisco (sample) to the whole Bay Area (population)? 
    \textit{1.2} \textbf{ Example (B):} A job center in Bavaria assigns job training programs to those among the unemployed that have high risks of long-term unemployment according to machine learning (ML) predictions. As a result of the job training, their probability of finding a job increases, a textbook example of a performative effect, see Section~\ref{sec:job-trainings}. Can the job center's ML model trained on performatively shifted data from Bavaria generalize to the whole German population, which will in turn react to the predictions? 
    \textit{1.3} \textbf{ Theorem~\ref{thm:gen-gap-stronger-ass} applied to job seeker data:} Growth of generalization gap bound (blue) shows trade-off between performatively changing a population (by assigning more people to job trainings) and reliably learning its properties. Details on the bound's components (red, green, yellow) are in Section~\ref{sec:job-trainings}.}     
    \label{fig:routes-jobs} 
\end{figure*}}

\section{Introduction}

Machine learning (ML) systems have evolved from merely analyzing the world to shaping it. Their predictions have real-world effects on what these systems aim to predict in the first place \citep{liu2025bridging}. Building on work by \citet{morgenstern1928wirtschaftsprognose,grunberg1954predictability} in the social sciences, \citet{perdomo2020performative} formalize such feedback loops as \textit{performative} predictions (PP). Examples range from self-fulfilling prophecies in financial markets \cite{neurath1911nationalokonomie,soros2015alchemy,mackenzie2008engine} and strategic behavior in selection processes \cite{khandani2010consumer,vo2024causal} to self-negating traffic route predictions \cite{benenati2024probabilistic}, gaming recommender systems \cite{bian2023influencer} and early warning systems in high schools \cite{perdomo2025difficult}. We refer to \citet{hardt2025performative} for a recent survey on performative predictions.

In these examples, predictions of outcomes have performative effects on the actual outcomes (e.g., a navigation system predicting congestion may cause drivers to reroute, making the predicted congestion disappear). 
Both the initial framework \cite{perdomo2020performative} and subsequent work \cite{miller2021outside,brown2022performative,mofakhami2023performative,perdomo2023performative,perdomo2024relative,perdomo2025revisiting} define these actual outcomes as---statistically speaking---the whole population of units. Since their goal is to study stability and optimality of PP, these works do not address learnability of the population from a sample under performativity.

In the article at hand, we generalize this setting by studying performative effects on a sample drawn from the population, on the population itself, as well as on both. In many applications, only a finite sample of training data is available instead of complete access to the entire population. Performative effects might arise both in and out of this sample, see Sections~\ref{sec:running-examples} and~\ref{sec:job-trainings} for real-world examples. In other words, while prior works 
focus on repeated \textit{risk} minimization under \textit{population} performativity, we turn our attention additionally to repeated \textit{empirical risk} minimization \citep[ERM,][]{vapnik1998statistical} under \textit{empirical} and \textit{population} performativity.
This raises the question whether and how well models can generalize from train to test data if the train or test data (or both) are subject to performative effects. We prove bounds on the generalization error, the generalization gap and the excess risk in all these scenarios under generic assumptions on the performative effects. 
In particular, we do not assume any specific knowledge about the performative effects---besides being continuous as in \citet{perdomo2020performative,brown2022performative}. This is in contrast to strategic classification (assuming best-response under explicit costs), causal modeling (structural assumptions),
or algorithmic game theory and mechanism design (assuming equilibrium-based response models with rational agents), see discussion of related work in Section~\ref{sec:discussion} and Appendix~\ref{app:furth-rel-work}.


\subsection{Summary of Contributions} This paper introduces performative learning theory (PLT): a statistical learning-theoretic framework for studying generalization when predictions change the data-generating process. A necessary first step is to develop a conceptual grasp of what generalization actually means if the sample, population or both performatively react to predictions. Section~\ref{sec:main} achieves this, triggering four main research questions~\RQref{rq:1} through~\RQref{rq:4}. We formally embed PP into learning theory, allowing us to answer ~\RQref{rq:1}--\RQref{rq:3}. Specifically, we prove generalization bounds on PP, requiring only a compact parameter space and a subset of the assumptions in the original PP framework \cite{perdomo2020performative,brown2022performative}. Moreover, we give tighter bounds under additional assumptions on the hypothesis class.  
We find that two factors complicate generalization under performativity: Not only can the population negate our predictions (self-defeating), the sample can also deceive us by doing the opposite and reinforce them (self-fulfilling), creating an \textit{empirical echo chamber}.
Technically, we cast these self-negating and self-fulfilling predictions as min-max and min-min learning problems in Wasserstein space, respectively. This allows us to leverage empirical process theory for dual characterization of locally-distributionally robust \citep{gao2023distributionally} and favorable \citep{jiang2025tractability} learning, respectively. 
Conceptually, our bounds highlight a fundamental trade-off between performatively changing data and drawing reliable conclusions from it. We further provide a corollary that allows for improving generalization bounds by retraining on performatively distorted
samples. We illustrate our results in a case study on performative effects of unemployment risk predictions, harnessing administrative labor market records from 1975 to 2017 in Germany (raw data with over 60 million rows) provided by the German Federal Employment Agency.


\subsection{Running Examples}\label{sec:running-examples}

We introduce two prototypical examples of generalization under performativity. In Example (A), performative effects are induced by the strategic behavior of agents (endogenous), while in Example (B), they result from treatments (exogenous). Our results address both scenarios. 
Both examples involve a model~$\widehat \theta_{t+1}$ trained on a sample~$\widehat d_t$, which was drawn from a population $d_{t}$.
As is customary, we assume $ \widehat d_0 \overset{\mathrm{iid}}{\sim} d_0$ 
for the initial sample $\widehat d_0$ and the initial population $d_0$. 
Both quantities can change over time, and hence are indexed by $t \in \{1, \dots, T\}$.

\textbf{(A) Routing App:} 
A company offers mapping services that help users find locations and get directions. Their route planning is known to have performative effects on the users. For instance, drivers avoid routes with predicted traffic jams, jeopardizing these prediction's accuracy by altering the ground truth---a classic example of \textit{self-negating} performative predictions \citep{cabannes2019capturing,bagabaldo2024impact,benenati2024probabilistic}. 
Suppose the company now wants to use a new model for the route predictions. In order to test it, they first run the model $\widehat \theta_1$ in beta stage on selected users (sample $\widehat d_0$), say, only in San Francisco (see Figure~\ref{fig:routes-jobs}.1). The performative effects of $\widehat \theta_1$ alter this sample $\widehat d_0$ and thus the subsequently updated models, giving $\widehat d_1, \dots, \widehat d_T$ and $\widehat \theta_1, \dots, \widehat \theta_{T}$. 
Before releasing the new model to all users $d_0$ in, say, the whole Bay Area (Figure~\ref{fig:routes-jobs}.1), the company would like to know the worst case error of their current model $\widehat \theta_{t} \in \{\widehat \theta_1, \ldots, \widehat \theta_{T}\} $ on all users $d_0$ (or performatively affected $d_t$). 
Our bounds allow for nuanced answers to this question. They tell the company the generalization error that their model will not exceed, depending on the sample size $n$ of $\widehat d_0$ and on the (observable) number $m$ of drivers in that sample that changed their route due to the predictions, giving rise to what we call the \textit{performative response rate}~$\nicefrac{m}{n}$. The latter serves as an estimator of the performative response rate in the whole Bay Area.

\textbf{(B) Profiling of Job Seekers (Section~\ref{sec:job-trainings})}:
%
Many public employment services (PES) attempt to predict the risk of newly registered job seekers becoming long-term unemployed \cite{allhutter2020algorithmic,kern2021fairness,junquera2025from}. These risk predictions---generated by caseworkers or ML models---determine access to scarce job training programs, since job centers assume high-risk individuals to benefit the most from those trainings \citep{ernst2024risk,fischer-abaigar2025the}. Thus, the predictions exhibit a performative effect: If the model predicts a high risk of prolonged unemployment for a person, they are more likely to be allocated to a training program, leading to a quicker reintegration in the labor market (if the training is effective). Suppose the German PES tests a new ML model on selected job centers (e.g., only in Bavaria, see Figure~\ref{fig:routes-jobs}.2, or in randomly selected job centers) and observes how this model's predictions affect (via assignments to training programs) whether job seekers exit unemployment. Can we help the agency by giving guarantees on their model's generalization error when rolled out to all job centers? Section~\ref{sec:job-trainings} answers affirmatively by illustrating our generalization bounds on real data from the German PES.


\section{Generalization Under Performativity}\label{sec:main}

We assume our data live in a compact subset $\mathcal{Z} = \mathcal{Y} \times \mathcal{X}$ of $\mathbb R^v$ with $\mathcal{Y} \subset \mathbb R^{\nu_y}$, $\mathcal{X} \subset \mathbb R^{\nu_x}$ and $\nu = \nu_y + \nu_x$. Random variables on $\mathcal{Z}$ will be denoted by $Z_1, \ldots, Z_n$, probability distributions on $\mathcal Z$ by $d$, and the set of all such $d$ by~$\Delta$. Let $\mathcal F := \{ f_\theta : \mathcal X \to \mathcal Y \mid \theta \in \Theta\}$ be our hypothesis class, whose parameter space $\Theta$ we assume to be a compact and convex subset of Euclidean space \citep[akin to][]{perdomo2020performative}.
We equip $\mathcal{X}, \mathcal{Y},\mathcal{Z}$ and $\Theta$ with the Euclidean ($L_2$) norm $\|\cdot\|_2$, and define $F := \sup_{\theta \in \Theta, x \in \mathcal{X}} \|f_\theta(x)\|_2$. Observe that $F$ is finite because $\mathcal Y$ is bounded. 

\begin{definition}[Risk (Minimizer) and Loss]\label{def:risk-loss}
The \emph{risk} of the parameter $\theta \in \Theta$ on data generated according to the distribution $d \in \Delta$ 
is defined as
$
\mathscr R(d, \theta) =  \mathbb{E}_{Z \sim d}\left[ \ell(Z, \theta) \right]
$, 
where $\ell : \mathcal{Z} \times \Theta \rightarrow \mathbb R$ is a \emph{loss function}. For any distribution $d \in \Delta$, we call \[G(d) = \argmin_{\theta \in \Theta}
\mathscr R(d, \theta),\] the \emph{risk minimizer}, which we assume to be unique. 
\end{definition}
 
%
Given independent and identically distributed (i.i.d.)~training data $Z_{1}, \ldots, Z_{n}$ drawn from a true but unknown distribution~$d_0$, we seek some $\theta \in \Theta$ 
whose risk $\mathscr R(d_0, \theta)$ is close to the true, unknown minimum risk
$
\inf_{\theta \in \Theta} \mathscr R(d_0, \theta),
$
often referred to as \textit{Bayes} risk. The risk $\mathscr R(d_0, \theta)$ of $ \theta$ 
with respect to the true $d_0$ is known as \textit{generalization error} of $\theta$. 
%
\begin{definition}[Excess Risk]\label{def:excessrisk}
    The difference $\mathscr R(d, \theta) - \inf _{\theta' \in \Theta} \mathscr R(d, \theta')$ is called the \emph{excess risk} of $\theta$. 
\end{definition}

A popular learning strategy is ERM \citep{vapnik1998statistical}.
Let $\widehat{d}:=\frac{1}{n} \sum_{i=1}^{n} \delta_{Z_{i}}$ 
be the \emph{empirical distribution} of $Z_1, \ldots, Z_n$, where $\delta_{Z_i}$ is the Dirac measure at $Z_i \in \mathcal Z$. 
We denote the \emph{empirical risk minimizer} $G(\widehat d)$ by $\widehat \theta$.
We now introduce the general setting of \emph{performative predictions} \cite{perdomo2020performative}, where the deployment of a predictive model influences the distribution of future data. 
For maximal generality, we adopt the \emph{stateful} extension of performative prediction in which 
model predictions and the current data distribution jointly induce tomorrow's distribution \cite{brown2022performative}.
Formally, the evolution of the distribution is governed by an unknown transition map $\Tr$ as follows: 
\begin{table*}[t]
\centering
\begin{threeparttable}
\caption{Overview of Research Questions \RQref{rq:1}--\RQref{rq:4} on generalization under performativity. The setup with no retraining and no performativity is the classical learning theory setup, where ERM is Bayes optimal. 
Retraining on new samples without performativity is a special case of online learning. The performative prediction (PP) setup established by \citet{perdomo2020performative}, \citet{brown2022performative} and \citet{perdomo2025revisiting} considers performativity on the whole population and retraining of models on the whole population. \citet[Sec.~3.4]{perdomo2020performative} also considers retraining on data sampled anew (\textit{i.i.d}) in each iteration from the performatively changed population. Dashes (\enquote{--}) mark nonsensical or inapplicable combinations. For instance, retraining on a sample without performative effects is nonsensical since the sample does not change and so retraining would correspond to doing the exact same thing multiple times. Retraining on both the sample and the population is relevant when we have access to a sample first, and then later to the whole population (sequentially) or for statistical analysis with a hypothetical process on the assumed population in parallel (simultaneously).} 
\label{tab:all-cases}
\setlength{\tabcolsep}{8pt}
\renewcommand{\arraystretch}{1.0}
\begin{tabularx}{\textwidth}{lcccccc}
\toprule
 & \multicolumn{5}{c}{\textbf{Retraining}} \\
\cmidrule(lr){2-7}
\makecell{\textbf{Performative} \\ \textbf{Effects}} &
\textbf{None} &
\multicolumn{2}{c}{\textbf{On Sample}} &
\makecell{\textbf{Population}}&
\makecell{\textbf{Both,\tnote{1}} \\ \textbf{Sequentially}} &
\makecell{\textbf{Both,\tnote{1}} \\ \textbf{Simultaneously}}\\
\cmidrule(lr){3-4}
 &  & \textbf{Drawn Once} & \textbf{Drawn Anew} &  &  \\
\midrule
\textbf{None}            &  ERM & -- & \makecell{Online Learning} & -- & -- & --  \\
\textbf{On Sample}      &  -- & \RQref{rq:1}  & -- & PP &  -- &  --  \\
\textbf{On Population}  & \RQref{rq:2} (a)\tnote{*} & \RQref{rq:2} (b) & PP (Sec.~3.4) & PP  & \RQref{rq:3} (a)  & -- \\
\textbf{On Both}           & \RQref{rq:2} (c) & \RQref{rq:2} (d) & PP (Sec.~3.4)  & PP & \RQref{rq:3} (b) & \RQref{rq:4}\tnote{**}  \\
\bottomrule
\end{tabularx}
\begin{tablenotes}
\footnotesize
\item[1]With the sample being drawn once. 
$^*$Partly answered by \citet{kirev2025pac}. $^{**}$Asymptotically answered by \citet{li2025statistical}. \jb{\item[3]I don't understand why ERM fits in here. This relates to my question on note 1.} \jr{resolved}
 \jb{\item[4]Why is PP here? I thought it only covers the Both/Drawn anew case, or the Both/Both, Comb. case.}
\jr{You are correct. I fixed it.}
 \jb{\item[7]I think this should be ERM+RM} \jr{I think that depends on whether you are in a supersample or population setup, i.e., on whether "out of sample" refers to a finite (super)sample (=ERM) or to a population-type quantity (=RM).}
 \jb{\item[8]I think this should be - (a dash). Why does it even make sense to do repeated retraining when you are training on the entire population, and the population isn't changing? You're just going to get back the exact same thing everytime.} \jr{I agree this does not make sense. But I wanted to include all possible combinations and abstract from how meaningful they are for time being.} \jb{I think it is a really great idea to include all possible combinations -- i.e. all possible scenarios (indeed the table is doing exactly this). What I think will be confusing for the reader is to force ourselves to put a particular label (either ERM or RQ$X$ or online learning or PP) for every particular combination; instead I think we should mark some particular combinations as irrelevant/nonsensical/not applicable.} \jr{I agree and I have changed the table accordingly.}
\end{tablenotes}
\end{threeparttable}
\end{table*}%
In each round $t \in \{1, \dots, T\}$, an institution 
deploys \(\theta_t\) and subsequently observes the induced distribution \[d_t = \Tr(d_{t-1},\theta_t,).\] Note that the \textit{stateless} $d_t = \Tr_s(\theta_t)$ by \citet{perdomo2020performative} is a special case. The \emph{stateful performative risk} of $\theta$ is defined as
$
 \mathbb E_{Z\sim d_t} [\ell (Z, \theta)] = \mathbb{E}_{Z\sim \mathrm{Tr}(d_{t-1},\theta)}[\ell(Z,\theta)]. 
$

\begin{definition}[Repeated (Empirical) Risk Minimization]\label{def:RERM}
A natural approach in the performative setting is to update the model through \emph{repeated risk minimization (RRM)}. At each round $t$, the new model parameter \(\theta_{t+1} = G(d_t)\) is chosen to minimize the (true) risk on the current distribution $d_t$.
As hinted at above, we additionally study \emph{repeated empirical risk minimization (RERM)} which is defined analogously for $Z_1, \dots, Z_n \sim \widehat d_{t} = \mathrm{Tr}(\widehat d_{t-1},\widehat \theta_t)$ as $\widehat \theta_{t+1} = G(\widehat d_{t})$.
\end{definition}
\citet{perdomo2020performative,brown2022performative} study stability and optimality of $\theta$ in these sequences $\{\theta\}_{t=1}^T$ and $\{\widehat \theta\}_{t=1}^T$. We explain how stable $\theta$'s and optimal $\theta$'s are subsumed by our analysis in Appendix~\ref{app:stable-optimal}.

\subsection{On the Meaning of Performative Generalization}

\emph{To what extent can models generalize from a finite sample to the population in a performative world?}
We study this question under performativity in the sample, in the population or in {both}.
Interestingly, performative generalization can differ across these scenarios, depending on the data on which the model is (re)trained: the sample, the population, or (sequentially or simultaneously) on both. Table~\ref{tab:all-cases} summarizes 
all of the resulting scenarios, including familiar paradigms like ERM, online learning and classical PP 
as well as four open research questions (RQs). 

\jb{The way I see the following research questions are not as research questions, but as proposals of different quantities/concepts, which are of interest in different performative contexts. I think it might be better to rephrase this discussion + the table to (1) use the table to list out the contexts/situations, and identify contexts/situations which are interesting; (2) identify the interesting contexts/situations which have already been handled in existing literature; and then (3) to write down the appropriate concept of interest for understanding generalization in each of these situations, and the appropriate learning algorithm for fitting the models in each of these situation (if this makes sense, maybe the only learning algorithm we'd use is RERM/RRM?).}\jr{Yeah, RERM/RRM is the only algo we consider. Other than that, I revised accordingly. However, I would like to stick to the formulation "Research Questions" for now in order to emphasize out goal of initiating the study of learning under performativity.}

(\RQ{rq:1}) Can we bound the (classical) \textit{excess risk} (Def.~\ref{def:excessrisk})
  $
        \mathscr R(d_0,\widehat \theta_{t}) - \inf _{\theta \in \Theta} \mathscr R(d_0, \theta)    
  $
  for models $\widehat{\theta}_{t}$ iteratively retrained on samples $\widehat d_{t-1} = \mathrm{Tr}(\widehat d_{t-2},\widehat \theta_{t-1})$?
  In this scenario, a model exhibits performative effects on the sample (or subpopulation) it is trained on, but not on the whole population. 
  As illustration, consider Example~(A) in which a navigation app's route predictions are being made visible only to selected users (sample), not to all users (population). 
  
(\RQ{rq:2}) Can we bound the \textit{performative excess risk} 
\[
\mathscr R(\mathrm{Tr}(d_0, \widehat \theta_{t}),\widehat \theta_{t}) - \inf _{\theta \in \Theta} \mathscr R(\mathrm{Tr}(d_0, \widehat \theta_{t}), \theta)
    \]
    for any $t \in \{1, \dots, T\}$? This is the performative version of the classic excess risk (Def.~\ref{def:excessrisk}). The first term is the \textit{performative generalization error}. It describes the error a model $\widehat \theta_{t}$ (trained on a sample $\widehat d_{t-1}$) suffers when being deployed on the population $d_0$ under performativity. In this case, the performative effect of $\widehat \theta_{t}$ will shift $d_0$ to $d_1 = \mathrm{Tr}(d_0, \widehat \theta_{t})$. The model is then evaluated on this performatively shifted distribution. The second term is the Bayes risk on this shifted $d_1$ within the \enquote{hypothesis class}~$\Theta$. 
        \RQref{rq:2}~(a): In case of no retraining, $\widehat \theta_1$ is found via standard ERM. After being deployed on the population, it causes a performative shift $d_1 = \mathrm{Tr}(d_0, \widehat \theta_{1})$. This special case 
        is partly answered by \citet{kirev2025pac} for binary classification under linear performative shift of $Y\mid X$ and marginal $X$. Our results are more general in two ways. First, we account for all Lipschitz continuous transition maps as in \citet{perdomo2020performative,brown2022performative}. Second, we subsume performative shifts on any subsets of $Y$ and $X$.
        \RQref{rq:2} (b): Retraining on the same $\widehat d_1$ (due to the absence of performative effects) gives $\widehat \theta_{1} =  \widehat \theta_{2} = \dots = \widehat \theta_{T}$. 
        \RQref{rq:2} (c) is subsumed by (a), as performative effects on the sample are irrelevant (do not change~$\widehat{\theta}_{1}$) due to lack of retraining.
        \RQref{rq:2}~(d) combines \RQref{rq:1} and \RQref{rq:2} (a): In addition to (multi-shot) sample performativity $\widehat d_t = \mathrm{Tr}(\widehat{d}_{t-1}, \widehat \theta_{t})$ as in~\RQref{rq:1}, we have to account for (single-shot) population-level performative shifts $d_1 = \mathrm{Tr}(d_0, \widehat \theta_{t})$ as in~\RQref{rq:2} (a). 
    %
        Like the standard excess risk, the performative excess risk is subject to uncertainty about the unknown distribution $d_0$. However, the performative excess risk is also subject to uncertainty about unknown $\mathrm{Tr}(\cdot, \cdot)$. 
        For scenario \RQref{rq:2} (d), consider again Example (A): The routing model is (re)trained on selected users (sample), but predictions are made public to all users in $d_0$, causing a performative shift to $d_1$. As we do not retrain on the population, we do not have to account for performative effects beyond the one shifting $d_0$ to $d_1$. This changes in~\RQref{rq:3}.
    
(\RQ{rq:3}) Can we bound the \textit{cumulative performative excess risk} of a model that was first trained $T$ times on the sample and then $\tilde T - T$ times on the population
    \[
    \sum_{t=T}^{\tilde T} \left(\mathscr R(d_{t}, \theta_{t}) - \inf _{\theta \in \Theta} \mathscr R(d_{t}, \theta) \right) \label{eq:cumulative-perf-excess}
    \]
    with $d_{t} = \mathrm{Tr}(d_{{t}-1}, \theta_{{t}})$ for $t \ge T$, $\theta_T = \widehat \theta_T$ and $d_{T-1} = d_0$? Note that we cannot disregard the retraining on shifted samples, because this affects the population via the initial performative effect of the deployed $\widehat \theta_T$. While \RQref{rq:2} asks for the generalization error of a model on a population reacting to the model's predictions \textit{once}, \RQref{rq:3} considers the accumulated generalization error on a population reacting to ($\tilde T - T$ times) repeatedly retrained models $\theta_{t}$ on correspondingly changing populations. Like \RQref{rq:2}, \RQref{rq:3} generalizes \RQref{rq:1} in the sense that 
    \RQref{rq:3} is equivalent to \RQref{rq:1} when $\tilde T=1$. We can assume $T=1$ for \RQref{rq:3} (a), since there is no point in retraining on the sample because there are no performative effects on the sample, while $T$ can be any positive integer in \RQref{rq:3} (b). 
    Unlike \RQref{rq:2}, however, \RQref{rq:3} is only relevant if the whole population $d_{t}$ is available to the institution. 
    
    

(\RQ{rq:4}) The last research question is \textit{statistical} in nature. It compares RERM $\{\widehat \theta_t\}_{t=1}^T$ to a simultaneous (\textit{hypothetical}) RRM $\{\theta_t\}_{t=1}^T$ on the population. Can we bound the \textit{inferential gap}
$
    \sum_{t=1}^{T} ( \mathscr R( d_{t}, \widehat \theta_{t}) -  \mathscr R(d_{t}, \theta_t) )
$
    between the two?
This question is partly answered by the central limit theorem (CLT) on statistical inference under \emph{stateless} performativity by \citet{li2025statistical}, which states that, for any $t$, $\sqrt{n}(\widehat{\theta}_t - \theta_t) \xrightarrow{D} N(0, \Sigma_t)$ as $n \to \infty$, where $\Sigma_t$ is an identifiable covariance matrix. \RQref{rq:4} gives the standard CLT \citep{demoivre1738} for $t=1$ as a special case. 
Obviously, this result implies that the inferential gap goes to zero as $n \rightarrow \infty$, under continuity of the risk. Finite sample analysis and the extension to the \emph{stateful} case are yet to be conducted.

\section{Generalization Bounds on Performative Predictions}\label{sec:bounds}

The previous section provides a conceptual understanding of what generalization means in a performative world. We now turn to a technical embedding of performativity into statistical learning theory \citep{vapnik1968uniform,vapnik1991principles,vapnik1998statistical,vapnik2013nature}. We answer \RQref{rq:1}--\RQref{rq:3} 
under a generic assumption on the nature of performative shifts, namely Wasserstein sensitivity of the transition map $\mathrm{Tr}$ (Condition~\ref{cond:tr-sensitive}). This assumption has been used consistently throughout the literature\footnote{See Definition 3.1 in \citet{perdomo2020performative}, Equation (A1) in \citet{miller2021outside}, Definition 1 in \citet{brown2022performative}, the even more restrictive (A1) in \citet{mofakhami2023performative}, Assumption 3.1(d) in \citet{li2025statistical}, Definition 1 in \citet{mendler2020stochastic}, Assumption 1.2 in \citet{w-2}, Assumption A3 in \citet{w-3}, Assumption 2 in \citet{w-4}, Assumption 2.2 in \citet{w-5}, Condition W1 in \citet{w-6}, Definition 2.3 in \citet{w-7},} and we refer to \citet{w-8} for a discussion of its generality as well as to Section~\ref{sec:job-trainings} for an illustration of this assumption in the context of running example (B).\footnote{In the running example (A) of routing apps, Wasserstein sensitivity would, e.g., mean that the next-round traffic behavior is generated by a Markov kernel that is Lipschitz in both $z$ and $\theta$. This is one type of response studied in routing settings where recommendation changes induce bounded changes in traffic behavior. \citep{cabannes2019capturing,benenati2024probabilistic}}
Specifically, we do not assume any particular functional form of $\mathrm{Tr}$ like, e.g., \citet{mendler2022anticipating,kirev2025pac}. The price we pay for this generality is the fact that our bounds are looser than they would be otherwise (yet still insightful, see Corollary~\ref{cor:improve}). To address this, we will include more conditions to obtain tighter bounds later.   
To begin with, however, we only need the following subset of conditions in \citet{perdomo2020performative,brown2022performative}:\footnote{While Conditions~\ref{cond:loss-lipschitz-convex} and~\ref{cond:tr-sensitive} are used throughout \citet{perdomo2020performative,brown2022performative}, Condition~\ref{ass-cont-diff-feat-param} is only required for Theorem~4.3 and Corollary~5.1 in \citet{perdomo2020performative} and Theorem~5 and~6 in
\citet{brown2022performative}.}
\begin{condition}\label{cond:loss-lipschitz-convex}
     The loss \(\ell(z, \theta)\) is 
     \(\gamma\)-strongly convex in $\theta$.
\end{condition}
\begin{condition}\label{cond:tr-sensitive}
The transition map $\mathrm{Tr}$ is $(\varepsilon, p)$-jointly sensitive for some \(\varepsilon > 0\) and some $1 \le p \le \max \{ 2, \nu/2\}$. That is, $W_p(\mathrm{Tr}(d,\theta), \mathrm{Tr}(d',\theta')) \leq \varepsilon W_p(d,d') + \varepsilon \| \theta - \theta'\|_2$, where $W_p$ denotes the $p$-Wasserstein distance. 
\end{condition}
%
\begin{condition}\label{ass-cont-diff-feat-param}
    The loss $\ell(z, \theta)$ is $L_\ell$-Lipschitz in~$z$ and $\kappa$-continuously differentiable with respect to~$\theta$---i.e., at every $z \in \mathcal Z$, the gradient $\nabla_{\theta} \ell(z, \theta)$ exists and is $\kappa$-Lipschitz continuous (with respect to the $L_2$ norm on domain and codomain) in each of $\theta$ and~$z$.
\end{condition}


Condition~\ref{ass-cont-diff-feat-param} implies that the function $G$ (Definition~\ref{def:risk-loss}) is $L_a$-Lipschitz with $L_a >0$ on the convex and compact $\Theta$ \citep[see Appendix~\ref{proof-lemma-perf-sample-shift} and Lemma~2 of][]{brown2022performative}. Our strategy for proving generalization bounds is twofold. In order to generalize from $\widehat d_T$ to $d_0$ (\RQref{rq:1}), we bound the Wasserstein distances $W_{p}(\widehat{d}_0, d_0)$ (Lemma~\ref{lemma-wasserstein-iid}) and $W_p(\widehat{d}_0, \widehat{d}_{T})$ (Lemma~\ref{lemma-in-sample-shift}), before relating these divergence bounds to expectation difference bounds via the Kantorovich-Rubinstein Lemma \citep{kantorovich1958space}. 
\begin{lemma}[Convergence in Wasserstein Space; \citealt{fournier2015rate}]\label{lemma-wasserstein-iid}\label{lemma:wasserstein-conv}
Let $\mathscr D_{\mathcal{Z}} := \sup_{z, z'} \|z - z'\|_2 < \infty$. 
Then, for any $\beta_0 \in(0, \infty)$, we have
\[
    W_{p}\left(\widehat{d}_0, d_0\right) \leq \beta_0,
\]
with probability of at least $1 - C_{a} \exp \left(-C_{b} n \beta_0^{\nu}\right) $,
where $C_{a}$ and $C_{b}$ are constants 
that depend on $p, \nu$ and $\mathscr D_{\mathcal{Z}}$ only.
\end{lemma}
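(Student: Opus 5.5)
This lemma is essentially Theorem~2 of \citet{fournier2015rate}, specialised to compactly supported measures. Our plan is therefore to verify that their hypotheses hold and then simplify their deviation inequality into the stated form. Fournier and Guillin bound $\mathbb P(W_p^p(\widehat d_0,d_0)\ge x)$ in terms of a moment condition on $d_0$. Because $\mathcal Z$ is compact with diameter $\mathscr D_{\mathcal Z}$, every exponential moment condition (their condition (1), with any $\alpha>p$) holds trivially, uniformly over all $d_0\in\Delta$. The constants therefore depend only on $p,\nu$ and $\mathscr D_{\mathcal Z}$, and not on $d_0$.

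The key steps are as follows.

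\textbf{Step 1 (translate the event).} We observe that $W_p(\widehat d_0,d_0)\le\beta_0$ is equivalent to $W_p^p\le x:=\beta_0^p$.

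\textbf{Step 2 (apply the deviation bound).} Their theorem gives
\[
\mathbb P\bigl(W_p^p\ge x\bigr)\le a(n,x)\,\mathbf 1_{\{x\le1\}}+b(n,x),
\]
where $a(n,x)=C\exp(-cnx^{\nu/p})$ in the regime $p<\nu/2$. In the regimes $p=\nu/2$ and $p>\nu/2$, the exponents are instead $nx^2/\log^2(2+1/x)$ and $nx^2$.

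\textbf{Step 3 (remove the heavy-tail term).} On compact support, the term $b(n,x)$ vanishes once $x$ exceeds $\mathscr D_{\mathcal Z}^p$. For smaller $x$, we choose the moment parameter large so that $b$ is dominated by an exponential term of the same form. Alternatively, and more cleanly, we rescale $\mathcal Z$ into the unit cube, where Fournier and Guillin note that $b\equiv0$ for bounded support; the rescaling puts $\mathscr D_{\mathcal Z}$ into $C_a,C_b$.

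\textbf{Step 4 (rewrite the exponent).} With $x=\beta_0^p$, we have $x^{\nu/p}=\beta_0^{\nu}$, which gives the stated $\exp(-C_b n\beta_0^\nu)$.

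\textbf{Step 5 (large $\beta_0$).} For $\beta_0\ge\mathscr D_{\mathcal Z}$, the event holds surely, so any $C_a\ge0$ works there. We then absorb the case distinction between $x\le1$ and $x>1$ into the constants.

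The main obstacle is the regime mismatch. The clean exponent $n\beta_0^{\nu}$ is exactly Fournier--Guillin's rate for $p<\nu/2$. The condition $p\le\max\{2,\nu/2\}$ in Condition~\ref{cond:tr-sensitive} also permits $p\ge\nu/2$, where their rate is $n\beta_0^{2p}$, possibly with a logarithm. Our first attempt would use that on a bounded domain $\beta_0\le\mathscr D_{\mathcal Z}$ and $\nu\ge2p$ in the main regime, so $\beta_0^{2p}\ge c\,\beta_0^{\nu}$ whenever $2p\le\nu$, and faster rates can be weakened into the stated form with adjusted constants.

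This comparison does not settle the low-dimensional cases with $2p>\nu$, where $\beta_0^{2p}\le\beta_0^{\nu}$ for small $\beta_0$. It also does not handle the logarithmic factor at $p=\nu/2$. For these cases, we would restrict attention to $\nu\ge 2p$, or else embed $\mathcal Z$ into a higher dimension to make $\nu$ effectively large. We would check carefully that these restrictions are what the paper intends; otherwise the statement should be read as holding with the dimension-appropriate exponent.
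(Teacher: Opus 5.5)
Your Steps~1--5 are the paper's argument spelled out. The paper's entire proof is the remark that the lemma is Theorem~2 of \citet{fournier2015rate} under their exponential-moment condition~(1) with $\alpha=\nu$. That is precisely your Step~3 choice of a large moment exponent: for $x>1$ one has $x^{\alpha/p}\ge x^{\nu/p}$ once $\alpha\ge\nu$, so the $b$-term merges with the $a$-term. The substitution $x=\beta_0^p$ then finishes it. (To make the constants depend on $\mathscr D_{\mathcal Z}$ rather than on $\sup_z\|z\|_2$, translate $\mathcal Z$ first, which leaves $W_p$ unchanged; your rescaling variant already does this.) The exponent $n\beta_0^{\nu}$ only comes out of that theorem when $p<\nu/2$, and the paper's citation silently uses that regime.

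Your regime caveat is therefore correct, and sharper than you put it: for $p>\nu/2$ the statement is false, not just unproven. Take $d_0=\tfrac12\delta_{z_1}+\tfrac12\delta_{z_2}$ with $z_1,z_2\in\mathcal Z$ and $\|z_1-z_2\|_2=D>0$. Then $W_p(\widehat d_0,d_0)=|\widehat q-\tfrac12|^{1/p}D$, where $\widehat q$ is the empirical frequency of $z_1$. At $\beta_0=Dn^{-1/(2p)}$ the failure probability tends to $\mathbb P(|N(0,1)|>2)>0$. Meanwhile $C_a\exp(-C_b n\beta_0^{\nu})=C_a\exp(-C_bD^{\nu}n^{1-\nu/(2p)})\to0$, a contradiction. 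At $p=\nu/2$, Fournier--Guillin only give $\exp\bigl(-cn\beta_0^{\nu}/\log^2(2+\beta_0^{-p})\bigr)$, so the citation does not cover that case either; note the appendix uses $\nu=4$, $p=2$.

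Two of your proposed patches do not work and should be dropped:
\begin{itemize}
\item The comparison $\beta_0^{2p}\ge c\,\beta_0^{\nu}$ holds only when $2p\le\nu$, which is exactly where $n\beta_0^{2p}$ is not the operative rate.
\item Embedding $\mathcal Z$ into $\mathbb R^{\nu'}$ with $\nu'>2p$ yields $\exp(-C_bn\beta_0^{\nu'})$. For $\beta_0<1$ this is weaker than the claim, so it proves the lemma with $\nu'$ in place of $\nu$, not as stated.
\end{itemize}
The right conclusion is the one you end with. The lemma holds as stated for $p<\nu/2$. Otherwise it holds only with the dimension-appropriate exponent ($2p$, with a logarithmic loss at $p=\nu/2$). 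This is at odds with the range $1\le p\le\max\{2,\nu/2\}$ permitted by Condition~\ref{cond:tr-sensitive}.
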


\begin{lemma}[In-Sample Performative Shift Bound]\label{lemma-in-sample-shift}
        Assume that at most $m$ units (in the sample of size $n$) change in response to predictions at each iteration $t$.%
\footnote{Formally, denote by $m_t = n \sup_A \left| \widehat{d}_t(A) - \widehat{d}_{t-1}(A) \right|$ the number of units changing at iteration~$t \ge 1$ (where the supremum is over all events $A$). Then, $m = \max \{m_1, \dots, m_T\}$.}     
    If Conditions~\ref{cond:loss-lipschitz-convex}--\ref{ass-cont-diff-feat-param} hold, we have
    \[
    W_p(\widehat{d}_0, \widehat{d}_{T}) \leq  \frac{[\varepsilon(1+L_a)]^{T} - 1}{\varepsilon(1+L_a) - 1} \, \left(\frac{m }{n}\right)^{\frac{1}{p}} \mathscr D_{\mathcal{Z}}, 
    \]
    pointwise in $\widehat{d}_0$ (i.e., for any fixed $\widehat{d}_0$).
\footnote{With the usual convention for the geometric fraction $\frac{[\varepsilon(1+L_a)]^{T} - 1}{\varepsilon(1+L_a) - 1}  = T$ if $\varepsilon(1+L_a) =1$.}
\end{lemma}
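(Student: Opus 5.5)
The plan is to combine a one-step coupling bound with a recursion driven by Condition~\ref{cond:tr-sensitive} and the Lipschitz continuity of $G$, and then telescope with the triangle inequality for $W_p$. Write $\delta_t := W_p(\widehat d_{t-1}, \widehat d_t)$ for $t \ge 1$. The triangle inequality gives $W_p(\widehat d_0, \widehat d_T) \le \sum_{t=1}^T \delta_t$, so it suffices to show that
\[
\delta_t \le [\varepsilon(1+L_a)]^{t-1} \left(\frac{m}{n}\right)^{\frac1p} \mathscr D_{\mathcal Z}.
\]
Summing this geometric sequence gives exactly the fraction in the statement. When $\varepsilon(1+L_a)=1$ every term equals $(m/n)^{1/p}\mathscr D_{\mathcal Z}$ and the sum is $T$ times that, matching the footnote's convention. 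Everything is deterministic given $\widehat d_0$, which yields the "pointwise in $\widehat d_0$" claim.

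\emph{Base step.} To bound $\delta_1$ I use an explicit coupling rather than sensitivity. By the footnote's definition of $m_1 \le m$, the empirical measures $\widehat d_0$ and $\widehat d_1$ agree on all but at most $m$ atoms of mass $1/n$. Concretely, both can be written as $\frac1n\sum_i \delta_{Z_i}$ and $\frac1n\sum_i\delta_{Z_i'}$ with $Z_i = Z_i'$ for all but at most $m$ indices, since total variation $m/n$ between uniform $n$-point measures means at most $m$ atoms are relocated. I couple $\widehat d_0$ and $\widehat d_1$ by pairing $Z_i$ with $Z_i'$. Unchanged points cost $0$, and each moved point costs at most $\mathscr D_{\mathcal Z}^p$ because $\mathcal Z$ is compact with diameter $\mathscr D_{\mathcal Z}$. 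Hence $\delta_1^p \le \frac{m}{n}\mathscr D_{\mathcal Z}^p$, i.e.\ $\delta_1 \le (m/n)^{1/p}\mathscr D_{\mathcal Z}$.

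\emph{Recursive step.} For $t\ge 2$, I use $\widehat d_t = \mathrm{Tr}(\widehat d_{t-1},\widehat\theta_t)$ and $\widehat d_{t-1} = \mathrm{Tr}(\widehat d_{t-2},\widehat\theta_{t-1})$ together with Condition~\ref{cond:tr-sensitive}:
\[
\delta_t \le \varepsilon\,\delta_{t-1} + \varepsilon\,\|\widehat\theta_t - \widehat\theta_{t-1}\|_2 .
\]
Since $\widehat\theta_t = G(\widehat d_{t-1})$ and $\widehat\theta_{t-1} = G(\widehat d_{t-2})$, the $L_a$-Lipschitz property of $G$ gives $\|\widehat\theta_t-\widehat\theta_{t-1}\|_2 \le L_a\,\delta_{t-1}$. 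This property is implied by Conditions~\ref{cond:loss-lipschitz-convex} and~\ref{ass-cont-diff-feat-param}, as stated before the lemma and following Lemma~2 of \citet{brown2022performative}. Combining, $\delta_t \le \varepsilon(1+L_a)\delta_{t-1}$, and induction from the base step yields the claimed geometric bound on $\delta_t$.

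\emph{Main obstacle.} The delicate point is the Lipschitz property of $G$ with respect to $W_p$. The standard argument (Brown et al.) gives $\|G(d)-G(d')\|_2 \le \frac{\kappa}{\gamma}W_1(d,d')$. It uses strong convexity to compare first-order optimality conditions, together with Kantorovich--Rubinstein applied to the $\kappa$-Lipschitz-in-$z$ map $z\mapsto\nabla_\theta\ell(z,\theta)$, componentwise or via a unit-vector projection. I would then use $W_1\le W_p$ for $p\ge1$, by Jensen/Hölder, to get $L_a=\kappa/\gamma$ in $W_p$. A second point needing care is that the RERM indexing in Definition~\ref{def:RERM} matches the pairing of $\widehat\theta_t$ with $\widehat d_{t-1}$ used above. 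A third is that the base coupling is legitimate: equal total variation $m/n$ between two uniform $n$-atom measures must translate into at most $m$ relocated atoms. I would verify this by matching common atoms with multiplicity.
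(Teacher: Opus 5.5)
Your proposal is correct and follows essentially the same route as the paper: it telescopes $W_p(\widehat d_0,\widehat d_T)$ via the triangle inequality, derives $W_p(\widehat d_t,\widehat d_{t-1})\le \varepsilon(1+L_a)\,W_p(\widehat d_{t-1},\widehat d_{t-2})$ from joint sensitivity together with the $L_a=\kappa/\gamma$ Lipschitzness of $G$, and bounds the first step by $(m/n)^{1/p}\mathscr D_{\mathcal Z}$ using the diameter of $\mathcal Z$. Your explicit coupling for the base step and your $W_1\le W_p$ reduction in the Lipschitz argument for $G$ spell out details that the paper only asserts.
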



Lemma~\ref{lemma:wasserstein-conv} is an application of Theorem 2, Case 1 in \citet{fournier2015rate} with $\mathcal{E}_{\alpha, \gamma}(\mu) < \infty $ and  $\alpha = \nu$. It implies $\widehat{d}_0$ is arbitrarily close to $d_0$ in Wasserstein space as $n \rightarrow \infty$ with high probability.
Proofs of all our results, including Lemma~\ref{lemma-in-sample-shift}, can be found in Appendix~\ref{app:proofs}.  

\RQref{rq:2}--\RQref{rq:4} involve performative shifts of the true law $d_0$, which implies we might have to evaluate $f_\theta$ outside the support of $d_0$. This is why we choose covering numbers \citep{kolmogorov1959varepsilon,talagrand2014upper} over popular Rademacher or Gaussian complexities \citep{bartlett2002rademacher} as a complexity measure to quantify the richness of the hypothesis class $\mathcal{F}$.

\begin{definition}[Covering~Number~Entropy Integrals]\label{def:covering-n}
Let $\|\cdot\|$ 
denote a norm on $\mathcal F$, such as the uniform norm \[\left\|f\right\|_{L_\infty(d)}=\operatorname{\textit{d}-ess\,sup}_{x \in X} \|f(x)\|_2,\] 
or the $L_2$ norm \[\|f\|_{L_2(d)} = \sqrt{\mathbb E_{X \sim d} [\|f(X)\|_2^2]}\] with respect to $d\in \Delta$. 
%
For $\epsilon>0$, define
the covering number $\mathcal{N}(\mathcal{F},\|\cdot\|, \epsilon)$ 
as the minimal $N$ such that there exists 
$\theta_1, \ldots, \theta_N \in \Theta$ 
satisfying 
$\min _{1 \leq k \leq N}\left\|f_\theta-f_{\theta_k}\right\| \leq \epsilon$ for all $\theta \in \Theta$. 
The covering number entropy integrals for the uniform and $L_2$ norm are 
\begin{align*}
     \mathfrak{C}_{\infty(d)}(\mathcal{F}) &:=\int_{0}^{\infty} \sqrt{\log \mathcal{N}\left(\mathcal{F},\|\cdot\|_{L_\infty(d)}, \varepsilon\right)} \mathrm{d} \varepsilon, \\
    \mathfrak{C}_{L_2}(\mathcal{F}) &:= \sup_{d \in \Delta} \int_{0}^{\infty} \sqrt{\log \mathcal{N}\left(\mathcal{F},\|\cdot\|_{L_2(d)}, \varepsilon\right)} \mathrm{d} \varepsilon.
\end{align*}
\end{definition}

\subsection{\RQref{rq:1}: Generalization Under Sample Performativity}

We are now ready to answer \RQref{rq:1} with the following result.

\begin{theorem}[Excess Risk Bound,~\RQref{rq:1}]\label{thm:exc-risk}
        The excess risk 
    $
        \mathscr R(d_0,\widehat \theta_{T}) - \inf _{\theta \in \Theta} \mathscr R(d_0, \theta)    
    $ of a model $\widehat \theta_{T}$ (re)trained on performative samples $\widehat d_0, \dots, \widehat d_{T-1}$, in which at most $m \leq n$ units change in response to predictions, is upper bounded by
    \begin{align*}
    &L_\ell \left( \frac{\log (C_a / \delta)}{C_b n} \right)^{\frac{1}{\nu}}
    + L_\ell L_a \frac{[\varepsilon(1+L_a)]^{T-1}-1}{\varepsilon(1+L_a)-1}
    \left( \frac{m}{n} \right)^{\frac{1}{p}} \mathscr{D}_{\mathcal{Z}} \\
    &\quad + \frac{2F L_\ell}{\sqrt{n}} \left( 12 \mathfrak{C}_{L_2}(\mathcal{F}) +
    \sqrt{2 \ln(1/\delta)} \right),
    \end{align*}
    for any $T \in \mathbb N$, with probability over $d_0$ of at least $1 - 2\delta $, under Conditions~\ref{cond:loss-lipschitz-convex}--\ref{ass-cont-diff-feat-param}.%

\end{theorem}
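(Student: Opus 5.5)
The plan is to split the excess risk into three pieces that match the three terms of the bound. Write $\theta^\ast := G(d_0)$, so that $\inf_{\theta}\mathscr R(d_0,\theta) = \mathscr R(d_0,\theta^\ast)$, and let $\widehat\theta_1 = G(\widehat d_0)$ be the plain ERM on the unshifted sample. Then
\begin{align*}
\mathscr R(d_0,\widehat\theta_T) - \mathscr R(d_0,\theta^\ast)
&= \bigl[\mathscr R(d_0,\widehat\theta_T) - \mathscr R(d_0,\widehat\theta_1)\bigr]
+ \bigl[\mathscr R(d_0,\widehat\theta_1) - \mathscr R(d_0,\theta^\ast)\bigr].
\end{align*}
The first bracket measures the drift caused by sample performativity. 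The second is a classical excess risk of ERM on an i.i.d.\ sample.

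For the first bracket, I would use $\widehat\theta_T = G(\widehat d_{T-1})$ and $\widehat\theta_1 = G(\widehat d_0)$ together with the $L_a$-Lipschitz property of $G$ (noted after Condition~\ref{ass-cont-diff-feat-param}). This gives $\|\widehat\theta_T - \widehat\theta_1\|_2 \le L_a W_p(\widehat d_{T-1},\widehat d_0)$. Lemma~\ref{lemma-in-sample-shift} applied with $T-1$ in place of $T$ then bounds $W_p(\widehat d_0,\widehat d_{T-1})$ by the geometric factor with exponent $T-1$, times $(m/n)^{1/p}\mathscr D_{\mathcal Z}$. This holds pointwise, so it costs no probability.

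It remains to turn a parameter distance into a risk difference under $d_0$ with constant $L_\ell$. I expect this to be the main obstacle, since Condition~\ref{ass-cont-diff-feat-param} gives Lipschitzness of $\ell$ in $z$, not in $\theta$. There are two options. The first is to route through $f_\theta$: write the loss as a function of $(y, f_\theta(x))$, with $L_\ell$ acting as the Lipschitz constant in the prediction argument, and absorb the map $\theta\mapsto f_\theta$ into the convention underlying $L_a$. The second is to compare $\mathscr R(d_0,G(\widehat d_{T-1}))$ and $\mathscr R(d_0,G(\widehat d_0))$ using the Kantorovich--Rubinstein duality on the loss at a fixed parameter. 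I would try the first option, as the form of the stated term $L_\ell L_a(\cdot)$ suggests it.

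For the second bracket, I would use the standard decomposition
\[
\mathscr R(d_0,\widehat\theta_1) - \mathscr R(d_0,\theta^\ast) \le \bigl[\mathscr R(d_0,\widehat\theta_1) - \mathscr R(\widehat d_0,\widehat\theta_1)\bigr] + \bigl[\mathscr R(\widehat d_0,\theta^\ast) - \mathscr R(d_0,\theta^\ast)\bigr].
\]
This holds because $\mathscr R(\widehat d_0,\widehat\theta_1)\le \mathscr R(\widehat d_0,\theta^\ast)$. One term is handled by Kantorovich--Rubinstein: $\ell(\cdot,\theta)$ is $L_\ell$-Lipschitz in $z$, so the difference is at most $L_\ell W_1(\widehat d_0,d_0) \le L_\ell W_p(\widehat d_0,d_0)$. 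Setting $\beta_0 = (\log(C_a/\delta)/(C_b n))^{1/\nu}$ in Lemma~\ref{lemma-wasserstein-iid}, this is at most $L_\ell\beta_0$ with probability $1-\delta$, which is the first term of the bound. The other term is bounded by a uniform deviation $\sup_\theta |\mathscr R(\widehat d_0,\theta) - \mathscr R(d_0,\theta)|$. I would control this with McDiarmid's inequality, using a loss range of order $2FL_\ell$, which produces the $\sqrt{2\ln(1/\delta)}$ part. Symmetrization, the contraction lemma (Lipschitz loss composed with $f_\theta$) and Dudley's entropy integral then give $12\,\mathfrak C_{L_2}(\mathcal F)/\sqrt n$. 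Taking the supremum over $d$ in $\mathfrak C_{L_2}$ removes the dependence on the empirical measure.

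A union bound over the two probabilistic events yields overall probability at least $1-2\delta$. Summing the three contributions gives the bound stated in Theorem~\ref{thm:exc-risk}.
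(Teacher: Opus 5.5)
Your proposal is essentially the paper's proof. It uses the same three ingredients: Kantorovich--Rubinstein with Lemma~\ref{lemma-wasserstein-iid} for the $n^{-1/\nu}$ term, Lipschitzness of $G$ with Lemma~\ref{lemma-in-sample-shift} for the drift, and symmetrization, McDiarmid, contraction and Dudley for the ERM term, combined by a union bound over the same two random events. The only difference is the order of the telescoping. The paper goes $\mathscr R(d_0,\widehat\theta_T)\to\mathscr R(\widehat d_0,\widehat\theta_T)\to\mathscr R(\widehat d_0,\widehat\theta_0)\to\inf_\theta\mathscr R(d_0,\theta)$, where its $\widehat\theta_0$ is your $\widehat\theta_1$. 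You instead measure the drift under $d_0$ and apply Kantorovich--Rubinstein at the ERM. This costs nothing, because the Kantorovich--Rubinstein bound is uniform in $\theta$ and the parameter-Lipschitz bound is uniform in $z$. Your indexing $\widehat\theta_T=G(\widehat d_{T-1})$ is the one that actually gives the exponent $T-1$ in the statement; the paper's written proof uses $G(\widehat d_T)$ and obtains exponent $T$.

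The step you flagged is not resolved in the paper either. The paper simply invokes ``Lipschitzness of the loss in $\theta$'' with constant $L_\ell$, so it implicitly reads $L_\ell$ as a Lipschitz constant in $\theta$, on top of Condition~\ref{ass-cont-diff-feat-param}. It also treats $L_\ell$ as a Lipschitz constant in the prediction argument, for the contraction step and the range $2FL_\ell$, which you use as well.

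Neither of your two options supplies this.
\begin{itemize}
\item Kantorovich--Rubinstein compares two measures against one Lipschitz integrand. Here the measure $d_0$ is fixed and the integrand changes from $\ell(\cdot,\widehat\theta_1)$ to $\ell(\cdot,\widehat\theta_T)$, so it gives nothing.
\item Routing through $f_\theta$ gives $L_\ell\sup_x\|f_{\widehat\theta_T}(x)-f_{\widehat\theta_1}(x)\|_2$, which still needs a Lipschitz constant for $\theta\mapsto f_\theta$. That constant cannot be absorbed into $L_a$. $L_a$ is the Lipschitz constant of $G$ in parameter norm, and it also enters the factor $\varepsilon(1+L_a)$ through Condition~\ref{cond:tr-sensitive}.
\end{itemize}

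The honest fix is to assume explicitly that $\ell(z,\cdot)$ is $L_\ell$-Lipschitz on $\Theta$ for every $z$. Without that assumption, Condition~\ref{ass-cont-diff-feat-param} and compactness of $\mathcal Z\times\Theta$ still give $\sup_{z,\theta}\|\nabla_\theta\ell(z,\theta)\|_2<\infty$. Hence $\ell$ is Lipschitz in $\theta$ on the convex set $\Theta$, but with that gradient bound in place of $L_\ell$ in the second term.
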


Our setting's generality requires a generic proof strategy: We bound $\mathscr R(d, \widehat{\theta}_T) - \mathscr R(\widehat{d}_0, \widehat{\theta}_T)$, $\mathscr R(\widehat{d}_0, \widehat{\theta}_T) - \mathscr R(\widehat{d}_0, \widehat{\theta}_0) $ and $\mathscr R(\widehat{d}_0, \widehat{\theta}_0) - \inf _{\theta \in \Theta} \mathscr R(d, \theta)$ with high probability over $d_0$ and then combine via the union bound (see Appendix~\ref{proof:gen-bound-rq1} for details and the complete proof).

We can already make three interesting observations. First, for fixed $T$, the bound in Theorem~\ref{thm:exc-risk} goes to zero as $n\to\infty$ if and only if $m=o(n)$. Second, the size of the second term (\enquote{performative term}) is governed by $\varepsilon(1+L_a)$: it grows exponentially in the number of
performative sample updates $T$ if $\varepsilon(1+L_a)>1$, linearly in $T$ if $\varepsilon(1+L_a)=1$, and is bounded otherwise. Thus, $\varepsilon(1+L_a)$ controls how much repeated sample performativity amplifies the finite-sample bound. Third, the bound generally grows in $m$, hinting at a fundamental trade-off between manipulating a sample and learning from it. For instance, if the job center in Example~(B) wants to help more unemployed people (i.e., increase~$m$) among their clients~$n$, this comes at the cost of generalizing their model to new, unseen clients. Section~\ref{sec:job-trainings} will dive further into this trade-off. Intuitively, this trade-off also holds beyond the setting of Theorem~\ref{thm:exc-risk}, because changing more units of an i.i.d.\ sample in an unknown way cannot guarantee improved generalization. Besides, our excess risk bound directly implies a data-dependent bound on the generalization error (see proof in Appendix~\ref{ass:proof-cor}):

\begin{corollary}\label{cor:gen-err-bound}
        In the setting of Theorem~\ref{thm:exc-risk} (i.e., under Conditions~\ref{cond:loss-lipschitz-convex}--\ref{ass-cont-diff-feat-param}) $\mathscr R(d_0, \widehat{\theta}_T) -  \mathscr R(\widehat{d}_{T-1}, \widehat{\theta}_T) $ is upper bounded by
    \[  L_\ell \left(\left(\frac{\log \left(C_a / \delta\right)}{C_{b} n}\right)^{\frac{1}{\nu}} + \frac{\varepsilon^{T}(1+L_a)^{T-1} - 1}{\varepsilon(1+L_a) - 1} \, \left(\frac{m }{n}\right)^{\frac{1}{p}} \mathscr D_{\mathcal{Z}}\right)   \] with probability over $d_0$ of at least $1-2\delta$.%

\end{corollary}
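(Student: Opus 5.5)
We split the gap through the initial empirical distribution:
\[
\mathscr R(d_0, \widehat{\theta}_T) - \mathscr R(\widehat{d}_{T-1}, \widehat{\theta}_T) = \bigl[\mathscr R(d_0, \widehat{\theta}_T) - \mathscr R(\widehat{d}_0, \widehat{\theta}_T)\bigr] + \bigl[\mathscr R(\widehat{d}_0, \widehat{\theta}_T) - \mathscr R(\widehat{d}_{T-1}, \widehat{\theta}_T)\bigr].
\]
Both brackets have the same model $\widehat\theta_T$ and differ only in the distribution. By Condition~\ref{ass-cont-diff-feat-param}, $z \mapsto \ell(z,\widehat\theta_T)$ is $L_\ell$-Lipschitz. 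Kantorovich--Rubinstein duality therefore bounds each bracket by $L_\ell W_1$ of the two distributions. Since $W_1 \le W_p$ for $p \ge 1$ (Jensen/Hölder), each bracket is at most $L_\ell W_p$. The only subtlety is that $\widehat\theta_T$ depends on the sample. This causes no difficulty, because the Lipschitz bound holds uniformly over all $\theta \in \Theta$, so duality applies pathwise.

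For the first bracket we invoke Lemma~\ref{lemma-wasserstein-iid} with $\beta_0 = \left(\log(C_a/\delta)/(C_b n)\right)^{1/\nu}$. This is the value at which $C_a \exp(-C_b n \beta_0^\nu) = \delta$. Hence with probability at least $1-\delta$ we get $W_p(\widehat d_0, d_0) \le \beta_0$, which produces the first term of the bound.

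For the second bracket we use Lemma~\ref{lemma-in-sample-shift}, which holds pointwise in $\widehat d_0$. The relevant horizon is the shift from $\widehat d_0$ to $\widehat d_{T-1}$, i.e.\ $T-1$ performative updates. The geometric factor in the stated bound, $(\varepsilon^T(1+L_a)^{T-1}-1)/(\varepsilon(1+L_a)-1)$, has exponents that do not match a plain $[\varepsilon(1+L_a)]^{T-1}$. So we would re-run the recursion of Lemma~\ref{lemma-in-sample-shift} rather than quote it verbatim. Writing $\rho_t := W_p(\widehat d_t, \widehat d_{t-1})$, the change of at most $m$ units at each step gives a mass-moving coupling. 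This yields the base estimate $\rho_t \le (m/n)^{1/p}\mathscr D_{\mathcal Z}$, which is the mechanism inside Lemma~\ref{lemma-in-sample-shift}. Condition~\ref{cond:tr-sensitive} together with the $L_a$-Lipschitzness of $G$ propagates consecutive increments with factor $\varepsilon(1+L_a)$. The triangle inequality then sums the increments into a geometric series. The main obstacle is exactly this bookkeeping: checking how the $\varepsilon$ from the first transition and the $(1+L_a)$ factors from parameter updates enter, so that the stated constant, or a quantity it dominates, appears. If the exponents do not match exactly, we would fall back on citing Lemma~\ref{lemma-in-sample-shift} with horizon $T-1$ and verify that its constant is bounded by the stated one.

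Finally, only one probabilistic event is used (from Lemma~\ref{lemma-wasserstein-iid}), so the overall probability is at least $1-\delta \ge 1-2\delta$. The factor $2\delta$ is kept for consistency with Theorem~\ref{thm:exc-risk}, whose setting the corollary inherits. Adding the two bracket bounds gives the claim. Strong convexity (Condition~\ref{cond:loss-lipschitz-convex}) enters only through the existence and Lipschitzness of $G$.
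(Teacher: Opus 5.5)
Your route is the paper's proof in substance. The paper applies Kantorovich--Rubinstein once to get $L_\ell W_p(d_0,\widehat d_{T-1})$ and then splits $W_p(d_0,\widehat d_{T-1})\le W_p(d_0,\widehat d_0)+W_p(\widehat d_0,\widehat d_{T-1})$. You split at the level of risks instead and apply the duality twice, which is equivalent. Your probabilistic step (Lemma~\ref{lemma-wasserstein-iid} with $\beta_0$ tuned to $\delta$, a single event) is also fine. The gap is the constant step you flag yourself, and your fallback does not work there.

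Put $a:=\varepsilon(1+L_a)$. Lemma~\ref{lemma-in-sample-shift} at horizon $T-1$ gives $\frac{a^{T-1}-1}{a-1}$, while the statement prints $\frac{\varepsilon a^{T-1}-1}{a-1}$. The printed constant minus the lemma's is $\frac{(\varepsilon-1)a^{T-1}}{a-1}$, which is negative whenever $\varepsilon<1<a$. At $a=1$ the printed fraction is not even defined. So ``verify that its constant is bounded by the stated one'' fails in that regime. Re-running the recursion cannot rescue it either. The base increment $W_p(\widehat d_1,\widehat d_0)\le (m/n)^{1/p}\mathscr D_{\mathcal Z}$ comes from counting moved units and picks up no factor $\varepsilon$, because Condition~\ref{cond:tr-sensitive} only compares two outputs of $\mathrm{Tr}$, never an output with its input.

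In that regime the printed inequality is in fact false. Take $z_0,z_1\in\mathcal Z$ with $\|z_0-z_1\|_2=\mathscr D_{\mathcal Z}$ and $d_0=\delta_{z_0}$, so $\widehat d_0=d_0$. Let $\mathrm{Tr}\equiv\delta_{z_1}$ be constant; it is $(\varepsilon,p)$-sensitive for every $\varepsilon>0$, so take $\varepsilon=0.6$. Let $\ell(z,\theta)=\tfrac12\|\theta\|_2^2+\langle u,z\rangle$ with $u=(z_0-z_1)/\mathscr D_{\mathcal Z}$. Then $\gamma=\kappa=L_\ell=1$, $L_a=1$ and $a=1.2$. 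For $T=2$ you get $m=m_1=n$, and the gap equals $\langle u,z_0-z_1\rangle=\mathscr D_{\mathcal Z}$. The printed bound is $\beta_0+\frac{0.72-1}{0.2}\mathscr D_{\mathcal Z}=\beta_0-1.4\,\mathscr D_{\mathcal Z}$, which is below the gap once $n$ is large.

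What your argument actually proves is the bound with $\frac{[\varepsilon(1+L_a)]^{T-1}-1}{\varepsilon(1+L_a)-1}$. The paper's proof does no better: it quotes Lemma~\ref{lemma-in-sample-shift} and only asserts that ``the exponent is $T-1$ rather than $T$''. This corrected bound implies the printed one only when $\varepsilon\ge 1$ or $\varepsilon(1+L_a)<1$. You should state the corollary with this constant, or restrict to those regimes, rather than promise to recover the printed one.

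Your own observation that $\rho_t\le (m/n)^{1/p}\mathscr D_{\mathcal Z}$ for every $t$ holds because each $m_t\le m$. It gives $W_p(\widehat d_0,\widehat d_{T-1})\le (T-1)(m/n)^{1/p}\mathscr D_{\mathcal Z}$ with no recursion at all, which beats the geometric factor when $a>1$. Even this cannot reach the printed constant, as the example shows.
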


\subsection{\RQref{rq:2}: Generalization Under Full Performativity}

In order to answer~\RQref{rq:2}, we bound the excess risk of learning under full (i.e, sample and population) performativity. Truth is elusive now: The learning target is no longer static, but changes from $d_0$ to $d_1, d_2,\dots$ in response to predictions. As a first step, we thus have to bound $W_{p}({d}_0, d_T)$ in addition to
$W_{p}(\widehat{d}_0, d_0)$ (Lemma~\ref{lemma-wasserstein-iid}) and $W_p(\widehat{d}_0, \widehat{d}_{T})$ (Lemma~\ref{lemma-in-sample-shift}).

\begin{lemma}[Performative Population Shift Bound]\label{lemma:pop-shift}
        Assume $s \in [0,1]$ is the share of units in $d_0$ reacting to predictions (the \enquote{performative response rate}). Then 
    \[   s < \frac{m}{n}+  q(\delta) \sqrt{\frac{m}{n^2}\left(1 -\frac{m}{n}\right)}, \]
    with probability $1-\delta$, where $q(\delta)$ is the $(1- \delta)$-quantile of the standard normal distribution. 

\end{lemma}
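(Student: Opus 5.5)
The plan is to read Lemma~\ref{lemma:pop-shift} as a one-sided upper confidence bound for a binomial proportion, built with the normal (Wald) approximation. I model reaction as a Bernoulli trait. Each unit of the population $d_0$ reacts to the deployed predictions with probability $s$, the performative response rate. Since $\widehat d_0 \overset{\mathrm{iid}}{\sim} d_0$, the reaction indicators $B_1,\dots,B_n$ of the sampled units are i.i.d.\ $\mathrm{Bernoulli}(s)$. The observed count $m=\sum_{i=1}^n B_i$, matching the $m$ of Lemma~\ref{lemma-in-sample-shift} at a single iteration, is then $\mathrm{Binomial}(n,s)$. The estimator is $\widehat s = m/n$, which is unbiased and has variance $s(1-s)/n$.

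First, I apply the central limit theorem, the de~Moivre--Laplace case, to obtain
\[
\frac{\widehat s - s}{\sqrt{s(1-s)/n}} \xrightarrow{D} N(0,1).
\]
Second, I replace the unknown variance by its plug-in estimate $\widehat s(1-\widehat s)/n$. By consistency of $\widehat s$ and Slutsky's lemma, the studentized statistic is still asymptotically standard normal. Third, I take the $(1-\delta)$-quantile $q(\delta)$ and use the one-sided event $\{ s - \widehat s < q(\delta)\sqrt{\widehat s(1-\widehat s)/n}\}$, which has probability (asymptotically) $1-\delta$. Rearranging and substituting $\widehat s = m/n$ gives
\[
\sqrt{\frac{\widehat s(1-\widehat s)}{n}} = \sqrt{\frac{m}{n^2}\left(1-\frac{m}{n}\right)},
\]
which is exactly the stated bound.

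The main obstacle is that the normal approximation is only asymptotic, so ``with probability $1-\delta$'' holds up to an $O(n^{-1/2})$ Berry--Esseen error. I would state this explicitly: the lemma is a large-$n$ Wald-type statement. If a non-asymptotic version is wanted, I would instead invoke Hoeffding's or Bernstein's inequality. That yields $s < m/n + \sqrt{\log(1/\delta)/(2n)}$, or a Bernstein version with the empirical variance (Maurer--Pontil), at the price of slightly different constants.

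A secondary issue is the modelling assumption that the sampled units react with the same rate as the population. This requires the reaction mechanism under the deployed $\widehat\theta_t$ to be identical in and out of sample. I would state it as part of defining $s$.
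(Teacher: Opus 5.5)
Your proposal matches the paper's proof. The paper simply treats $m/n$ as an estimator of the Bernoulli parameter $s$ and invokes Wald's method, i.e., the normal-approximation interval with plug-in variance, to get the one-sided bound. Your two caveats are both implicit in the paper's appeal to Wald's method and are worth stating as you do: the $1-\delta$ coverage holds only asymptotically, and the sampled units must be assumed to react at the same rate $s$ as the population.
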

This lemma follows directly from Wald's method \citep{brown2001interval} by treating the observed fraction $\nicefrac{m}{n}$ as an estimator of an unobserved Bernoulli parameter $s$. We can now bound the excess risk under full performativity.

\begin{theorem}[Performative Excess Risk Bound,~\RQref{rq:2}]\label{thm:perf-excess-risk}
Under Conditions~\ref{cond:loss-lipschitz-convex}--\ref{ass-cont-diff-feat-param}, the performative excess risk 
 $
\mathscr R(\mathrm{Tr}(d_0, \widehat \theta_{T}),\widehat \theta_{T}) - \inf _{\theta \in \Theta} \mathscr R(\mathrm{Tr}(d_0, \widehat \theta_{T}), \theta)   
  $ of $\widehat \theta_{T}$ is upper bounded by
$ 
A(m, n)
+ L_\ell \Big(
 C(n)
+ \frac{2F}{\sqrt{n}} \left( 12 \mathfrak{C}_{L_2}(\mathcal{F}) + \sqrt{2 \ln(1/\delta)} \right)
+ L_a K(T, m, n)  \Big),
$
with probability greater than $1- 4\delta $. The terms $A(m,n)$, $C(n)$ and $K(T,m,n)$ depend only on constants and $m$, $n$ and $T$. 
\end{theorem}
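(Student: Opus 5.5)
The plan follows the three-way decomposition used for Theorem~\ref{thm:exc-risk}, now with the evaluation distribution $d_1 := \mathrm{Tr}(d_0,\widehat\theta_T)$ in place of $d_0$. Write $\theta^\ast_1 := G(d_1)$, so the infimum in the statement equals $\mathscr R(d_1,\theta^\ast_1)$. Let $\widehat\theta_1 = G(\widehat d_0)$ be the plain ERM. We decompose
\begin{align*}
\mathscr R(d_1,\widehat\theta_T) - \mathscr R(d_1,\theta^\ast_1)
&= \big[\mathscr R(d_1,\widehat\theta_T) - \mathscr R(d_0,\widehat\theta_T)\big]
+ \big[\mathscr R(d_0,\widehat\theta_T) - \inf_{\theta}\mathscr R(d_0,\theta)\big] \\
&\quad + \big[\inf_{\theta}\mathscr R(d_0,\theta) - \mathscr R(d_1,\theta^\ast_1)\big].
\end{align*}
The middle bracket is exactly the quantity bounded in Theorem~\ref{thm:exc-risk}, with probability at least $1-2\delta$. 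That contributes the terms $L_\ell C(n)$ with $C(n) = (\log(C_a/\delta)/(C_b n))^{1/\nu}$, the entropy-integral term $\frac{2FL_\ell}{\sqrt n}(12\mathfrak C_{L_2}(\mathcal F)+\sqrt{2\ln(1/\delta)})$, and the performative-sample term $L_\ell L_a K(T,m,n)$. Here $K$ is the geometric factor times $(m/n)^{1/p}\mathscr D_{\mathcal Z}$ from Lemma~\ref{lemma-in-sample-shift}.

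The first and third brackets are population-shift terms. Both are controlled by Kantorovich--Rubinstein and the $L_\ell$-Lipschitzness of $\ell$ in $z$. Since $W_1\le W_p$,
\[
|\mathscr R(d_1,\theta)-\mathscr R(d_0,\theta)|\le L_\ell W_p(d_0,d_1)
\]
uniformly in $\theta$. The third bracket is then at most $\mathscr R(d_0,\theta_1^\ast)-\mathscr R(d_1,\theta_1^\ast)\le L_\ell W_p(d_0,d_1)$. Hence both brackets together contribute at most $2L_\ell W_p(d_0,d_1)$, and it remains to bound $W_p(d_0,\mathrm{Tr}(d_0,\widehat\theta_T))$ by something depending only on $m,n$; this will be $A(m,n)$.

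This step is the main obstacle, since Condition~\ref{cond:tr-sensitive} only compares two images of $\mathrm{Tr}$ and says nothing directly about the displacement $W_p(d,\mathrm{Tr}(d,\theta))$. Instead of using sensitivity, I will use the response-rate interpretation of Lemma~\ref{lemma:pop-shift}. If only a share $s$ of units of $d_0$ move, then $d_1$ and $d_0$ agree up to mass $s$, i.e.\ $\|d_1-d_0\|_{TV}\le s$. Couple the unchanged mass identically and the remaining mass arbitrarily. Every moved point travels at most $\mathscr D_{\mathcal Z}$, which gives $W_p(d_0,d_1)\le s^{1/p}\mathscr D_{\mathcal Z}$. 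This is the same argument that, I expect, underlies Lemma~\ref{lemma-in-sample-shift} with $m/n$ in place of $s$. Plugging in the upper confidence bound of Lemma~\ref{lemma:pop-shift}, which holds with probability $1-\delta$, gives
\[
A(m,n) = 2L_\ell\Big(\tfrac{m}{n}+q(\delta)\sqrt{\tfrac{m}{n^2}(1-\tfrac mn)}\Big)^{1/p}\mathscr D_{\mathcal Z},
\]
which depends only on $m$, $n$ and constants.

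Finally, a union bound over the failure events gives the stated level $1-4\delta$. These events are the two from Theorem~\ref{thm:exc-risk} (Fournier--Guillin concentration and the uniform-deviation/entropy bound), the Wald interval of Lemma~\ref{lemma:pop-shift}, and one more $\delta$ of slack. If needed, that last $\delta$ can go to a separate $W_p(\widehat d_0,d_0)$ event if the sample-to-population comparison is routed through $\widehat d_0$ rather than reused from Theorem~\ref{thm:exc-risk}. One point to check is that the Wald bound is applied to the rate of the shift actually induced by $\widehat\theta_T$, which depends on the sample. I would handle this by assuming, as the lemma implicitly does, that $s$ is a fixed population response rate bounding all deployments.
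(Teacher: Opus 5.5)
Your proof is correct and shares the paper's skeleton. The paper uses the same three-bracket split (Equation~\ref{eq:full-stack}) and takes the middle bracket from Theorem~\ref{thm:exc-risk}. It controls the population terms by Kantorovich--Rubinstein, the response-rate bound $W_p(d_0,d_1)\le s^{1/p}\mathscr D_{\mathcal Z}$ (asserted there as ``easy to see''; your total-variation coupling is the explicit argument), and the Wald bound $s<\bar s:=\frac{m}{n}+q(\delta)\sqrt{\frac{m}{n^2}(1-\frac{m}{n})}$ from Lemma~\ref{lemma:pop-shift}.

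The genuine difference is the third bracket. The paper splits $\inf_\theta\mathscr R(d_0,\theta)-\inf_\theta\mathscr R(d_1,\theta)$ at $\theta_{0,B}=G(d_0)$. It then bounds the leftover $\mathscr R(d_1,\theta_{0,B})-\mathscr R(d_1,\theta_{1,B})$, with $\theta_{1,B}=G(d_1)$ your $\theta_1^\ast$, through $\|\theta_{0,B}-\theta_{1,B}\|_2\le L_aW_p(d_0,d_1)$ and Lipschitzness of $\ell$ in $\theta$. That leftover is nonnegative by optimality of $\theta_{1,B}$, so it is pure slack, and the third bracket comes out as $(1+L_a)L_\ell\bar s^{1/p}\mathscr D_{\mathcal Z}$.

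Splitting at $\theta_1^\ast$, as you do, turns the leftover into $\mathscr R(d_0,\theta_{0,B})-\mathscr R(d_0,\theta_1^\ast)\le 0$. You therefore need neither Lipschitzness of $G$ nor of $\ell$ in $\theta$ for this term. You get $A(m,n)=2L_\ell\bar s^{1/p}\mathscr D_{\mathcal Z}$, whereas the paper's derivation gives $(2+L_a)L_\ell\bar s^{1/p}\mathscr D_{\mathcal Z}$. Its displayed constant $1+L_\ell+L_aL_\ell$ arises only because the first bracket loses its factor $L_\ell$ in the final display.

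You are also right that both population terms sit on a single Wald event, so your bound actually holds with probability $1-3\delta$; the paper charges that event twice to reach $1-4\delta$. The parameter-movement device is what the paper reuses for Theorem~\ref{thm:cumul-perf-excess-risk}, but for this statement it only adds slack. Finally, your caveat that $s$ must be a response rate fixed independently of the data-dependent $\widehat\theta_T$ is a real modeling assumption, which the paper leaves implicit.
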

%
The proof in Appendix~\ref{proof:perf-excess-risk} provides explicit expressions for $A(m,n)$, $C(n)$ and $K(T,m,n)$, which show that the bounds in Theorems~\ref{thm:exc-risk} and~\ref{thm:perf-excess-risk} grow at similar rates. 
Crucially, these theorems point to two ways of improving generalization (guarantees) under performativity. The first one is straight-forward: I)~Without knowledge of $\mathrm{Tr}$, retraining on $\widehat d_0, \dots, \widehat d_T$ results in models that generalize worse than initial $\widehat \theta_0$, because the excess risk bounds grow in $T$, both via $m = \max \{m_1, \dots, m_T\}$ (see footnote to Lemma~\ref{lemma-in-sample-shift}) and via the geometric factor
$
\frac{[\varepsilon(1+L_a)]^T-1}{\varepsilon(1+L_a)-1},
$
which grows exponentially in \(T\) when \(\varepsilon(1+L_a)>1\). 
The second one is less intuitive: II)~Retraining on $\widehat d_0, \dots, \widehat d_T$ can still improve generalization bounds. While it worsens $\widehat \theta_t$'s generalization capabilities, it allows for estimating $\mathrm{Tr}$ from observed $\widehat d_1, \dots, \widehat d_T$ more efficiently than Lemma~\ref{lemma:pop-shift}. This tightens the bound in Theorem~\ref{thm:perf-excess-risk}. Lemma~\ref{lemma:pop-shift} conservatively relies on $m$ from that iteration, in which the most units react. But we have more information, as we observe the number $m_t$ of units changing at each $t$ (see footnote to Lemma~\ref{lemma-in-sample-shift}).

\begin{corollary}[Improving Bounds Under Performativity]\label{cor:improve}
    
    \textbf{I)} $\widehat \theta_0$ yields the tightest performative excess risk bound among $\{\widehat \theta_t\}_{t=0}^T$.\\    
    \textbf{II)} It holds with probability $1-\delta$ that \[s < \frac{M_t}{nT}+q(\delta) \sqrt{\frac{M_T}{n^2T^2}\left(1 - \frac{M_T}{nT}\right)},\] where $M_T= \sum_{t=1}^T m_t$, 
    leading to an as-tight or tighter performative excess risk bound as the one in Theorem~\ref{thm:perf-excess-risk}. 

\end{corollary}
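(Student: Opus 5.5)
For part I), I will examine how the bound of Theorem~\ref{thm:perf-excess-risk}, evaluated at $\widehat\theta_t$, depends on $t$. The explicit terms $A(m,n)$, $C(n)$ and the complexity term do not depend on $t$. The only $t$-dependence enters through $K(t,m,n)$. This term arises from the in-sample shift bound of Lemma~\ref{lemma-in-sample-shift}, applied with horizon $t$, together with the Lipschitz constant $L_a$ of $G$.

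So the first step is to verify that $K(t,m,n)$ is nondecreasing in $t$, and zero or minimal at $t=0$. The geometric factor $\frac{[\varepsilon(1+L_a)]^{t}-1}{\varepsilon(1+L_a)-1}=\sum_{k=0}^{t-1}[\varepsilon(1+L_a)]^k$ is a sum of nonnegative terms, so it increases in $t$ for every value of $\varepsilon(1+L_a)>0$, including the case equal to $1$, where it is $t$. It vanishes at $t=0$ as an empty sum. In addition, $m=\max\{m_1,\dots,m_t\}$ is nondecreasing in $t$, and $(m/n)^{1/p}$ and the Wald term of Lemma~\ref{lemma:pop-shift} are monotone in $m$ as long as $m/n\le 1/2$. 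I will either restrict to that regime or note that the $A$ term uses the $t$-independent $s$-bound.

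For $\widehat\theta_0=G(\widehat d_0)$ the in-sample shift term drops out entirely, since $W_p(\widehat d_0,\widehat d_0)=0$. Hence its bound is the pointwise minimum over $\{\widehat\theta_t\}_{t=0}^T$.

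For part II), I will pool the observations. Under the model used in Lemma~\ref{lemma:pop-shift}, each of the $nT$ unit–round pairs reacts independently with probability $s$. Then $M_T=\sum_t m_t$ is an aggregate count over $nT$ trials, and $\widehat s=M_T/(nT)$ is the natural estimator. Applying Wald's normal-approximation interval with sample size $nT$ gives the stated upper confidence bound. Its standard deviation is $\sqrt{\widehat s(1-\widehat s)/(nT)}$, which rewrites as $\sqrt{\frac{M_T}{n^2T^2}(1-\frac{M_T}{nT})}$ after scaling; I will check that this matches the displayed normalization.

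To compare with Lemma~\ref{lemma:pop-shift}, I will use two facts. First, $M_T/(nT)\le \max_t m_t/n=m/n$, since an average is at most a maximum. Second, the variance term is divided by $T$ relative to the single-round one, and $x(1-x)$ is monotone for $x\le 1/2$. Together these show the new bound on $s$ is at most the old one. Since $s$ enters $A(m,n)$ and the population-shift term monotonically (a larger share means a larger Wasserstein shift via the $(\cdot)^{1/p}\mathscr D_{\mathcal Z}$ coupling argument), substituting the tighter bound yields an as-tight or tighter excess risk bound.

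I expect the main obstacle to be justifying the independence of the $nT$ trials. The same $n$ units are reused across rounds, and the $m_t$ depend on $\widehat\theta_t$. I would state this as a modeling assumption paralleling the Bernoulli treatment in Lemma~\ref{lemma:pop-shift}, namely that reactions are i.i.d.\ Bernoulli$(s)$ given the deployed model. A secondary issue is the regime $m/n>1/2$ in the monotonicity of $x(1-x)$, which I would handle by assuming small response rates.
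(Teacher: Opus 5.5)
Your proposal is correct and follows essentially the same route as the paper. For I), the paper also locates the only $T$-dependence in $K(T,m,n)=(m/n)^{1/p}\mathscr D_{\mathcal Z}\sum_{j=0}^{T-1}[\varepsilon(1+L_a)]^j$, which is nondecreasing in $T$ and zero at $T=0$; for II), it likewise pools the counts into $M_T/(nT)$ with effective sample size $nT$, applies Wald's method, and compares the two bounds via $M_T\le mT$ and the smaller variance term. Your extra caveats are assumptions the paper's argument uses without stating them: reactions are i.i.d.\ Bernoulli$(s)$ across rounds, and $m/n\le 1/2$ so that the Wald upper bound is monotone in the response rate. Making them explicit sharpens the same proof rather than changing it.
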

Corollary~\ref{cor:improve} constitutes a remarkably simple and readily applicable insight: If predictions exhibit performative effects on both the sample and the population, use I) the initial fit $\widehat \theta_0$ on out-of-sample data and II) estimate the performative shift it will cause from the sample shifts you have observed, in order to obtain the tightest generalization guarantees.

These bounds, however, will still exhibit some slack due to their generality. This naturally begs the question whether we can prove tighter bounds under slightly stricter assumptions than Conditions~\ref{cond:loss-lipschitz-convex}--\ref{ass-cont-diff-feat-param}. The following two results give affirmative answers. While Theorem~\ref{thm:gen-gap-stronger-ass} requires the strong Condition~\ref{cond:regularity-models} of all functions in $\mathcal{F}$ to be Lipschitz, Theorem~\ref{thm:gen-gap-stronger-ass-2} makes do with the substantially weaker Condition~\ref{cond:weaker-regularity}. Both Theorems bound the generalization gap, but a bound on the respective excess risk directly follows from the proofs of Theorems~\ref{thm:exc-risk} and~\ref{thm:perf-excess-risk} (see Appendix~\ref{app:proofs}). 
\begin{condition}[Regularity of Models]\label{cond:regularity-models}
    All functions $f_\theta$ in $\mathcal{F}$ are upper semi-continuous and $L_f$-Lipschitz.   
\end{condition}

\begin{theorem}[Generalization Gap I,~\RQref{rq:2}]\label{thm:gen-gap-stronger-ass}
        Under Conditions~\ref{cond:loss-lipschitz-convex}--\ref{ass-cont-diff-feat-param} and~\ref{cond:regularity-models}, the performative generalization gap $    \mathscr R(\mathrm{Tr}(d_0, \widehat \theta_{T}), \widehat \theta_T) - \mathscr R(\mathrm{Tr}(\widehat d_0, \widehat \theta_{T}), \widehat \theta_T)$ is upper bounded with probability $1-3\delta$ over $d_0$ by
    \[
        \frac{48}{\sqrt{n}}\left(\mathfrak{C}_{\infty(d_0)}(\mathcal{F}) +  \frac{L_\ell L_f \mathscr D_{\mathcal{Z}}}{R^{p-1}} ^{p}\right) +  F \sqrt{\frac{2 \log (2 / \delta)}{n}} + 2 L_\ell R,
    \]%
    where $R$ is the maximum of the bounds in Lemmas~\ref{lemma-in-sample-shift} and~\ref{lemma:pop-shift}. 
\end{theorem}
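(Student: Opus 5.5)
The plan is to recast the performative generalization gap as a local distributionally robust (min-max) risk on the population side and a local distributionally favorable (min-min) risk on the sample side, both over Wasserstein balls of radius $R$. Then I would apply the dual characterization from empirical process theory \citep{gao2023distributionally,jiang2025tractability}, which splits such a worst-case risk into an empirical term plus a Lipschitz penalty.

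\textbf{Step 1 (localization).} By Lemma~\ref{lemma-in-sample-shift}, Lemma~\ref{lemma:pop-shift} and Condition~\ref{cond:tr-sensitive}, with probability at least $1-\delta$ both $W_p(\mathrm{Tr}(d_0,\widehat\theta_T),d_0)\le R$ and $W_p(\mathrm{Tr}(\widehat d_0,\widehat\theta_T),\widehat d_0)\le R$ hold. For the population side, the population shift is controlled via the response rate $s$ times $\mathscr D_{\mathcal Z}$, in the same way as the sample-side argument. On this event,
\[
\mathscr R(\mathrm{Tr}(d_0,\widehat\theta_T),\widehat\theta_T)-\mathscr R(\mathrm{Tr}(\widehat d_0,\widehat\theta_T),\widehat\theta_T)\le \sup_{\theta\in\Theta}\Big(\sup_{W_p(d,d_0)\le R}\mathscr R(d,\theta)-\inf_{W_p(d',\widehat d_0)\le R}\mathscr R(d',\theta)\Big).
\]
The first term is the self-negating population (min-max). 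The second is the self-fulfilling sample (min-min).

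\textbf{Step 2 (strip the balls).} For each fixed $\theta$, the composite loss $z\mapsto \ell(z,\theta)$ is Lipschitz in $z$. This follows from Condition~\ref{ass-cont-diff-feat-param} together with Condition~\ref{cond:regularity-models}, since $f_\theta$ is $L_f$-Lipschitz. By Kantorovich--Rubinstein, or more precisely by the dual DRO/DFO expansions, each sup/inf differs from the unperturbed risk by at most $L_\ell R$ in absolute value. This leaves $\mathscr R(d_0,\theta)-\mathscr R(\widehat d_0,\theta)$ plus $2L_\ell R$.

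The dual formulation is where the term $\frac{L_\ell L_f\mathscr D_{\mathcal Z}}{R^{p-1}}$ enters. The Lagrangian dual of the Wasserstein-ball problem involves a multiplier $\lambda$ and the function class $\{\ell(\cdot,\theta)-\lambda\|\cdot-z\|^p\}$. To bound the optimal $\lambda$ uniformly, one uses $\lambda^\ast\lesssim L_\ell L_f\mathscr D_{\mathcal Z}/R^{p-1}$. This follows from first-order conditions: the marginal gain of moving mass is bounded by the Lipschitz constant times the diameter, and it is divided by the derivative of the transport budget $pR^{p-1}$.

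\textbf{Step 3 (uniform deviation).} I would bound $\sup_{\theta,\lambda\le\lambda^\ast}$ of the empirical process of the dual functions. Symmetrization and Dudley's chaining with sup-norm covering (Definition~\ref{def:covering-n}) give $\frac{48}{\sqrt n}\big(\mathfrak C_{\infty(d_0)}(\mathcal F)+(\lambda^\ast)^{p}\big)$-type entropy terms. Here I would use the fact that the $\lambda$-direction is one-dimensional on a bounded interval, so its entropy integral is of order $\lambda^\ast$ raised to the appropriate power. McDiarmid's inequality with range bounded via $F$ then gives $F\sqrt{2\log(2/\delta)/n}$ with probability $1-2\delta$. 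A union bound with Step 1 yields total probability $1-3\delta$.

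The main obstacle is Step 3: controlling the complexity of the dual function class uniformly in both $\theta$ and $\lambda$. In particular, the difficulty is showing the sup-norm covering of $\{\ell(\cdot,\theta)-\lambda c(\cdot,z)\}$ reduces to that of $\mathcal F$ plus a term in $\lambda^\ast$. Upper semicontinuity of $f_\theta$ is needed here so that the inner supremum over $z$ in the dual is attained and measurable. I would first try to mimic the proof in \citet{gao2023distributionally} directly, with $\mathcal F$ in place of their loss class.
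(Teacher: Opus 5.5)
Your Steps~1--2 reduce the problem to a standard uniform-deviation bound. That is a genuinely different and simpler route than the paper's. The problem is that Step~3 then works on the wrong object.

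Once Kantorovich--Rubinstein and $W_1\le W_p$ replace $\sup_{W_p(d,d_0)\le R}\mathscr R(d,\theta)$ by $\mathscr R(d_0,\theta)+L_\ell R$ and $\inf_{W_p(d,\widehat d_0)\le R}\mathscr R(d,\theta)$ by $\mathscr R(\widehat d_0,\theta)-L_\ell R$, the remainder is $\sup_{\theta\in\Theta}\bigl(\mathscr R(d_0,\theta)-\mathscr R(\widehat d_0,\theta)\bigr)+2L_\ell R$. No Wasserstein ball is left, and hence no dual multiplier. Taking $\sup_\theta$ here is the right way to handle the data dependence of $\widehat\theta_T$. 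This remainder is an empirical process indexed by the loss class $\{\ell(\cdot,\theta):\theta\in\Theta\}$, not by the dual functions $\varphi_{\lambda,\theta}$ with $\lambda\le\lambda^\ast$. Those dominate $\ell(\cdot,\theta)$ and for $p>1$ generally differ from it, so chaining over the dual class does not bound the remainder.

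The fix is to symmetrize and chain the loss class directly with sup-norm covering numbers, as in classical uniform convergence. Use the same bookkeeping the paper uses for $d_\Phi$ (i.e.\ $\|\ell\circ f-\ell\circ f'\|_\infty\le\|f-f'\|_\infty$) and the same range constant $F$ in McDiarmid. This gives $\frac{24}{\sqrt n}\mathfrak C_{\infty(d_0)}(\mathcal F)+F\sqrt{2\log(2/\delta)/n}+2L_\ell R$, which implies the stated bound because $L_\ell L_f R^{1-p}\mathscr D_{\mathcal Z}^p\ge 0$. Your heuristics for that extra term are also off. Lemma~1 of \citet{lee2018minimax} gives $\widehat\lambda\le L_\ell L_f R^{1-p}$ with no diameter factor. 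The interval direction contributes to the entropy integral linearly in $\lambda^\ast\mathscr D_{\mathcal Z}^p$, not as $(\lambda^\ast)^p$.

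For comparison, the paper splits the localized quantity as
$\bigl[\sup_{W_p(d_0,d)\le R}\mathscr R(d,\widehat\theta_T)-\sup_{W_p(\widehat d_0,d)\le R}\mathscr R(d,\widehat\theta_T)\bigr]+\bigl[\sup_{W_p(\widehat d_0,d)\le R}\mathscr R(d,\widehat\theta_T)-\inf_{W_p(\widehat d_0,d)\le R}\mathscr R(d,\widehat\theta_T)\bigr]$.
The second bracket is at most $2L_\ell R$ because the ball has $W_p$-diameter $2R$. The first bracket compares worst-case risks over balls with different centres, and this is where the dual machinery lives. It uses the strong duality of \citet{gao2023distributionally}, which needs upper semicontinuity and Condition~\ref{cond:regularity-models}. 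It then uses the multiplier bound above, and Dudley chaining over $\Phi$ with metric $d_\Phi(\varphi,\varphi')=\|f-f'\|_{L_\infty(d_0)}+\mathscr D_{\mathcal Z}^p|\lambda-\lambda'|$; this is what produces the $L_\ell L_f R^{1-p}\mathscr D_{\mathcal Z}^p$ term.

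Since the paper pays $2L_\ell R$ in its second bracket anyway, the duality buys nothing for this particular statement. Your decomposition pays the same $2L_\ell R$, split as $L_\ell R$ on each side. It needs neither strong duality nor upper semicontinuity. It also yields a sharper bound whose complexity term, unlike the paper's, does not blow up as $R\to 0$ when $p>1$. The dual route is only needed when comparing worst-case risks without an $O(R)$ slack, as in the minimax setting of \citet{lee2018minimax}.
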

\begin{condition}[Weaker Regularity]\label{cond:weaker-regularity}
     There exist $f_0 \in \mathcal{F}$, $B \geq 0$, $\mathfrak{d} \in \mathbb N$ and $x_0 \in \mathcal X$, such that $\ell(f_0(x),y) \leq B \|x -x_0\|_2^{\mathfrak{d}} $ for all $x \in \mathcal{X}$ and $y \in \mathcal{Y}$. 
\end{condition}

\begin{theorem}[Generalization Gap II,~\RQref{rq:2}]\label{thm:gen-gap-stronger-ass-2}
        Under Conditions~\ref{cond:loss-lipschitz-convex}--\ref{ass-cont-diff-feat-param} and~\ref{cond:weaker-regularity}, we obtain the same bound on $\mathscr R(\mathrm{Tr}(d_0, \widehat \theta_{T}), \widehat \theta_T) - \mathscr R(\mathrm{Tr}(\widehat d_0, \widehat \theta_{T}), \widehat \theta_T)$ as in Theorem~\ref{thm:gen-gap-stronger-ass} but with $B2^{p-1} (1 + \mathscr{D}_{\mathcal{Z}}/ R)^p$ instead of $ L_\ell L_f R^{1-p}$.

\end{theorem}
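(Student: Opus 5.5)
We reuse the proof of Theorem~\ref{thm:gen-gap-stronger-ass} unchanged and replace only the step that uses Condition~\ref{cond:regularity-models}. That proof rests on a duality for local Wasserstein balls of radius $R$ around the empirical law: the min-max (self-negating) bound for the population and the min-min (self-fulfilling) bound for the sample, as in \citet{gao2023distributionally} and \citet{jiang2025tractability}.

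The skeleton is as follows.
\begin{enumerate}[label=(\roman*)]
\item Use Lemma~\ref{lemma-in-sample-shift}, Lemma~\ref{lemma:pop-shift} and Condition~\ref{cond:tr-sensitive} to place both $\mathrm{Tr}(d_0,\widehat\theta_T)$ and $\mathrm{Tr}(\widehat d_0,\widehat\theta_T)$ in $W_p$-balls of radius $R$ around $d_0$ and $\widehat d_0$, respectively, on an event of probability at least $1-2\delta$.
\item Bound the gap by
\[
\sup_{W_p(d,d_0)\le R}\mathscr R(d,\widehat\theta_T)-\inf_{W_p(d,\widehat d_0)\le R}\mathscr R(d,\widehat\theta_T).
\]
\item Dualize each term, e.g.
\[
\sup_{W_p(d,d_0)\le R}\mathbb E_d[\ell]=\inf_{\lambda\ge0}\Big\{\lambda R^p+\mathbb E_{d_0}\big[\sup_{z'}\big(\ell(z',\theta)-\lambda\|z-z'\|^p\big)\big]\Big\}.
\]
\item Compare the population and empirical versions of the dual, uniformly over $\theta$ and over the dual multiplier $\lambda$ in a bounded range. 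This is done by a Dudley chaining bound with entropy integral $\mathfrak C_{\infty(d_0)}(\mathcal F)$ plus a McDiarmid term $F\sqrt{2\log(2/\delta)/n}$. The leftover primal terms contribute the $2L_\ell R$.
\end{enumerate}

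Condition~\ref{cond:regularity-models} enters in exactly one place: capping the dual multiplier. With $L_\ell L_f$-Lipschitz losses one may restrict to $\lambda\le L_\ell L_f R^{1-p}$ up to the $\mathscr D_{\mathcal Z}^p$ scaling. That cap is what makes the $\lambda$-index class have controlled entropy, giving the additive term inside the $48/\sqrt n$ bracket. The new task is to obtain a cap on $\lambda$ from Condition~\ref{cond:weaker-regularity} instead.

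To get it, I would follow the growth-condition argument of \citet{gao2023distributionally}. Write $\phi_\lambda(z,\theta)=\sup_{z'}\{\ell(z',\theta)-\lambda\|z-z'\|_2^p\}$. Under a polynomial growth bound, the optimal $\lambda^*$ satisfies $\lambda^* R^p\le \mathbb E[\sup_{z'}\ell(z')]$-type quantities. Concretely, for $f_0$ from Condition~\ref{cond:weaker-regularity}, use
\[
\|x'-x_0\|^{\mathfrak d}\le 2^{p-1}\big(\|x'-x\|^p+\|x-x_0\|^p\big)
\]
(taking $\mathfrak d\le p$ after normalizing on the compact domain). Together with $\|x-x_0\|\le\mathscr D_{\mathcal Z}$, this shows that choosing $\lambda\ge B2^{p-1}$ already makes the inner supremum finite and bounded. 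Comparing the dual objective at $\lambda$ with its value at $0$ and dividing by $R^p$ then yields the bound
\[
\lambda^*\le B2^{p-1}\big(R+\mathscr D_{\mathcal Z}\big)^p/R^p=B2^{p-1}\big(1+\mathscr D_{\mathcal Z}/R\big)^p.
\]
Substituting this cap for $L_\ell L_f R^{1-p}$ in the chaining step gives the stated bound. The remaining terms are unchanged: the same union bound over the events of Lemmas~\ref{lemma-wasserstein-iid}, \ref{lemma-in-sample-shift} and~\ref{lemma:pop-shift}, and the concentration event, with total probability $1-3\delta$.

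The main obstacle is step (iii) with the weaker condition, specifically making the multiplier cap hold uniformly in $\theta$. Condition~\ref{cond:weaker-regularity} concerns a single $f_0$, so one must argue that the dual optimum for every $\widehat\theta_T$ is controlled relative to $f_0$. I would first try to handle this by noting that only the $\lambda$-range matters for the entropy term and that it can be taken from the worst case via compactness of $\mathcal Z$. If that fails, the fallback is to absorb the discrepancy into the $F$ term using boundedness $\sup\|f_\theta\|\le F$.
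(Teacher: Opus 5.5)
Your skeleton matches the paper's proof: rerun the proof of Theorem~\ref{thm:gen-gap-stronger-ass} unchanged except for the cap on the empirical dual multiplier $\widehat\lambda$, so that the chaining runs over $\lambda\in[0,B2^{p-1}(1+\mathscr D_{\mathcal Z}/R)^p]$. The gap is that cap, exactly where you flag it, and your derivation does not supply it.

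Your growth computation bounds the local worst-case risk of $f_0$. For $\mathfrak d=p$ it gives $\sup_{W_p(\widehat d_0,d)\le R}\mathscr R(d,f_0)\le B2^{p-1}(R^p+\mathscr D_{\mathcal Z}^p)$. Nonnegativity of $\varphi$, however, only yields
\[
\widehat\lambda R^p\le\widehat\lambda R^p+\mathbb E_{\widehat d_0}[\varphi_{\widehat\lambda,\widehat\theta_T}]=\sup_{W_p(\widehat d_0,d)\le R}\mathscr R(d,\widehat\theta_T).
\]
The $B$-cap therefore needs $\sup_{W_p(\widehat d_0,d)\le R}\mathscr R(d,\widehat\theta_T)\le\sup_{W_p(\widehat d_0,d)\le R}\mathscr R(d,f_0)$. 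That holds when $\widehat\theta_T$ minimizes the local worst-case risk over $\Theta$, which is the hypothesis of Lemma~2 of \citet{lee2018minimax}; the paper cites that lemma at this step. Comparing the objective at $\widehat\lambda$ with its value at $\lambda=0$, as you propose, yields only $\widehat\lambda\le\sup_z\ell(z,\widehat\theta_T)/R^p$. That bound involves the loss of $\widehat\theta_T$, not $B$.

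Your fallbacks do not recover the stated constant either:
\begin{itemize}
\item Compactness gives uniform caps such as $\sup_{z,\theta}\ell(z,\theta)/R^p$. Lemma~1 of \citet{lee2018minimax} gives $L_\ell R^{1-p}$ if $z\mapsto\ell(z,\theta)$ is $L_\ell$-Lipschitz. Either cap proves a bound of the same shape, but with a different constant.
\item $F\sqrt{2\log(2/\delta)/n}$ is a $\lambda$-free McDiarmid term, so it cannot absorb anything.
\end{itemize}

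The cap genuinely fails for hypotheses that are not locally minimax. Take $p=1$, $\mathcal Y$ a single point, $\mathcal X=[x_0,x_0+\mathscr D_{\mathcal Z}]$, $\widehat d_0$ a point mass at $x_0$, $\ell(z,\widehat\theta_T)=L|x-x_0|$, and $R<\mathscr D_{\mathcal Z}$. The dual objective is $\lambda R+\max\{0,(L-\lambda)\mathscr D_{\mathcal Z}\}$, uniquely minimized at $\widehat\lambda=L$. This exceeds $B(1+\mathscr D_{\mathcal Z}/R)$ whenever $L$ is large relative to the $B$ of some other $f_0$.

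To finish, you must add one of two assumptions; with either, your computation goes through:
\begin{itemize}
\item $\widehat\theta_T$ is the local-minimax minimizer on $\widehat d_0$. An RERM iterate generally is not.
\item Condition~\ref{cond:weaker-regularity} holds for every $f_\theta\in\mathcal F$, not just a single $f_0$.
\end{itemize}
The paper's appeal to Lemma~2 leaves this hypothesis unverified for $\widehat\theta_T$, so your concern is warranted.

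Separately, ``take $\mathfrak d\le p$ after normalizing'' is not free. The stated constant corresponds to $\mathfrak d=p$; otherwise the constant changes, e.g.\ for $\mathfrak d>p$ a factor $\mathscr D_{\mathcal Z}^{\mathfrak d-p}$ enters.
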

The key technique in both proofs is to leverage empirical process theory for dual characterizations of locally inf-sup and inf-inf risk functionals in Wasserstein space \citep{lee2018minimax,gao2023distributionally}. Such functionals are commonly used in distributionally robust optimization (DRO) \citep{shapiro2021lectures} and distributionally favorable optimization \citep{jiang2025tractability}, respectively. In PP, they arise from the fact that we do not assume the transition map $\mathrm{Tr}$ to have a particular functional form, but we can nevertheless bound the shift induced by $\mathrm{Tr}$. The worst-case generalization error of $\widehat \theta_T$ then is $\mathscr R(\mathrm{Tr}(d_0, \widehat \theta_T), \widehat \theta_T) = \sup_{d \in \mathcal{A}} \mathscr R(d, \widehat \theta_T)$, where $\mathcal{A} = \{ d : W_p({d}_0, d) \leq b\}$ with~$b$ a bound on the shift induced by $\mathrm{Tr}$ implied by Lemma~\ref{lemma:pop-shift}.  
The conceptual difference to DRO is that the worst case generalization \textit{gap} between $\mathscr R(\mathrm{Tr}(d_0, \widehat \theta_T), \widehat \theta_T)$ and $\mathscr R(\mathrm{Tr}(\widehat d_T, \widehat \theta_T), \widehat \theta_T)$ under performativity does not only entail the worst case (supremum) risk on the population, but also the best case (infimum) risk on the sample (see Equation~\ref{eq:key-ins-dro} in the proof of Theorem~\ref{thm:gen-gap-stronger-ass}). 

\textbf{Structural insight:} In this worst case, the sample \enquote{deceives} you by producing \textit{self-fulfilling} performative effects, while the population acts in a \textit{self-negating} way. In other words, RERM corresponds to fitting self-fulfilling models\footnote{That is, solving $\arg \inf_\theta \inf_{d \in \mathcal{A}'} \mathscr R(d, \theta)$ with $\mathcal{A}' = \{ d : W_p(\widehat{d}_0, d) \leq b' \wedge |\text{supp}(d)| < \infty \}$ and $b'$ informed by Lemma~\ref{lemma-in-sample-shift}.} here, creating an empirical echo chamber, which results in models \enquote{far away} from good ones on the population, since the latter reacts in the opposite way. 
In Example~(A) of routing apps, this means that drivers in San Francisco (the sample) fully trust the app and change their behavior to fulfill the predictions, while Bay Area drivers (the population) do the exact opposite and negate the predictions.    



\subsection{Cumulative Performative Excess Risk Bound,~\RQref{rq:3} }

In addition to the error when generalizing from the sample to the performatively shifted population as in \RQref{rq:2} above, \RQref{rq:3} asks for the error when we retrain the model on this shifted population. In this scenario, the institution first trains a model $T$ times on a sample before rolling it out to the population and retraining there $\tilde T - T$ times. Crucially, the performative effect of the (sample-trained) model $\widehat \theta_T$ on the population happens \textit{before} the the model is trained on the population for the first time, see Section~\ref{sec:main}. For details on (and the proof of) Theorem~\ref{thm:cumul-perf-excess-risk}, we refer to Appendix~\ref{proof-cumul-oerf-exc-risk}.

%

\begin{theorem}[Cumulative Performative Excess Risk Bound]\label{thm:cumul-perf-excess-risk}
    Under Conditions~\ref{cond:loss-lipschitz-convex}--\ref{ass-cont-diff-feat-param}, we have that,
    \ifOneColumnElse{%
        \begin{align*}
            \sum_{t=T}^{\tilde T}\!\Bigl(\mathscr R(d_t,\theta_t)
            -\inf_{\theta\in\Theta}\mathscr R(d_t,\theta)\Bigr)
            \le \mathscr B(T,m,n) + (\tilde T-T+1)\,L_\ell L_a\,\left(\frac{m}{n}+q(\delta) \sqrt{\frac{\frac{m}{n}(1 -\frac{m}{n})}{n}}\right)^{\frac{1}{p}}\mathscr D_{\mathcal Z},
        \end{align*}%
    }{%
        \begin{align*}
            &\sum_{t=T}^{\tilde T}\!\Bigl(\mathscr R(d_t,\theta_t)
            -\inf_{\theta\in\Theta}\mathscr R(d_t,\theta)\Bigr)
            \le \mathscr B(T,m,n)\, +\, \\  &\quad(\tilde T-T+1)\,L_\ell L_a\,\left(\frac{m}{n}+q(\delta) \sqrt{\frac{\frac{m}{n}(1 -\frac{m}{n})}{n}}\right)^{\frac{1}{p}}\mathscr D_{\mathcal Z},
        \end{align*}%
    }%
     for any $\tilde T \ge T$, with probability $1-5\delta$, where $d_{t} = \mathrm{Tr}(d_{{t}-1}, \theta_{{t}})$ for $t \ge T$, $\theta_T = \widehat \theta_T$, $d_{T-1} = d_0$ and $\mathscr B(T,m,n)$ is the performative excess risk bound from Theorem~\ref{thm:perf-excess-risk}. 

\end{theorem}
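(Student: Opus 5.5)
Split the sum into the first summand $t=T$ and the remaining $\tilde T-T$ summands $t=T+1,\dots,\tilde T$. For $t=T$ we have $\theta_T=\widehat\theta_T$ and $d_T=\mathrm{Tr}(d_{T-1},\theta_T)=\mathrm{Tr}(d_0,\widehat\theta_T)$. So the first summand is exactly the performative excess risk of \RQref{rq:2}, and Theorem~\ref{thm:perf-excess-risk} bounds it by $\mathscr B(T,m,n)$ with probability at least $1-4\delta$. The remaining work is to show that each of the other summands, and also a spare copy for $t=T$, is bounded by $L_\ell L_a\,\bar s^{1/p}\mathscr D_{\mathcal Z}$. Here $\bar s:=\frac{m}{n}+q(\delta)\sqrt{\frac{m}{n}(1-\frac mn)/n}$ is the upper confidence bound of Lemma~\ref{lemma:pop-shift}. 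That lemma holds on an event of probability $1-\delta$, and a union bound with the previous event gives the overall $1-5\delta$.

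For $t\ge T+1$ the model is retrained on the population by RRM, so $\theta_t=G(d_{t-1})$, and the excess risk on $d_t$ compares $\theta_t=G(d_{t-1})$ with $G(d_t)$. Write $\mathscr R(d_t,\theta_t)-\mathscr R(d_t,G(d_t))$. I would bound this by Lipschitzness of $\theta\mapsto\mathscr R(d_t,\theta)$ times $\|G(d_{t-1})-G(d_t)\|_2$, and then use that $G$ is $L_a$-Lipschitz in $W_p$ (the consequence of Condition~\ref{ass-cont-diff-feat-param} noted after the conditions). This gives a bound of order $L_a W_p(d_{t-1},d_t)$. The paper's constant is $L_\ell L_a$, so I would absorb the $\theta$-Lipschitz constant of the risk into $L_\ell$, as Theorem~\ref{thm:exc-risk}'s proof evidently does for the analogous term $L_\ell L_a(\cdot)$. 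An alternative route gives the same form: use Kantorovich--Rubinstein with $L_\ell$, and insert the $L_a$ factor through the map $G$.

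The key step is to bound the one-step population shift $W_p(d_{t-1},d_t)$ uniformly in $t$. In each round at most a fraction $s$ of the population units react. I would use the same coupling as in the proof of Lemma~\ref{lemma-in-sample-shift}: keep the non-reacting mass $1-s$ fixed and move the reacting mass $s$ arbitrarily within $\mathcal Z$. This coupling costs at most $s\,\mathscr D_{\mathcal Z}^p$ in $p$-th power, so $W_p(d_{t-1},d_t)\le s^{1/p}\mathscr D_{\mathcal Z}$. On the event of Lemma~\ref{lemma:pop-shift}, $s<\bar s$ and monotonicity gives the stated term. Summing $\tilde T-T$ such terms plus the extra copy yields the factor $(\tilde T-T+1)$. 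The extra copy can be taken to be nonnegative slack, or to cover the $t=T$ shift from $d_0$ to $d_T$ if the decomposition is arranged as $\mathscr B$ plus a per-round term.

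The main obstacle is justifying the per-round response-rate bound on the population for every $t>T$. Lemma~\ref{lemma:pop-shift} estimates $s$ only from the sample reactions to $\widehat\theta_1,\dots,\widehat\theta_T$. I would state explicitly that the response rate $s$ is a stationary property of the population, so the same confidence event applies to all rounds without further union bounds. This is what keeps the probability at $1-5\delta$ rather than degrading with $\tilde T$.
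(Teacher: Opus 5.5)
Your proposal is correct and follows essentially the same route as the paper's proof. The paper also peels off the $t=T$ summand and bounds it by $\mathscr B(T,m,n)$ via Theorem~\ref{thm:perf-excess-risk}. It then rewrites each later summand as $\mathscr R(d_t,\theta_t)-\mathscr R(d_t,\theta_{t+1})$ with $\theta_{t+1}=G(d_t)$, bounds $\|\theta_t-\theta_{t+1}\|\le L_a s^{1/p}\mathscr D_{\mathcal Z}$ by the in-sample-style argument, and applies the Wald bound of Lemma~\ref{lemma:pop-shift} once for all rounds, reaching $1-5\delta$ by a union bound. The points you flag are left implicit in the paper in exactly the same way: $L_\ell$ is used as the $\theta$-Lipschitz constant, $s$ is treated as a round-independent response rate, and the factor $(\tilde T-T+1)$ is slack over the $\tilde T-T$ retrained rounds.
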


\textbf{Key takeaways:}
Our analysis identifies three consequences of learning under
performativity. First, the share of sampled units whose data change after predictions are deployed directly
controls the finite-sample excess-risk and performative excess-risk bounds
(Theorems~\ref{thm:exc-risk} and~\ref{thm:perf-excess-risk}). Second,
learning under performativity exhibits a change--learn trade-off: the more
a model changes the data from which it learns, the harder it becomes to
generalize from the resulting sample. Third, in the worst case, the sample
can self-fulfill predictions while the population self-negates them,
creating an empirical echo chamber that inflates the performative
generalization gap (Theorems~\ref{thm:gen-gap-stronger-ass}
and~\ref{thm:gen-gap-stronger-ass-2}). Corollary~\ref{cor:improve} shows
that performatively shifted samples are nevertheless useful, since repeated
deployments can help estimate population-level performative shifts.

\section{Illustration of Bounds on Job Seeker Data}\label{sec:job-trainings}

To demonstrate how our generalization bounds look in practice, we illustrate them on administrative data from the German Federal Employment Agency. The dataset contains a 2\% sample of German labor market records spanning 1975 to 2017, containing over 60 million rows in its raw form. We emphasize that the goal here is a mere \textit{illustration} rather than an application of our bounds. The latter would require historical records of predictions that are usually unavailable \citep{mendler2022anticipating}.
We consider binary prediction tasks: will a job seeker remain continuously unemployed for some period of time? Such predictions typically have performative effects on whether people actually remain unemployed, since only those with high predicted risks of staying unemployed receive job trainings that help them to actually find a job \citep[see][and Section~\ref{sec:running-examples}]{junquera2025from}.
The administrative data set contains 28 features on labor market histories, education, skill level and demographic information.
To illustrate our bounds, we use a Lipschitz continuous (Condition~\ref{cond:regularity-models}) logistic regression $f^{\text{log}}_\theta(x)=\sigma(\theta^\top x)$, where $\sigma(t) := (1+e^{-t})^{-1}$ with continuously differentiable (Condition~\ref{ass-cont-diff-feat-param}) logistic loss $
-\big(y\log f_\theta(x) + (1-y)\log(1-p_\theta(x))\big),
$
to assign job trainings, triggering jointly sensitive (Condition~\ref{cond:tr-sensitive}) performative shifts. We employ standard $L_2$-regularization, which renders the loss strongly convex (Condition~\ref{cond:loss-lipschitz-convex}). 
All details on data pre-processing, model training and bound computation can be found in Appendix~\ref{app:details-case-study}.\footnote{Code to replicate our case study is available at \url{https://github.com/rodemann/plt-jobseekers}.}

\textbf{Historical Data (Sample Performativity, \RQref{rq:1}):} We exemplarily choose a sampled cohort of $n = 60147$ German residents $\widehat d_0 \overset{\text{i.i.d.}}{\sim} d_0$, where $t=0$ corresponds to losing their job in 2012. Their job center assigns some of them to trainings based on a fitted $f^{\text{log}}_{\widehat \theta_{1}}$, performatively shifting $\widehat d_0$ to $\widehat d_{1} = \mathrm{Tr(\widehat d_0, \widehat \theta_{1})}$, where $t=1$ corresponds to 14 days after losing their job, where we observe that $m = 1816$ residents change at least one feature. Refitting on this $\widehat d_{1}$ gives $f^{\text{log}}_{\widehat \theta_2}$, whose generalization gap $\mathscr R(d_0,\widehat \theta_{2}) - \mathscr R(\widehat d_1,\widehat \theta_{2})$ we can upper bound by $\approx 0.01+0.29=0.3$ with probability $0.95$ by Corollary~\ref{cor:gen-err-bound}; the first summand is the sampling term (small due to large $n$), the second is driven by the observed performative response rate $\nicefrac{m}{n} = \nicefrac{1816}{60147}$. The bound tells the job center that the generalization error of their model will not  exceed the training error by more than $0.3$ nats (logistic loss units) with $95\%$ confidence. 

\textbf{Semi-Simulation (Full Performativity, \RQref{rq:2}):}
On another sampled cohort $\widehat d_0 \overset{\text{i.i.d.}}{\sim} d_0$ of $n=41585$ German residents that just lost their jobs in 2012, we set $t=1$ to {$60$} days after losing their job. We simulate assignments to job trainings based on predicted risks of prolonged unemployment by fitted logistic regression $f^{\text{log}}_{\widehat \theta_{1}}$. We assign the $\xi n$ units with the highest predicted risks among the initial sample to job trainings with policies $\xi \in \{0.01, 0.02, \dots,0.5 \}$. For ease of exposition, we assume these trainings to be fully effective\footnote{Otherwise, our bounds would still be valid, since $\xi n$ is an upper bound for the number of changed units either way.}, i.e., flip targets from \texttt{jobless} to \texttt{employed} for all $\xi n$ units, and no further changes, i.e., $m= \xi n$. Retraining on so shifted samples $\widehat d_{1,\xi}$ gives $f^{\text{log}}_{\widehat \theta_{2,\xi}}$. We then simulate the employment agency to roll out the risk-based assignments to all of Germany $d_0$, again with varying $\xi$, triggering a population shift $d_{1,\xi} = \mathrm{Tr}_\xi(d_0, \widehat{\theta}_2)$. Theorem~\ref{thm:gen-gap-stronger-ass} allows bounding the {performative} generalization gap $\mathscr R(d_{1,\xi}, \widehat \theta_{2,\xi}) - \mathscr R(\mathrm{Tr}(\widehat d_1, \widehat \theta_{2,\xi}), \widehat \theta_{2,\xi})$ as a function of the agency's policy $\xi$. Figure~\ref{fig:routes-jobs}.3 shows the total bound that holds with probability of $1-\delta=0.95$ for varying $\xi$, along with the bound's components $    \frac{48}{\sqrt{n}}\,(\mathfrak{C}_{\infty(d_0)}(\mathcal{F})\, L_\ell L_f R^{1-p}\,\mathscr D_{\mathcal{Z}}^{\,p}) $ (adaptive complexity term), $ 2 L_\ell R$ (performative term) and  $ F \sqrt{(2 \log (2 / \delta))/n}$ (sampling term). The more people receive job trainings, the less reliable the model's predictions become due to performative shifts both in and out of the sample---an instantiation of the principled trade-off between learning and changing we found in Section~\ref{sec:bounds}.



\section{Related Work, Limitations and Outlook}\label{sec:discussion}

Table~\ref{tab:all-cases} simultaneously facilitates positioning our work within the body of related literature, as well as an outlook to future work. For example, Table~\ref{tab:all-cases} hints at \citet{li2025statistical} answering \RQref{rq:4} for $n \rightarrow \infty$, as detailed in Section~\ref{sec:main}. Thus, answering \RQref{rq:4} for finite $n$ is a promising avenue for future work. 

\textbf{Related work:} Besides those works already discussed in Section~\ref{sec:main}, \textit{calibration} under performativity \citep{li2025performative,boeken2025conditional,zaloukmultivariate} is conceptually related to \textit{generalization} under performativity, see Appendix~\ref{app:furth-rel-work}. On a technical level, our analysis is related to distributionally robust performative optimization \cite{jia2025distributionally} and prediction \cite{xue2024distributionally}. These works use Wasserstein ambiguity sets to robustify optimization and prediction, respectively, against performative distribution shifts. Besides studying generalization instead of prediction or optimization, our work differs in the way ambiguity sets are specified: we estimate them from the sample's performative reactions instead of specifying them \textit{a priori}.
Appendix~\ref{app:furth-rel-work} has more in-depth discussion of (further) related work.

\textbf{Limitations and General Discussion:} This article revealed an explicit trade-off between changing the world and learning from it: The more a model affects data, the less is to be learned from it. We further showed that so-changed samples can still be useful in estimating the performative shifts out of sample under natural assumptions, giving rise to a  \textit{practical take-away:} If you retrain under performativity, combine the initial model and the information from all performatively shifted samples to get the tightest generalization bound, i.e., the best guarantee for out-of-sample application of your model under performativity.  
On a more principled level, our bounds show that generalizing in a performative world is hard for two reasons: Not only can the population invalidate your predictions (\textit{self-negation}), the sample might also deceive you by doing the exact opposite and confirm your predictions (\textit{self-fulfilling}), effectively creating an empirical echo chamber.       
Our bounds require Conditions~\ref{cond:loss-lipschitz-convex}--\ref{ass-cont-diff-feat-param} from the original PP framework \citep{perdomo2020performative,brown2022performative}. Condition~\ref{cond:loss-lipschitz-convex} (strongly convex loss) is arguably restrictive. For tighter bounds, we even need that $\mathcal{F}$ contains at least one $f_0$ that is Lipschitz. 
However, we emphasize that these assumption refer to something (loss and model class) we have control over. In other words, they are \textit{verifiable}. Their strength allows us to abstain from specific assumptions on unknown $\mathrm{Tr}$, which we have no control over and whose properties we can only estimate. 

\textbf{Outlook:} This trade-off (restricting the models the analysis applies to vs. making assumptions about the unknown performative shift) leaves room for future work on the other side of this trade-off. For example, adopting the setup in \citet{mofakhami2023performative} allows us to drop the convexity condition on $\ell$ at the price of stronger assumptions on $\mathrm{Tr}$.


\section{Summary and Conclusion}
We initiated the study of generalization under performativity, where predictive models must generalize from samples to populations that may themselves react to those predictions. While classical performative prediction focuses on stability and optimality under population-level feedback, our focus was (finite-sample) learning under performativity: whether models trained on samples can still generalize when the sample, the population, or both react to deployed predictions. This distinction led to separate notions of sample performativity, population performativity, and
full performativity.
We embedded performative predictions into statistical learning theory and gave generalization bounds on performative predictions. Specifically, we derived bounds on excess risk, performative excess risk, generalization gaps, and cumulative performative excess risk.

At a high level, our results show that learning under performativity is governed by a trade-off between intervention and inference: the more predictions change the data, the harder it becomes to generalize from the resulting data. In the worst case, the sample self-fulfills predictions while the population self-negates them, creating an empirical echo chamber. At the same time, performatively shifted samples remain useful because they help estimate out-of-sample performative shifts. All in all,
\textit{performative learning theory} provides a first set of tools for analyzing when predictive systems can still learn from data they themselves help create.

\section*{Acknowledgements}
The authors thank Juan C. Perdomo (MIT), Rabanus Derr (University of Tübingen), Jonas Hanselle (LMU Munich), Huynh Quang Kiet Vo (CISPA), Gowtham Reddy Abbavaram (CISPA) and Anurag Singh (CISPA) for helpful comments, stimulating discussions as well as valuable feedback at various stages of this work. We also thank the four anonymous reviewers for their assessment of our paper and their helpful feedback. JR gratefully acknowledges support by the LMU Mentoring Program within the Faculty of Mathematics, Statistics and Computer Science at LMU Munich and the Bavarian Academy of Sciences (BAS). JR further acknowledges funding by the federal German Academic Exchange Service (DAAD) supporting a research stay at Harvard University. UFA acknowledges the support by the DAAD programme Konrad Zuse Schools of Excellence in Artificial Intelligence, sponsored by the Federal Ministry of Education. This work was partially conducted while JB was a PhD student in the Department of Statistics at Harvard University. JB gratefully acknowledges partial financial support from the Australian-American Fulbright Commission and the Kinghorn Foundation during his time as a PhD student. 

\section*{Impact Statement}
Performative predictions (by definition) have societal consequences. The overall research direction of performativity aims to understand and mitigate potential harms that arise when machine learning systems shape the very outcomes they predict. By developing theoretical foundations for generalization under performativity, our line of work contributes to more responsible deployment of predictive systems in high-stakes domains such as employment services, education, criminal justice, and finance.

Our work reveals a trade-off between performatively changing the world and learning from it, which has important implications for practitioners: systems that heavily influence their environment may simultaneously undermine their own ability to make accurate predictions about that environment. This insight could inform more cautious deployment strategies and highlight the need for careful monitoring when predictive systems are rolled out.

The specific generalization bounds we derive could help institutions better understand the limitations of their models under performative effects, potentially preventing overconfident deployment decisions that could harm vulnerable populations. For instance, our analysis of job training allocation (Section~\ref{sec:job-trainings}) demonstrates how performative effects complicate generalization in unemployment prediction systems. The latter constitutes a domain where prediction errors can have significant consequences for individuals' livelihoods.

Beyond this illustrative use case in Section~\ref{sec:job-trainings}, however, the paper at hand makes fundamental theoretical contributions and is very unlikely to have \textit{direct} ethical or social consequences. The work is primarily mathematical in nature and does not introduce new applications. Any societal impact would be indirect, i.e., mediated through how practitioners and researchers use these theoretical insights to design and deploy performative prediction systems.

\bibliography{bib}

@misc{SchmuckerVomBerge2023SIAB,
  author       = {Schmucker, Alexandra and vom Berge, Philipp},
  title        = {Factually Anonymous Version of the Sample of Integrated Labour Market Biographies ({{SIAB-Regional File}}) -- Version 7521 v1},
  year         = {2023},
  howpublished = {Research Data Centre of the Federal Employment Agency (BA) at the Institute for Employment Research (IAB). doi: 10.5164/IAB.SIAB-R7521.de.en.v1},
  doi          = {10.5164/IAB.SIAB-R7521.de.en.v1},
  note         = {Research data}
}

@misc{SchmuckerVomBerge2023FDZD2307,
  author       = {Schmucker, Alexandra and vom Berge, Philipp},
  title        = {Sample of Integrated Labour Market Biographies Regional File ({{SIAB-R}}) 1975--2021},
  year         = {2023},
  note         = {FDZ-Datenreport, 07/2023 (en), N{\"u}rnberg. doi: 10.5164/IAB.FDZD.2307.en.v1},
  doi          = {10.5164/IAB.FDZD.2307.en.v1},
}

@article{bousquet2002stability,
  title={Stability and generalization},
  author={Bousquet, Olivier and Elisseeff, Andr{\'e}},
  journal={Journal of Machine Learning Research},
  volume={2},
  pages={499--526},
  year={2002},
  publisher={JMLR.org},
  url={https://jmlr.csail.mit.edu/papers/v2/bousquet02a.html}
}

@inproceedings{perdomo2020performative,
  title={Performative prediction},
  author={Perdomo, Juan and Zrnic, Tijana and Mendler-D{\"u}nner, Celestine and Hardt, Moritz},
  booktitle={International Conference on Machine Learning},
  pages={7599--7609},
  year={2020},
  organization={PMLR}
}

@inproceedings{miller2021outside,
  title={Outside the echo chamber: Optimizing the performative risk},
  author={Miller, John P and Perdomo, Juan C and Zrnic, Tijana},
  booktitle={International Conference on Machine Learning},
  pages={7710--7720},
  year={2021},
  organization={PMLR}
}

@article{grunberg1954predictability,
  title={The predictability of social events},
  author={Grunberg, Emile and Modigliani, Franco},
  journal={Journal of Political Economy},
  volume={62},
  number={6},
  pages={465--478},
  year={1954},
  publisher={The University of Chicago Press}
}

@article{talagrand1995concentration,
  title={Concentration of measure and isoperimetric inequalities in product spaces},
  author={Talagrand, Michel},
  journal={Publications Math{\'e}matiques de l'Institut des Hautes Etudes Scientifiques},
  volume={81},
  pages={73--205},
  year={1995},
  publisher={Springer}
}

@article{mcdiarmid1989method,
  title={On the method of bounded differences},
  author={McDiarmid, Colin},
  editor={Siemons, J.},
  journal={Surveys in combinatorics},
  volume={141},
  number={1},
  pages={148--188},
  year={1989},
  publisher={Norwich}
}

@article{brown2001interval,
  title={Interval estimation for a binomial proportion},
  author={Brown, Lawrence D and Cai, T Tony and DasGupta, Anirban},
  journal={Statistical science},
  volume={16},
  number={2},
  pages={101--133},
  year={2001},
  doi={10.1214/ss/1009213286},
  publisher={Institute of Mathematical Statistics}
}

@inproceedings{wilder2025learning,
  title={Learning treatment effects while treating those in need},
  author={Wilder, Bryan and Welle, Pim},
  booktitle={Proceedings of the 26th ACM Conference on Economics and Computation},
  pages={448--473},
  year={2025}
}

@article{allhutter2020algorithmic,
  title   = {Algorithmic Profiling of Job Seekers in {{Austria}}: How Austerity Politics Are Made Effective},
  author  = {Allhutter, Doris and Cech, Florian and Fischer, Fabian and Grill, Gabriel and Mager, Astrid},
  journal = {Frontiers in Big Data},
  volume  = {3},
  pages   = {5},
  year    = {2020},
  doi     = {10.3389/fdata.2020.00005}
}

@article{ernst2024risk,
  title   = {Risk Scores for Long-Term Unemployment and the Assignment to Job Search Counseling},
  author  = {Ernst, Sebastian and Mueller, Andreas I. and Spinnewijn, Johannes},
  journal = {AEA Papers and Proceedings},
  volume  = {114},
  pages   = {572--576},
  year    = {2024},
  doi     = {10.1257/pandp.20241092}
}

@inproceedings{
konig2025performative,
title={Performative Validity of Recourse Explanations},
author={Gunnar K{\"o}nig and Hidde Fokkema and Timo Freiesleben and Celestine Mendler-D{\"u}nner and Ulrike von Luxburg},
booktitle={The Thirty-Ninth Annual Conference on Neural Information Processing Systems},
year={2025},
url={https://openreview.net/forum?id=PhoyfQG9Vb}
}

@article{kern2021fairness,
  title   = {Fairness in Algorithmic Profiling: A {{German}} Case Study},
  author  = {Kern, Christoph and Bach, Ruben L. and Mautner, Hannah and Kreuter, Frauke},
  journal = {arXiv preprint arXiv:2108.04134},
  year    = {2021},
  doi     = {10.48550/arXiv.2108.04134},
  url     = {https://arxiv.org/abs/2108.04134}
}

@book{shapiro2021lectures,
  title={Lectures on Stochastic Programming: Modeling and Theory},
  author={Shapiro, Alexander and Dentcheva, Darinka and Ruszczynski, Andrzej},
  year={2021},
  publisher={SIAM}
}

@inproceedings{hardt2016strategic,
  title={Strategic Classification},
  author={Hardt, Moritz and Megiddo, Nimrod and Papadimitriou, Christos and Wootters, Mary},
  booktitle = {Proceedings of the 2016 {{ACM Conference}} on {{Innovations}} in {{Theoretical Computer Science}}},
  series = {{{ITCS}} '16},
  pages={111--122},
  year={2016},
  publisher = {Association for Computing Machinery},
  location = {New York, NY, USA},
  doi = {10.1145/2840728.2840730},
  url = {https://dl.acm.org/doi/10.1145/2840728.2840730},
  isbn = {978-1-4503-4057-1}
}

@inproceedings{dong2018strategic,
  title={Strategic classification from revealed preferences},
  author={Dong, Jinshuo and Roth, Aaron and Schutzman, Zachary and Waggoner, Bo and Wu, Zhiwei Steven},
  booktitle={Proceedings of the 2018 ACM Conference on Economics and Computation},
  pages={55--70},
  year={2018},
  doi={10.1145/3219166.3219193},
  isbn = {9781450358293},
  publisher = {Association for Computing Machinery},
  address = {New York, NY, USA},
}

@inproceedings{miller2020strategic,
  title={Strategic classification is causal modeling in disguise},
  author={Miller, John and Milli, Smitha and Hardt, Moritz},
  booktitle={International Conference on Machine Learning},
  pages={6917--6926},
  year={2020},
  organization={PMLR}
}

@book{nisan2007algorithmic,
  title={Algorithmic game theory},
  author={Nisan, Noam and Roughgarden, Tim and Tardos, Eva and Vazirani, Vijay V},
  year={2007},
  publisher={Cambridge University Press}
}

@inproceedings{hanneke2007bound,
  title={A bound on the label complexity of agnostic active learning},
  author={Hanneke, Steve},
  booktitle={Proceedings of the 24th International Conference on Machine Learning},
  pages={353--360},
  year={2007}
}

@article{balcan2010true,
  title={The true sample complexity of active learning},
  author={Balcan, Maria-Florina and Hanneke, Steve and Vaughan, Jennifer Wortman},
  journal={Machine Learning},
  volume={80},
  number={2},
  pages={111--139},
  year={2010},
  publisher={Springer},
  doi={10.1007/s10994-010-5174-y}
}

@book{pearl2009causality,
  title={Causality},
  author={Pearl, Judea},
  year={2009},
  publisher={Cambridge University Press}
}

@book{peters2017elements,
  title={Elements of Causal Inference: Foundations and Learning Algorithms},
  author={Peters, Jonas and Janzing, Dominik and Sch{\"o}lkopf, Bernhard},
  year={2017},
  publisher={MIT Press}
}

@book{quinonero2009dataset,
  title={Dataset Shift in Machine Learning},
  editor={Qui\~{n}onero-Candela, Joaquin and Sugiyama, Masashi and Schwaighofer, Anton and Lawrence, Neil D},
  year={2009},
  publisher={MIT Press}
}

@article{shimodaira2000improving,
  title={Improving predictive inference under covariate shift by weighting the log-likelihood function},
  author={Shimodaira, Hidetoshi},
  journal={Journal of Statistical Planning and Inference},
  volume={90},
  number={2},
  pages={227--244},
  year={2000}
}

@article{gama2014survey,
  title={A survey on concept drift adaptation},
  author={Gama, Jo{\~a}o and {\v{Z}}liobait{\.e}, Indr{\.e} and Bifet, Albert and Pechenizkiy, Mykola and Bouchachia, Abdelhamid},
  journal={ACM Computing Surveys},
  volume={46},
  number={4},
  pages={1--37},
  year={2014}
}

@inproceedings{muandet2013domain,
  title={Domain generalization via invariant feature representation},
  author={Muandet, Krikamol and Balduzzi, David and Sch{\"o}lkopf, Bernhard},
  booktitle={International Conference on Machine Learning},
  pages={10--18},
  year={2013}
}

@article{muandet2017kernel,
  title={Kernel mean embedding of distributions: A review and beyond},
  author={Muandet, Krikamol and Fukumizu, Kenji and Sriperumbudur, Bharath and Sch{\"o}lkopf, Bernhard},
  journal={Foundations and Trends in Machine Learning},
  volume={10},
  number={1-2},
  pages={1--141},
  year={2017}
}

@inproceedings{zhang2013domain,
  title = {Domain Adaptation under Target and Conditional Shift},
  author={Zhang, Kun and Sch{\"o}lkopf, Bernhard and Muandet, Krikamol and Wang, Zhikun},
  booktitle = {Proceedings of the 30th {{International Conference}} on {{Machine Learning}}},
  year={2013},
  volume = {28},
  pages = {III-819--III-827},
  publisher = {JMLR.org},
  location = {Atlanta, GA, USA}
}

@article{blanchard2021domain,
  title={Domain generalization by marginal transfer learning},
  author={Blanchard, Gilles and Deshmukh, Aniket Anand and Dogan, {\"U}run and Lee, Gyemin and Scott, Clayton},
  journal={Journal of Machine Learning Research},
  volume={22},
  number={1},
  pages={2:46--2:100},
  year={2021},
  issn = {1532-4435},
  url = {https://dl.acm.org/doi/10.5555/3546258.3546260}
}

@book{roughgarden2010algorithmic,
  title={Twenty Lectures on Algorithmic Game Theory},
  author={Roughgarden, Tim},
  year={2016},
  publisher={Cambridge University Press}
}

@book{cesa2006prediction,
  title={Prediction, learning, and games},
  author={Cesa-Bianchi, Nicolo and Lugosi, G{\'a}bor},
  year={2006},
  publisher={Cambridge University Press},
  isbn={978-0-521-84108-5},
  doi={10.1017/CBO9780511546921}
}

@article{hazan2016introduction,
  title={Introduction to online convex optimization},
  author={Hazan, Elad},
  journal={Foundations and Trends in Optimization},
  volume={2},
  number={3-4},
  pages={157--325},
  year={2016}
}

@article{settles2009active,
  title={Active learning literature survey},
  author={Settles, Burr},
  journal={University of Wisconsin-Madison Department of Computer Sciences},
  year={2009}
}

@article{hanneke2014theory,
  title={Theory of disagreement-based active learning},
  author={Hanneke, Steve},
  journal={Foundations and Trends in Machine Learning},
  volume={7},
  number={2-3},
  pages={131--309},
  year={2014}
}

@inproceedings{jiang2018reasoning,
  title={Reasoning about generalization via conditional mutual information},
  author={Steinke, Thomas and Zakynthinou, Lydia},
  booktitle={Conference on Learning Theory},
  pages={3437--3452},
  year={2020},
  organization={PMLR}
}

@article{kleinberg2018human,
  title = {Human Decisions and Machine Predictions},
  author={Kleinberg, Jon and Lakkaraju, Himabindu and Leskovec, Jure and Ludwig, Jens and Mullainathan, Sendhil},
  journal={Quarterly Journal of Economics},
  volume={133},
  number={1},
  pages={237--293},
  year={2018},
  publisher={Oxford University Press},
  issn = {0033-5533},
  doi = {10.1093/qje/qjx032},
  url = {https://doi.org/10.1093/qje/qjx032},
}

@article{kleinberg2015prediction,
  title={Prediction policy problems},
  author={Kleinberg, Jon and Ludwig, Jens and Mullainathan, Sendhil and Obermeyer, Ziad},
  journal={American Economic Review},
  volume={105},
  number={5},
  pages={491--495},
  year={2015},
  publisher={American Economic Association 2014 Broadway, Suite 305, Nashville, TN 37203}
}

@article{besbes2015non,
  title={Non-stationary stochastic optimization},
  author={Besbes, Omar and Gur, Yonatan and Zeevi, Assaf},
  journal={Operations Research},
  volume={63},
  number={5},
  pages={1227--1244},
  year={2015},
  url={https://dl.acm.org/doi/10.5555/3215594.3215610}
}

@inproceedings{russac2019weighted,
  title={Weighted linear bandits for non-stationary environments},
  author={Russac, Yoan and Vernade, Claire and Cappe, Olivier},
  booktitle={Advances in Neural Information Processing Systems},
  volume={32},
  year={2019}
}

@article{hadad2021confidence,
  title={Confidence intervals for policy evaluation in adaptive experiments},
  author={Hadad, Vitor and Hirshberg, David A and Zhan, Ruohan and Wager, Stefan and Athey, Susan},
  journal={Proceedings of the National Academy of Sciences},
  volume={118},
  number={15},
  year={2021},
  pages={e2014602118},
  doi={10.1073/pnas.2014602118 (2021).}
}

@inproceedings{zhan2021off,
  title={Off-policy evaluation via adaptive weighting with data from contextual bandits},
  author={Zhan, Ruohan and Hadad, Vitor and Hirshberg, David A and Athey, Susan},
  booktitle={Proceedings of the 27th ACM SIGKDD Conference on Knowledge Discovery \& Data Mining},
  pages={2125--2135},
  year={2021}
}

@inproceedings{lu2021regret,
  title={Regret analysis of bandit problems with causal background knowledge},
  author={Lu, Yangyi and Meisami, Amirhossein and Tewari, Ambuj and Yan, William},
  booktitle={Conference on Uncertainty in Artificial Intelligence},
  pages={1351--1361},
  year={2021},
  organization={PMLR}
}

@book{rigollet2022statistical,
  title={Statistical Optimal Transport: {\'E}cole d'{\'E}t{\'e} de Probabilit{\'e}s de Saint-Flour XLIX -- 2019},
  editor={Chewi, Sinho and Niles-Weed, Jonathan and Rigollet, Philippe},
  year={2025},
  publisher={Springer},
  series={Lecture Notes in Mathematics},
  volume={2364},
  isbn={978-3-031-85159-9},
  eisbn={978-3-031-85160-5},
  doi={10.1007/978-3-031-85160-5}
}

@article{bastani2021predicting,
  title={Predicting with proxies: Transfer learning in high dimension},
  author={Bastani, Hamsa},
  journal={Management Science},
  volume={67},
  number={5},
  pages={2964--2984},
  year={2021},
  publisher={INFORMS},
  doi={10.1287/mnsc.2020.3729}
}

@article{rodemann2024imprecise,
  title={Imprecise {{Bayesian}} optimization},
  author={Rodemann, Julian and Augustin, Thomas},
  journal={Knowledge-Based Systems},
  volume={300},
  pages={112186},
  year={2024},
  publisher={Elsevier}
}

@book{lattimore2020bandit,
  title={Bandit algorithms},
  author={Lattimore, Tor and Szepesv{\'a}ri, Csaba},
  year={2020},
  publisher={Cambridge University Press},
  isbn={9781108571401},
  doi={10.1017/9781108571401}
}

@inproceedings{zhang2020designing,
  title = {Designing Optimal Dynamic Treatment Regimes: {{A}} Causal Reinforcement Learning Approach},
  author={Zhang, Junzhe and Bareinboim, Elias},
  booktitle = {Proceedings of the 37th {{International Conference}} on {{Machine Learning}}},
  pages={11012--11022},
  year={2020},
  publisher = {PMLR},
  issn = {2640-3498},
  url = {https://proceedings.mlr.press/v119/zhang20a.html},
  eventtitle = {International {{Conference}} on {{Machine Learning}}}
}

@inproceedings{kallus2018balanced,
  title = {Balanced Policy Evaluation and Learning},
  booktitle = {Proceedings of the 32nd {{International Conference}} on {{Neural Information Processing Systems}}},
  author = {Kallus, Nathan},
  year = {2018},
  pages = {8909--8920},
  publisher = {Curran Associates Inc.},
  location = {Red Hook, NY, USA},
  url = {https://dl.acm.org/doi/10.5555/3327546.3327566}
}

@inproceedings{muandet2020counterfactual,
  title={Counterfactual mean embeddings},
  author={Muandet, Krikamol and Kanagawa, Motonobu and Saengkyongam, Sorawit and Marukata, Sanparith},
  booktitle={International Conference on Machine Learning},
  pages={7107--7117},
  year={2021},
  organization={PMLR}
}

@inproceedings{
fischer-abaigar2025the,
title={The Value of Prediction in Identifying the Worst-Off},
author={Unai Fischer-Abaigar and Christoph Kern and Juan Carlos Perdomo},
booktitle={Forty-Second International Conference on Machine Learning},
year={2025},
url={https://openreview.net/forum?id=26JsumCG0z}
}

@article{junquera2025from,
  title   = {From Rules to Forests: Rule-Based versus Statistical Models for Jobseeker Profiling},
  author  = {Junquera, {\'A}lvaro F. and Kern, Christoph},
  journal = {Journal for Labour Market Research},
  volume  = {59},
  number  = {1},
  pages   = {26},
  year    = {2025},
  doi     = {10.1186/s12651-025-00399-w}
}

@misc{liu2025bridging,
  title = {Bridging Prediction and Intervention Problems in Social Systems},
  author = {Liu, Lydia T. and Raji, Inioluwa Deborah and Zhou, Angela and Guerdan, Luke and Hullman, Jessica and Malinsky, Daniel and Wilder, Bryan and Zhang, Simone and Adam, Hammaad and Coston, Amanda and Laufer, Ben and Nwankwo, Ezinne and Zanger-Tishler, Michael and Ben-Michael, Eli and Barocas, Solon and Feller, Avi and Gerchick, Marissa and Gillis, Talia and Guha, Shion and Ho, Daniel and Hu, Lily and Imai, Kosuke and Kapoor, Sayash and Loftus, Joshua and Nabi, Razieh and Narayanan, Arvind and Recht, Ben and Perdomo, Juan Carlos and Salganik, Matthew and Sendak, Mark and Tolbert, Alexander and Ustun, Berk and Venkatasubramanian, Suresh and Wang, Angelina and Wilson, Ashia},
  year = {2026},
  eprint = {2507.05216},
  eprinttype = {arXiv},
  eprintclass = {cs.LG},
  doi = {10.48550/arXiv.2507.05216},
  url = {http://arxiv.org/abs/2507.05216}
}

@inproceedings{perdomo2024relative,
  title={The relative value of prediction in algorithmic decision making},
  author={Perdomo, Juan Carlos},
  booktitle={Proceedings of the 41st International Conference on Machine Learning},
  pages={40439--40460},
  year={2024}
}

@article{hazan2025research,
  title={Research Program: Theory of Learning in Dynamical Systems},
  author={Hazan, Elad and Shwartz, Shai Shalev and Srebro, Nathan},
  journal={arXiv preprint arXiv:2512.19410},
  year={2025}
}

@article{jiang2025tractability,
  title={On tractability, complexity, and mixed-integer convex programming representability of distributionally favorable optimization},
  author={Jiang, Nan and Xie, Weijun},
  journal={Mathematical Programming},
  issn = {1436-4646},
  doi = {10.1007/s10107-025-02299-w},
  url = {https://doi.org/10.1007/s10107-025-02299-w},
  year={2025},
  publisher={Springer}
}

@article{mendler2022anticipating,
  title={Anticipating performativity by predicting from predictions},
  author={Mendler-D{\"u}nner, Celestine and Ding, Frances and Wang, Yixin},
  journal={Advances in neural information processing systems},
  volume={35},
  pages={31171--31185},
  year={2022}
}

@article{morgenstern1928wirtschaftsprognose,
  title={Wirtschaftsprognose: Eine {{Untersuchung}} ihrer {{Voraussetzungen}} und {{M{\"o}glichkeiten}}, {{Wien}} 1928, cited after: G. Betz (2004), {{Empirische}} und aprioristische {{Grenzen}} von {{Wirtschaftsprognosen}}: {{Oskar Morgenstern}} nach 70 {{Jahren}}},
  author={Morgenstern, O},
  journal={Wissenschaftstheorie in {\"O}konomie und Wirtschaftsinformatik, Deutscher Universit{\"a}ts-Verlag, Wiesbaden},
  pages={171--190},
  year={1928}
}

@book{soros2015alchemy,
  title={The Alchemy of Finance: Reading the Mind of the Market},
  author={Soros, George},
  year={1994},
  publisher={John Wiley \& Sons}
}

@inproceedings{
li2025statistical,
title={Statistical Inference under Performativity},
author={Xiang Li and Yunai Li and Huiying Zhong and Lihua Lei and Zhun Deng},
booktitle={The Thirty-Ninth Annual Conference on Neural Information Processing Systems},
year={2025},
url={https://openreview.net/forum?id=vbn7VvloTd}
}

@book{neurath1911nationalokonomie,
  title={National{\"o}konomie und Wertlehre: eine systematische Untersuchung},
  author={Neurath, Otto},
  year={1911}
}

@article{benenati2024probabilistic,
  title={Probabilistic game-theoretic traffic routing},
  author={Benenati, Emilio and Grammatico, Sergio},
  journal={IEEE Transactions on Intelligent Transportation Systems},
  volume={25},
  number={10},
  pages={13080--13090},
  year={2024},
  publisher={IEEE},
  doi={10.1109/TITS.2024.3399112}
}

@article{hardt2025performative,
  title={Performative prediction: Past and future},
  author={Hardt, Moritz and Mendler-D{\"u}nner, Celestine},
  journal={Statistical Science},
  volume={40},
  number={3},
  pages={417--436},
  year={2025},
  publisher={Institute of Mathematical Statistics}
}

@book{demoivre1738,
  author    = {de Moivre, Abraham},
  title     = {The Doctrine of Chances: Or, a Method of Calculating the Probabilities of Events in Play},
  edition   = {2},
  year      = {1738},
  address   = {London},
  publisher = {H. Woodfall}}

@article{khandani2010consumer,
  title={Consumer credit-risk models via machine-learning algorithms},
  author={Khandani, Amir E and Kim, Adlar J and Lo, Andrew W},
  journal={Journal of Banking \& Finance},
  volume={34},
  number={11},
  pages={2767--2787},
  year={2010},
  publisher={Elsevier}
}

@inproceedings{vo2024causal,
  title={Causal strategic learning with competitive selection},
  author={Vo, Kiet QH and Aadil, Muneeb and Chau, Siu Lun and Muandet, Krikamol},
  booktitle={Proceedings of the AAAI Conference on Artificial Intelligence},
  volume={38},
  number={14},
  pages={15411--15419},
  year={2024}
}

@article{bian2023influencer,
  title={The Influencer Copycats},
  author={Bian, Haiyang and Li, Emma and Liu, Lu and Wang, Zhengwei},
  journal={PBCSF-NIFR Research Paper},
  year={2023}
}

@inproceedings{perdomo2025difficult,
  title={Difficult lessons on social prediction from {{Wisconsin}} public schools},
  author={Perdomo, Juan Carlos and Britton, Tolani and Hardt, Moritz and Abebe, Rediet},
  booktitle={Proceedings of the 2025 ACM Conference on Fairness, Accountability, and Transparency},
  pages={2682--2704},
  year={2025}
}

@article{perdomo2025revisiting,
  title={Revisiting the predictability of performative, social events},
  author={Perdomo, Juan C},
  journal={arXiv preprint arXiv:2503.11713},
  year={2025}
}

@incollection{von2011statistical,
  title={Statistical learning theory: Models, concepts, and results},
  author={Von Luxburg, Ulrike and Sch{\"o}lkopf, Bernhard},
  booktitle={Handbook of the History of Logic},
  volume={10},
  pages={651--706},
  year={2011},
  publisher={Elsevier}
}

@article{dudley1987universal,
  title={Universal {{Donsker}} Classes and Metric Entropy},
  author={Dudley, RM},
  journal={The Annals of Probability},
  volume={15},
  number={4},
  pages={1306--1326},
  year={1987},
  publisher={Institute of Mathematical Statistics}
}

@book{mackenzie2008engine,
  title={An engine, not a camera: How financial models shape markets},
  author={MacKenzie, Donald},
  year={2008},
  publisher={{{MIT}} Press}
}

@article{slivkins2019introduction,
  title={Introduction to multi-armed bandits},
  author={Slivkins, Aleksandrs and others},
  journal={Foundations and Trends{\textregistered} in Machine Learning},
  volume={12},
  number={1-2},
  pages={1--286},
  year={2019},
  publisher={Now Publishers, Inc.}
}

@book{sutton1998reinforcement,
  title={Reinforcement Learning: An Introduction},
  author={Sutton, Richard S and Barto, Andrew G and others},
  year={1998},
  publisher={MIT Press},
  location={ Cambridge}
}

@article{bongratz2024choose,
  title={How to choose a reinforcement-learning algorithm},
  author={Bongratz, Fabian and Golkov, Vladimir and Mautner, Lukas and Della Libera, Luca and Heetmeyer, Frederik and Czaja, Felix and Rodemann, Julian and Cremers, Daniel},
  journal={arXiv preprint arXiv:2407.20917},
  year={2024}
}

@article{kaelbling1996reinforcement,
  title={Reinforcement learning: A survey},
  author={Kaelbling, Leslie Pack and Littman, Michael L and Moore, Andrew W},
  journal={Journal of Artificial Intelligence Research},
  volume={4},
  pages={237--285},
  year={1996}
}

@inproceedings{rodemann2022not,
  title={Not All Data Are Created Equal: Lessons From Sampling Theory For Adaptive Machine Learning},
  author={Rodemann, Julian and Fischer, Sebastian and Schneider, Lennart and Nalenz, Malte and Augustin, Thomas},
  year={2022},
  booktitle={International Conference on Statistics and Data Science}
}

@unpublished{rodemann-BO-isipta,
  title={Accounting for Imprecision of Model Specification in {B}ayesian Optimization},
  author={Rodemann, Julian and Augustin, Thomas},
  year={2021},
  note={\textit{Poster presented at International Symposium on Imprecise Probabilities (ISIPTA)}}
}

@inproceedings{zhang2021statistical,
  title={Statistical inference with {{M}}-estimators on adaptively collected data},
  author={Zhang, Kelly and Janson, Lucas and Murphy, Susan},
  booktitle={Advances in Neural Information Processing Systems},
  volume={34},
  pages={7460--7471},
  year={2021}
}

@article{haltia2026elicitation,
  title={Elicitation-Augmented Bayesian Optimization},
  author={Haltia, Alvar and Hyv{\"o}nen, Ville and Kaski, Samuel},
  journal={arXiv preprint arXiv:2605.12079},
  year={2026}
}

@inproceedings{rodemann2022accounting,
  title={Accounting for {G}aussian Process Imprecision in {B}ayesian Optimization},
  author={Rodemann, Julian and Augustin, Thomas},
  booktitle={International Symposium on Integrated Uncertainty in Knowledge Modelling and Decision Making (IUKM)},
  pages={92--104},
  year={2022},
  organization={Springer}
}

@book{shalev2014understanding,
  title={Understanding machine learning: {{From}} Theory to Algorithms},
  author={Shalev-Shwartz, Shai and Ben-David, Shai},
  year={2014},
  publisher={Cambridge University Press}
}

@article{kantorovich1958space,
  title={On a space of totally additive functions},
  author={Kantorovich, Leonid Vasilevich and Rubinstein, SG},
  journal={Vestnik of the St. Petersburg University: Mathematics},
  volume={13},
  number={7},
  pages={52--59},
  year={1958},
  publisher={Allerton Press, Inc.}
}

@book{vapnik1998statistical,
  title = {Statistical Learning Theory},
  author = {Vapnik, Vladimir Naumovich},
  year = {1998},
  publisher = {Wiley},
  location = {New York},
  isbn = {978-0-471-03003-4}
}

@inproceedings{rodemann24reciprocal,
  title={Reciprocal Learning},
  author={Rodemann, Julian and Jansen, Christoph and Schollmeyer, Georg},
  booktitle={Advances in Neural Information Processing Systems},
  volume={37},
 pages={},
  year={2024}
}

@book{talagrand2014upper,
  title = {Upper and Lower Bounds for Stochastic Processes: Decomposition Theorems},
  author = {Talagrand, Michel},
  year = {2021},
  publisher = {Springer},
  location = {Cham},
  doi = {10.1007/978-3-030-82595-9}
}

@article{kolmogorov1959varepsilon,
  title={$\varepsilon$-entropy and $\varepsilon$-capacity of sets in function spaces},
  author={Kolmogorov, Andrei Nikolaevich and Tikhomirov, Vladimir Mikhailovich},
  journal={Uspekhi Matematicheskikh Nauk},
  volume={14},
  number={2},
  pages={3--86},
  year={1959},
  publisher={Russian Academy of Sciences, Steklov Mathematical Institute of Russian}
}

@article{bartlett2002rademacher,
  title={Rademacher and {G}aussian complexities: Risk bounds and structural results},
  author={Bartlett, Peter L and Mendelson, Shahar},
  journal={Journal of Machine Learning Research},
  volume={3},
  number={},
  pages={463--482},
  year={2002},
  publisher = {JMLR.org},
  issn = {1532-4435},
  url={https://www.jmlr.org/papers/volume3/bartlett02a/bartlett02a.pdf}
}

@article{gao2023distributionally,
  title={Distributionally robust stochastic optimization with {{Wasserstein}} distance},
  author={Gao, Rui and Kleywegt, Anton},
  journal={Mathematics of Operations Research},
  volume={48},
  number={2},
  pages={603--655},
  year={2023},
  publisher={INFORMS}
}

@inproceedings{vapnik1991principles,
  title = {Principles of Risk Minimization for Learning Theory},
  booktitle = {Proceedings of the 5th {{International Conference}} on {{Neural Information Processing Systems}}},
  author = {Vapnik, Vladimir},
  year = {1991},
  pages = {831--838},
  publisher = {Morgan Kaufmann Publishers Inc.},
  location = {San Francisco, CA, USA},
  url = {https://dl.acm.org/doi/10.5555/2986916.2987018},
  isbn = {978-1-55860-222-9}
}

@inproceedings{brown2022performative,
  title={Performative prediction in a stateful world},
  author={Brown, Gavin and Hod, Shlomi and Kalemaj, Iden},
  booktitle = {Proceedings of {{The}} 25th {{International Conference}} on {{Artificial Intelligence}} and {{Statistics}}},
  pages={6045--6061},
  year={2022},
  organization={PMLR},
  issn = {2640-3498},
  url = {https://proceedings.mlr.press/v151/brown22a.html},
  eventtitle = {International {{Conference}} on {{Artificial Intelligence}} and {{Statistics}}}
}

@book{vapnik2013nature,
  title={The Nature of Statistical Learning Theory},
  author={Vapnik, Vladimir},
  year={1999},
  publisher={Springer Science \& Business Media}
}

@inproceedings{lee2018minimax,
author = {Lee, Jaeho and Raginsky, Maxim},
title={Minimax statistical learning with {{Wasserstein}} distances},
year = {2018},
publisher = {Curran Associates Inc.},
address = {Red Hook, NY, USA},
booktitle = {Proceedings of the 32nd International Conference on Neural Information Processing Systems},
pages = {2692–2701},
location = {Montr\'{e}al, Canada}
}

@article{vapnik1968uniform,
	title={The uniform convergence of frequencies of the appearance of events to their probabilities},
	author={Vapnik, Vladimir N and Chervonenkis, Aleksei Yakovlevich},
	journal={Doklady Akademii Nauk},
	volume={181},
	number={4},
	pages={781--783},
	year={1968},
	organization={Russian Academy of Sciences}
}

@inproceedings{li2025performative,
  title={Performative Risk Control: Calibrating Models for Reliable Deployment under Performativity},
  author={Li, Victor and Chen, Baiting and Mao, Yuzhen and Lei, Qi and Deng, Zhun},
  booktitle={The 39th Conference on Neural Information Processing Systems},
  year={2025}
}

@inproceedings{zaloukmultivariate,
  title={Multivariate Calibration is Performative: A Perspective on Pitfalls and Progress},
  author={Zalouk, Sofian and Marx, Charles and Belakaria, Syrine and De Sa, Christopher and Ermon, Stefano},
  booktitle={1st ICML Workshop on Foundation Models for Structured Data},
  year={2025},
  url={https://openreview.net/pdf?id=kBEwAecyLa}
}

@inproceedings{boeken2025conditional,
title={Conditional Forecasts and Proper Scoring Rules for Reliable and Accurate Performative Predictions},
author={Philip Boeken and Onno Zoeter and Joris M. Mooij},
booktitle={The Thirty-Ninth Annual Conference on Neural Information Processing Systems},
year={2025},
url={https://openreview.net/forum?id=0KAdFd5zku}
}

@article{kirev2025pac,
  title={{{PAC}} Learnability in the Presence of Performativity},
  author={Kirev, Ivan and Baltadzhiev, Lyuben and Konstantinov, Nikola},
  journal={arXiv preprint arXiv:2510.08335},
  year={2025}
}

@article{fournier2015rate,
  title={On the rate of convergence in {{Wasserstein}} distance of the empirical measure},
  author={Fournier, Nicolas and Guillin, Arnaud},
  journal={Probability theory and related fields},
  volume={162},
  number={3},
  pages={707--738},
  year={2015},
  publisher={Springer}
}

@inproceedings{mofakhami2023performative,
  title={Performative prediction with neural networks},
  author={Mofakhami, Mehrnaz and Mitliagkas, Ioannis and Gidel, Gauthier},
  booktitle={International Conference on Artificial Intelligence and Statistics},
  pages={11079--11093},
  year={2023},
  organization={PMLR}
}

@article{bagabaldo2024impact,
  title={Impact of navigation apps on congestion and spread dynamics on a transportation network},
  author={Bagabaldo, Alben Rome and Gan, Qianxin and Bayen, Alexandre M and Gonz{\'a}lez, Marta C},
  journal={Data Science for Transportation},
  volume={6},
  number={2},
  pages={12},
  year={2024},
  publisher={Springer},
  doi={10.1007/s42421-024-00099-w}
}

@phdthesis{cabannes2019capturing,
  title={Capturing the impact of navigational app usage on road traffic from a game theory point of view},
  author={Cabannes, Theophile},
  year={2019},
school={UC Berkeley}
}

@phdthesis{perdomo2023performative,
  title={Performative Prediction: Theory and Practice},
  author={Perdomo, Juan},
  year={2023},
  school={UC Berkeley}
}

@inproceedings{zhang2021incentive,
  title={Incentive-aware {{PAC}} learning},
  author={Zhang, Hanrui and Conitzer, Vincent},
  booktitle={Proceedings of the AAAI Conference on Artificial Intelligence},
  volume={35},
  number={6},
  pages={5797--5804},
  year={2021}
}

@article{mendler2020stochastic,
  title={Stochastic optimization for performative prediction},
  author={Mendler-D{\"u}nner, Celestine and Perdomo, Juan and Zrnic, Tijana and Hardt, Moritz},
  journal={Advances in Neural Information Processing Systems},
  volume={33},
  pages={4929--4939},
  year={2020}
}

@article{bach2023impact,
  title={The impact of modeling decisions in statistical profiling},
  author={Bach, Ruben L and Kern, Christoph and Mautner, Hannah and Kreuter, Frauke},
  journal={Data \& Policy},
  volume={5},
  pages={e32},
  year={2023},
  publisher={Cambridge University Press},
  doi={10.1017/dap.2023.29}
}

@inproceedings{
xue2024distributionally,
title={Distributionally Robust Performative Prediction},
author={Songkai Xue and Yuekai Sun},
booktitle={The Thirty-Eighth Annual Conference on Neural Information Processing Systems},
year={2024},
url={https://openreview.net/forum?id=E8wDxddIqU}
}

@inproceedings{
jia2025distributionally,
title={Distributionally Robust Performative Optimization},
author={Zhuangzhuang Jia and Yijie Wang and Roy Dong and Grani A. Hanasusanto},
booktitle={The Thirty-Ninth Annual Conference on Neural Information Processing Systems},
year={2025},
url={https://openreview.net/forum?id=57MeabE6QU}
}

@inproceedings{chaney2018algorithmic,
  author    = {Chaney, Allison J. B. and Stewart, Brandon M. and Engelhardt, Barbara E.},
  title     = {How Algorithmic Confounding in Recommendation Systems Increases Homogeneity and Decreases Utility},
  booktitle = {Proceedings of the 12th ACM Conference on Recommender Systems},
  series    = {RecSys '18},
  pages     = {224--232},
  year      = {2018},
  publisher = {Association for Computing Machinery},
  address   = {New York, NY, USA},
  doi       = {10.1145/3240323.3240370},
  url       = {https://arxiv.org/abs/1710.11214}
}

@article{rodemann2025generalization,
  title={Generalization Bounds and Stopping Rules for Learning with Self-Selected Data},
  author={Rodemann, Julian and Bailie, James},
  journal={arXiv preprint arXiv:2505.07367},
  year={2025}
}

@article{DBLP:journals/jmlr/CutlerDD24,
  author  = {Joshua Cutler and
             Mateo D{\'{\i}}az and
             Dmitriy Drusvyatskiy},
  title   = {Stochastic Approximation with Decision-Dependent Distributions:
             Asymptotic Normality and Optimality},
  journal = {Journal of Machine Learning Research},
  volume  = {25},
  pages   = {90:1--90:49},
  year    = {2024},
  url     = {https://jmlr.org/papers/v25/22-0832.html}
}

@inproceedings{pmlr-v258-bracale25a,
  title     = {Microfoundation inference for strategic prediction},
  author    = {Bracale, Daniele and Maity, Subha and Polo, Felipe Maia
               and Somerstep, Seamus and Banerjee, Moulinath and Sun, Yuekai},
  booktitle = {Proceedings of The 28th International Conference on Artificial Intelligence and Statistics},
  pages     = {919--927},
  year      = {2025},
  editor    = {Li, Yingzhen and Mandt, Stephan and Agrawal, Shipra and Khan, Emtiyaz},
  volume    = {258},
  series    = {Proceedings of Machine Learning Research},
  month     = {03--05 May},
  publisher = {PMLR},
  url       = {https://proceedings.mlr.press/v258/bracale25a.html}
}

@inproceedings{w-2,
  title = 	 {Regret Minimization with Performative Feedback},
  author =       {Jagadeesan, Meena and Zrnic, Tijana and Mendler-D{\"u}nner, Celestine},
  booktitle = 	 {Proceedings of the 39th International Conference on Machine Learning},
  pages = 	 {9760--9785},
  year = 	 {2022},
  editor = 	 {Chaudhuri, Kamalika and Jegelka, Stefanie and Song, Le and Szepesvari, Csaba and Niu, Gang and Sabato, Sivan},
  volume = 	 {162},
  series = 	 {Proceedings of Machine Learning Research},
  publisher =    {PMLR},
  url = 	 {https://proceedings.mlr.press/v162/jagadeesan22a.html}
}

@inproceedings{w-3,
  title={Multi-agent performative prediction with greedy deployment and consensus seeking agents},
  author={Li, Qiang and Yau, Chung-Yiu and Wai, Hoi-To},
  booktitle={Advances in Neural Information Processing Systems},
  volume={35},
  pages={38449--38460},
  year={2022}
}

@article{w-4,
  title={Multiplayer performative prediction: Learning in decision-dependent games},
  author={Narang, Adhyyan and Faulkner, Evan and Drusvyatskiy, Dmitriy and Fazel, Maryam and Ratliff, Lillian J},
  journal={Journal of Machine Learning Research},
  volume={24},
  number={202},
  pages={1--56},
  year={2023}
}

@inproceedings{w-5,
  title={Network effects in performative prediction games},
  author={Wang, Xiaolu and Yau, Chung-Yiu and Wai, Hoi To},
  booktitle={International Conference on Machine Learning},
  pages={36514--36540},
  year={2023},
  organization={PMLR}
}

@inproceedings{w-6,
author = {Li, Qiang and Wai, Hoi-To},
title = {Stochastic optimization schemes for performative prediction with nonconvex loss},
year = {2024},
isbn = {9798331314385},
publisher = {Curran Associates Inc.},
address = {Red Hook, NY, USA},
booktitle = {Proceedings of the 38th International Conference on Neural Information Processing Systems},
pages={8673--8697},
location = {Vancouver, BC, Canada}
}

@inproceedings{w-7,
  title={Addressing polarization and unfairness in performative prediction},
  author={Jin, Kun and Xie, Tian and Liu, Yang and Zhang, Xueru},
  booktitle={Proceedings of the AAAI Conference on Artificial Intelligence},
  volume={40},
  pages={22408--22416},
  year={2026}
}

@article{w-8,
  title={Making decisions under outcome performativity},
  author={Kim, Michael P and Perdomo, Juan C},
  journal={arXiv preprint arXiv:2210.01745},
  year={2022}
}

@article{sundaram2023pac,
  title={Pac-learning for strategic classification},
  author={Sundaram, Ravi and Vullikanti, Anil and Xu, Haifeng and Yao, Fan},
  journal={Journal of Machine Learning Research},
  volume={24},
  number={192},
  pages={1--38},
  year={2023}
}
\bibliographystyle{icml2026}

\appendix
\onecolumn

\section{Generalization Bounds on Stable and Optimal Models}\label{app:stable-optimal}

Under appropriate regularity conditions (strong convexity and smoothness of the loss function, as well as Lipschitz continuity of the distribution mapping), the sequence of RRM 
converges to a \emph{stable pair} \((\theta_S, d_S)\) where \(d_S = \mathrm{Tr}(d_S,\theta_S) \) is a fixed-point distribution for \(\theta_S\) and \(\theta_S \in G(d_S)\)\jb{I think it should be made a bit more clear that this result is from Brown et al. E,g, move the Brown et al. reference from the next sentence to this sentence and then replace the next sentence with: ``They additionally show..."}. \citet[Theorem~6]{brown2022performative} further show that $\theta_S$ is close to the optimal $\theta_{\mathrm{OPT}}$, which is defined as the parameter $\theta$ that achieves the minimal long-run loss $\mathbb E_{Z \sim d_\theta} \ell(Z, \theta)$ on its fixed-point distribution $d_\theta$ resulting from repeatedly applying $\mathrm{Tr}$ to $\theta$ ($d_\theta$ is unique---i.e., it does not depend on the starting distribution $d_0$---due to the Banach fixed-point theorem).
In other words, R(E)RM under performativity might retrieve stable models $\widehat \theta_S$ (stable on sample) and $ \theta_S$ (stable on population) or even optimal ones $\widehat \theta_{\mathrm{OPT}}$ and $\theta_{\mathrm{OPT}}$ under conditions detailed in \citet{perdomo2020performative, brown2022performative}. 

These models are subsumed by our analysis. They constitute special cases of $\widehat \theta_t$ and $\theta_t$, respectively. For instance, the performative excess risks of stable and optimal models $\widehat \theta_{\mathrm{S}}$ and $\widehat \theta_{\mathrm{OPT}}$  are
\begin{equation*}
\mathscr R(\mathrm{Tr}(d_0, \widehat \theta_{\mathrm{S}}),\widehat \theta_{\mathrm{S}}) - \inf _{\theta \in \Theta} \mathscr R(\mathrm{Tr}(d_0, \widehat \theta_{\mathrm{S}}), \theta),
\end{equation*}
and
\begin{equation*}
\mathscr R(\mathrm{Tr}(d_0, \widehat \theta_{\mathrm{OPT}}),\widehat \theta_{\mathrm{OPT}}) - \inf _{\theta \in \Theta} \mathscr R(\mathrm{Tr}(d_0, \widehat \theta_{\mathrm{OPT}}), \theta),
\end{equation*}
respectively.

\section{Further Related Work}\label{app:furth-rel-work}

Recent years have seen several articles tangential to generalization under performativity. 
As mentioned above, \citet{li2025performative,boeken2025conditional,zaloukmultivariate} study calibration under performativity: They investigate the discrepancy between predicted and true \textit{conditional probabilities} under performativity, whereas our work studies the discrepancy between empirical and true \textit{risk} under performativity.

\citet{zhang2021incentive} study generalization under strategic reactions to predictions, i.e., performative shifts of $X$. \citet{kirev2025pac} study PAC learnability of binary classification under (linear) performative shifts of $Y\mid X$ and marginal $X$, which is a special case of~\RQref{rq:2}~(a) (see Table~\ref{tab:all-cases}). 
Our framework is more general than both these works in two ways. First, it accounts for all Lipschitz continuous transition maps, as in \citet{perdomo2020performative}. Second, it studies performative shifts on any subsets of $Y$ and $X$.

\citet{mendler2022anticipating} study the causal effects of predictions on outcomes $Y$. If valid causal estimates can be obtained from (typically) observational data, they can improve generalization \citep[see also][]{konig2025performative}.
Akin to the trade-off between performatively changing a sample and learning from it, \citet{wilder2025learning} identify a trade-off between treating those in need and learning about population level quantities. Generally, causal inference \citep{pearl2009causality,peters2017elements} can model performative effects through structural causal models, although it typically requires strong identifiability assumptions. Our approach treats $\mathrm{Tr}$ as a black box, connecting to work on distribution shift \citep{quinonero2009dataset,shimodaira2000improving,gama2014survey} and domain adaptation \citep{muandet2013domain,zhang2013domain}. However, unlike passive distribution shift, performative prediction involves active shift where the learner's model causes the change. Recent advances in learning under distribution shift via kernel mean embeddings \citep{muandet2017kernel,muandet2020counterfactual} and domain generalization \citep{muandet2013domain,blanchard2021domain} share our distributional perspective, but our focus is on finite-sample bounds under performative feedback rather than adapting across fixed domains.

Technically related, \citet{rodemann24reciprocal} and \citet{rodemann2025generalization} study learning from samples that are adaptively shifted by models \citep[see also][]{hazan2025research}. While relying on Wasserstein ambiguity sets just like in Theorems~\ref{thm:gen-gap-stronger-ass} and~\ref{thm:gen-gap-stronger-ass-2}, their work is conceptually different: the models there have full control over samples, while performative effects here have to be estimated.
Another area of study that is technical related to our analysis is the line of work on distributionally robust performative optimization \cite{jia2025distributionally} and prediction \cite{xue2024distributionally} (see also Section~\ref{sec:discussion}). These works use ambiguity sets to robustify optimization and prediction, respectively, against performative distribution shifts. Besides studying generalization instead of prediction or optimization, our work differs in the way ambiguity sets are specified: we estimate them from the sample's performative reactions instead of specifying them \textit{a priori}.

\citet{DBLP:journals/jmlr/CutlerDD24} analyze stochastic approximation with decision-dependent sampling distributions, but do not study finite-sample generalization from samples to populations under performativity (like the article at hand). \citet{pmlr-v258-bracale25a} develop a framework for micro-foundation inference in strategic prediction, focusing on recovering agents’ payoff structures from observed strategic responses rather than on learning-theoretic guarantees. Complementary to both, we derive nonasymptotic generalization and excess-risk bounds for empirical risk minimization when sample and population distributions evolve in response to predictions.

Our work differs from strategic classification \citep{hardt2016strategic,dong2018strategic,miller2020strategic,sundaram2023pac}, which assumes agents respond best to predictions with explicit manipulation costs, and from mechanism design \citep{nisan2007algorithmic,roughgarden2010algorithmic}, which assumes rational agents and equilibrium concepts. We instead provide generalization guarantees under minimal structural assumptions—requiring only Lipschitz continuity of the transition map—making our results applicable when the exact nature of performative effects (strategic, behavioral, or intervention-based) is unknown.

The RERM framework has connections to online learning \citep{cesa2006prediction,hazan2016introduction}, where models are updated sequentially as new data arrives. However, standard online learning assumes the data distribution is either fixed or evolves independently of the learner's actions, whereas performative prediction creates endogenous nonstationarity. This connects our work to active learning \citep{settles2009active,balcan2010true,hanneke2007bound,hanneke2014theory}, where the learner adaptively selects which data to observe, and to recent work on learning with feedback loops \citep{jiang2018reasoning,chaney2018algorithmic}. Our bounds reveal that retraining under performativity differs fundamentally from classical nonstationary online learning \citep{besbes2015non,russac2019weighted}: the distribution shift is bounded but adversarial (in the sense of creating worst-case generalization gaps), and the learner partially controls this shift through deployment of the model (e.g., through the share $\xi$ of units receiving a treatment, see Section~\ref{sec:discussion}). Recent advances in adaptive experimentation \citep{hadad2021confidence,zhan2021off,zhang2021statistical} and sequential decision-making under distribution shift \citep{lu2021regret,rigollet2022statistical} study related phenomena, but typically assume the ability to randomize interventions or maintain a control group---luxuries often unavailable in performative settings where models are deployed system-wide.

Our identification of a fundamental trade-off between changing data and learning from it relates to work on the value of predictions in decision-making \citep{kleinberg2015prediction,kleinberg2018human} and optimal resource allocation under uncertainty \citep{bastani2021predicting}. While these works study how prediction quality affects downstream decisions, we show that the act of deploying predictions can degrade future prediction quality through performative effects. This connects to the exploration--exploitation trade-off in bandit problems \citep{lattimore2020bandit,slivkins2019introduction,rodemann-BO-isipta,rodemann2022accounting,rodemann2024imprecise,rodemann2022not,haltia2026elicitation} and reinforcement learning \citep{sutton1998reinforcement,kaelbling1996reinforcement,bongratz2024choose}, where learning requires balancing information gathering with reward maximization. Similarly, recent work on intervention-aware machine learning \citep{zhang2020designing} and prediction-informed resource allocation \citep{kallus2018balanced} recognizes that predictive models can serve dual purposes: informing decisions and causing outcomes. Our bounds formalize the cost of this duality, showing that more aggressive interventions (larger $\nicefrac{m}{n}$) improve outcomes but worsen generalization. 

\section{Details on Case Study}\label{app:details-case-study}

\subsection{Data Source, Pre-Processing and General Setup}\label{app.details-data}

We make use of a real dataset on German job seekers. The data contains information on job-seeker demographics, employment history, and benefits receipt—a rich panel spanning a 2\% sample of all administrative labor market records in Germany. Data access was provided via a Scientific Use File from the Research Data Centre (FDZ) of the German Federal Employment Agency (BA) at the Institute for Employment Research (IAB) \cite{SchmuckerVomBerge2023SIAB}. 

\textbf{Reproducibility note:} The data are not publicly available due to their sensitive nature. Access can be requested through the Research Data Centre (FDZ) of the Institute for Employment Research (IAB).\footnote{See \url{https://iab.de/en/unit/?id=17}.} We provide all preprocessing and analysis code to enable replication for authorized users. For further details on the sampling procedure, feature construction, data sources, and weak anonymization procedures, see the official data report \citep{SchmuckerVomBerge2023FDZD2307}.

We construct a prediction task of unemployment duration from this data source, following \cite{kern2021fairness}. 
We follow the spell aggregation procedures and feature construction described in \citet{bach2023impact, fischer-abaigar2025the}. We construct covariates capturing demographics (e.g., gender and educational background), information on the last job prior to unemployment, and broader labor market and benefit histories. Some features are frozen at the start of the unemployment spell. Features that evolve mechanically over time (e.g., age and elapsed unemployment duration) are discretized into bins to obtain a more stable job-seeker cohort and to facilitate the analysis of our bounds. In total, we use 8 ordinal and 20 categorical features. Categorical variables are one-hot encoded, and ordinal variables are mapped to the unit interval $[0,1]$.

We train a simple logistic regression model in scikit-learn with L2 regularization (regularization strength $C = 1$). Predictive performance is approximately 0.60--0.65 AUC on test sets. Low predictive accuracy is broadly consistent with results in the related literature. Importantly, our objective was not to build the best possible predictor and feature set, but to mimic the modeling choices and data constraints typically faced in employment offices.

\subsubsection{Setup for Generalization Gap Bound on Historical Job Seeker Data (See~\ref{app:details-bound-gap})}

For our analysis of the generalization gap bound computed on historical data (Appendix~\ref{app:details-bound-gap}), we predict over a relatively short time span of 14 days to analyze a fairly stable job-seeker cohort, mimicking an employment office that would continuously retrain and re-predict unemployment outcomes. The interval between model retraining and re-evaluation is also set to 14 days. We further applied a Principal Component Analysis (PCA), reducing the feature set for 28 features to four main features. We focus on a cohort of job seekers who entered unemployment during 2012---we selected this year because it is after major labor market reforms in Germany and before the COVID-19 pandemic. This yields a cohort of roughly 50,000--60,000 individual job seekers, which we further split into training and test sets.

\subsubsection{Setup for Bound on Semi-Simulated Data (See~\ref{subsec:semi-simulation})}

For the semi-simulation study (Appendix~\ref{subsec:semi-simulation}), we focus on individuals 60 days after they entered unemployment to avoid large instabilities at the beginning of spells, as many job seekers find employment very quickly or spend only a short time unemployed. Here, we do not conduct a PCA and work with the full set of 28 features (see below). We set $t=0$ at the start of the unemployment spell but pre-filter the cohort to individuals who remain unemployed for at least 7 days.

\subsection{Details on the Generalization Gap Bound on Historical Job Seeker Data}\label{app:details-bound-gap}

As discussed in Section~\ref{sec:job-trainings},  we are using the popular logistic loss $\ell(y,x,\theta)=\log\bigl(1+\exp(-y\langle\theta,x\rangle)\bigr)$ , which is $\kappa$-continuously differentiable by Condition~\ref{ass-cont-diff-feat-param}. 
Its gradient with respect to $\theta$ is given by $\nabla_\theta \ell(y,x,\theta)=-y\,\sigma(-y\langle\theta,x\rangle)x$, where $\sigma(u)=\frac{1}{1+e^{-u}}$. 
Using $\|\theta\|\leq \mathscr D_\Theta < \infty$ for all $\theta \in \Theta$ and $\|x\|\le \mathscr{D}_\mathcal{Z} < \infty$ for all $x \in \mathcal{Z}$, so that $|y\langle\theta,x\rangle|\leq \mathscr{D}_\mathcal{Z} \mathscr{D}_\Theta$, $\nabla_\theta \ell(y,x,\theta) = -y\,\sigma(-y\langle\theta,x\rangle)x$ is upper bounded by 
\[
 L_\ell=\frac{\mathscr{D}_\mathcal{Z}}{1+e^{-\mathscr{D}_\mathcal{Z} \mathscr{D}_\Theta}},   
\]
 which proves the loss is $L_\ell$-Lipschitz in $\theta$.

We apply Principal Component Analysis (PCA) to the feature set, resulting in four features, scaled into $[0,1]$, which (together with $\mathcal{Y} = \{0,1\}$) gives $\mathcal{D}_\mathcal{Z} = \sqrt{5}$. Moreover, we set $\Theta =[-1000,1000]^5$, which has a diameter of $\mathcal{D}_\Theta = 2000\sqrt{5}$. Plugging this into $ L_\ell=\frac{\mathscr{D}_\mathcal{Z}}{1+e^{-\mathscr{D}_\mathcal{Z} \mathscr{D}_\Theta}}$ gives \[L_\ell = \frac{\sqrt{5}}{1+e^{-\sqrt{5} \cdot 2000\sqrt{5}}} = \frac{\sqrt{5}}{1+e^{-10000}} \approx \sqrt{5} \approx 2.236.\]

We further have (by PCA reduction) \(\nu=4\). We use Wasserstein order \(p=2\) and take \(C_a=C_b=1\) \citep{fournier2015rate}. As explained in Section~\ref{sec:job-trainings} and~\ref{app.details-data}, we employ L2-regularization with regularization strenght $C=1$, rendering the loss $\gamma$-strongly convex with $\gamma = \frac{1}{C}$, i.e.,
\[
\mathscr R(d,\theta)
=\mathbb E_d[\ell(\theta;z)]+\frac{\lambda}{2}\|\theta\|^2.\]
A standard sensitivity bound for strongly convex objectives gives
\[
\|G(d)-G(d')\|\;\le\;\frac{1}{\gamma}\,
\sup_{\theta}\big\|\nabla_\theta \mathscr R(d,\theta)
-\nabla_\theta \mathscr R(d',\theta)\big\|.
\] 
Moreover, for logistic regression, the per-example gradient is bounded by the feature norm:
\[
\|\nabla_\theta \ell(\theta;(x,y))\|\le \|x\|\le D_{\mathcal X}=\sqrt4 =2,
\]
so a conservative Lipschitz constant of $G$ is
\[
L_a \;\lesssim\; \frac{D_{\mathcal X}}{\gamma} = \frac{2}{1}=2.
\]

Now recall Corollary~\ref{cor:gen-err-bound}, which states that, in the setting of Theorem~\ref{thm:exc-risk} (Conditions~\ref{cond:loss-lipschitz-convex}--\ref{ass-cont-diff-feat-param}), we have
\[\mathscr R(d_0, \widehat{\theta}_T) -  \mathscr R(\widehat{d}_{T-1}, \widehat{\theta}_T) \leq   L_\ell \left(\frac{\log \left(C_a / \delta\right)}{C_{b} n}\right)^{\frac{1}{\nu}} +    \frac{\varepsilon^{T}(1+L_a)^T - 1}{\varepsilon(1+L_a) - 1} \, \left(\frac{m }{n}\right)^{\frac{1}{p}} \mathscr D_{\mathcal{Z}},\]
with probability over $d_0$ of at least $1-2\delta$.

The first summand is the sampling term. In our setup, using \(n=60147\), \(m=1816\), \(L_\ell=2.236\), \(\mathscr D_{\mathcal Z}=\sqrt{5}\) with \(p=2\), \(\nu=4\), \(C_a=C_b=1\), \(\delta=0.025\), and \(T=2\),
the sampling term becomes
\[
L_\ell\left(\frac{\log(1/\delta)}{n}\right)^{1/2}
=
2.236\sqrt{\frac{\log 40}{60147}}
\approx 0.017511.
\]

Further, using \(\varepsilon = m/n\) and \(L_\ell^{-1} = 1/2.236\) as well as \(\mathscr{D}_{\mathcal{Z}} = \sqrt{5}\),
the second summand simplifies for \(T=2\) to
\[
\frac{\varepsilon^{T}(1+L_a)^T - 1}{\varepsilon(1+L_a) - 1} \, \left(\frac{m }{n}\right)^{\frac{1}{p}} \mathscr D_{\mathcal{Z}}
=
\frac{5}{3}\sqrt{\frac{m}{n}}.
\]
Plugging in \(m=1816\) and \(n=60147\),
\[
\frac{5}{3}\sqrt{\frac{1816}{60147}}
\approx 0.289601.
\]

Therefore, with probability at least \(1-2 \delta=0.95\),
\[
\mathscr R(d_0, \widehat{\theta}_2) -  \mathscr R(\widehat{d}_{1}, \widehat{\theta}_2)
\;\le\;
0.017511 \;+\; 0.289601
\;=\;
0.307112.
\]

\subsection{Details on the Semi-Simulation Study on Job Seeker Data }\label{subsec:semi-simulation}

As detailed in Section~\ref{sec:job-trainings}, we train a binary logistic regression model using, just like detailed in Appendix~\ref{app:details-bound-gap}, the classic logistic loss $\ell(y,x,\theta)=\log\bigl(1+\exp(-y\langle\theta,x\rangle)\bigr)$ , which is $\kappa$-continuously differentiable by Condition~\ref{ass-cont-diff-feat-param}. 
Its gradient with respect to $\theta$ is given by $\nabla_\theta \ell(y,x,\theta)=-y\,\sigma(-y\langle\theta,x\rangle)x$, where $\sigma(u)=\frac{1}{1+e^{-u}}$. 
Using $\|\theta\|\leq \mathscr D_\Theta < \infty$ for all $\theta \in \Theta$ and $\|x\|\le \mathscr{D}_\mathcal{Z} < \infty$ for all $x \in \mathcal{Z}$, so that $|y\langle\theta,x\rangle|\leq \mathscr{D}_\mathcal{Z} \mathscr{D}_\Theta$, $\nabla_\theta \ell(y,x,\theta) = -y\,\sigma(-y\langle\theta,x\rangle)x$ is upper bounded by 
$
 L_\ell=\frac{\mathscr{D}_\mathcal{Z}}{1+e^{-\mathscr{D}_\mathcal{Z} \mathscr{D}_\Theta}},   
$
 which proves the loss is $L_\ell$-Lipschitz in $\theta$. Again, we use $p$-Wasserstein distance with $p = 2$.

Different to Appendix~\ref{app:details-bound-gap}, we do not use PCA to reduce feature set dimension. Instead, we work with the full set of 28 features as decscribed in Section~\ref{sec:job-trainings} and~Appendix~\ref{app.details-data}. After normalizing (see Appendix~\ref{app.details-data}), this yields:
\begin{align*}
\mathcal{D}_{\mathcal{Z}} &= \sqrt{(1-0)^2 + (1-0)^2 + \cdots + (1-0)^2} \\
&= \sqrt{\underbrace{1 + 1 + \cdots + 1}_{28 \text{ times}}} \\
&= \sqrt{28} \\
&= 2\sqrt{7}.
\end{align*}

So the diameter is $\mathcal{D}_{\mathcal{Z}} = \sqrt{28} = 2\sqrt{7} \approx 5.29$.

We store predicted probabilities on the train and test sets in CSV files:
\texttt{train\_predictions.csv} and \texttt{test\_predictions.csv}.
Each file contains two columns: the true label $y_i\in\{0,1\}$ and the predicted probability $\hat p_i\in(0,1)$.

We use confidence level $\delta=0.05$ and yielding a normal quantile of
\begin{equation*}
q(\delta) = \Phi^{-1}(1-\delta/2) \approx 1.96.
\end{equation*}

Furthermore, we use a normalized, compact parameter set $\Theta=\{\theta:\|\theta\|_2\le 1\}$.
For logistic regression predictions $f_\theta(x)=\sigma(\theta^\top x)$, since $\sigma'(t)\le 1/4$,
\begin{equation*}
\|\nabla_x f_\theta(x)\|_2 \le \frac{1}{4}\|\theta\|_2 \le \frac{1}{4},
\end{equation*}
we get
\begin{equation*}
L_f = \frac{1}{4}.
\end{equation*}
As in the case of historical data (see Appendix~\ref{app:details-bound-gap}), the loss Lipschitz constant is approximately equal to$ \mathcal{D_{\mathcal{Z}}}$. That is,
\begin{equation*}
L_\ell\approx \mathcal{D_{\mathcal{Z}}} = \sqrt{28},
\end{equation*}
and the uniform bound for the prediction range is
\begin{equation*}
F = 1,
\end{equation*}
since $f_\theta(x)\in[0,1]$.

We further use the integral complexity term (as defined in Section~\ref{sec:bounds}) denoted by $C_\infty(\mathcal F)$.
For our logistic regression class with compact $\Theta$ and the chosen normalization, we use the numerical value
\begin{equation*}
C_\infty(\mathcal F) \approx 7.3855,
\end{equation*}
which results from upper bounding $\log N(\mathcal F,\|\cdot\|_\infty,\varepsilon)$ using a Lipschitz-in-parameter covering argument and integrate 
$C_\infty(\mathcal F)=\int_0^1 \sqrt{\log N(\mathcal F,\|\cdot\|_\infty,\varepsilon)}\,d\varepsilon$.

Theorems~\ref{thm:gen-gap-stronger-ass} and~\ref{thm:gen-gap-stronger-ass-2} define the auxiliary radius
\begin{equation*}
R(\xi)
\;=\;
\max\left\{
\left(q(\delta)\sqrt{\frac{\xi(1-\xi)}{n}}\right)^{1/p} D_Z,
\;\;
\xi^{1/p}D_Z
\right\},
\qquad p=2,
\end{equation*}
with $\xi \in \{0.01, \dots, 0.5\}$ from Section~\ref{sec:job-trainings}.

We compute two bounds, from Theorem~\ref{thm:gen-gap-stronger-ass} as well as from Theorem~\ref{thm:gen-gap-stronger-ass-2}.

Theorem~\ref{thm:gen-gap-stronger-ass} bounds the excess risk under Assumption~\ref{cond:regularity-models} (which is fulfilled by logistic regression, see Section~\ref{sec:job-trainings}) by  
\begin{equation*}
\frac{48}{\sqrt{n}}
\Big(
C_\infty(\mathcal F)
+
L_\ell L_f\, R(r)^{1-p} D_Z^{p}
\Big)
+
F\sqrt{\frac{2\log(2/\delta)}{n}}
+
2L_\ell R(\xi).
\end{equation*}

We decompose the total bound its three summands:
\begin{align}
\textbf{Complexity term:}\quad
\mathsf{Comp}(r)
&=
\frac{48}{\sqrt{n}}
\Big(
C_\infty(\mathcal F)
+
L_\ell L_f / R(\xi)
\Big),
\\[2mm]
\textbf{Sampling term:}\quad
\mathsf{Samp}(\xi)
&=
F\sqrt{\frac{2\log(2/\delta)}{n}},
\\[2mm]
\textbf{Performative term:}\quad
\mathsf{Perf}(\xi)
&=
2L_\ell R(r),
\\[2mm]
\textbf{Total bound:}\quad
\mathsf{Comp}(r)&+\mathsf{Samp}(r)+\mathsf{Perf}(r).
\end{align}

Code to compute and plot these terms for datasets with varying $\xi$ can be found at \url{https://github.com/rodemann/plt-jobseekers}.

Theorem~\ref{thm:gen-gap-stronger-ass-2} is a variant of Theorem~\ref{thm:gen-gap-stronger-ass} that replaces the $L_\ell L_f R^{1-p}D_Z^p$ term by a Condition~3.13 constant $B$:
\begin{equation*}
\frac{48}{\sqrt{n}}
\Big(
C_\infty(\mathcal F)
+
B\,2^{p-1}\Big(1+\frac{D_Z}{R(r)}\Big)^{p}
\Big)
+
F\sqrt{\frac{2\log(2/\delta)}{n}}
+
2L_\ell R(\xi).
\end{equation*}
We compute this bound exemplarily for varying $\xi$ and $B = 10^{-3}$.

As in our above analysis of Theorem~\ref{thm:gen-gap-stronger-ass}, we analogously decompose this bound into three terms:
\begin{align}
\mathsf{Comp}_B(r)+\mathsf{Samp}(r)+\mathsf{Perf}(\xi),
\end{align}
changing only
\begin{align}
\mathsf{Comp}_B(\xi)
&=
\frac{48}{\sqrt{n}}
\Big(
C_\infty(\mathcal F)
+
2B (1+1/R(\xi))^2
\Big).
\end{align}

\begin{figure}[t]
    \centering
    \includegraphics[width=0.5\linewidth]{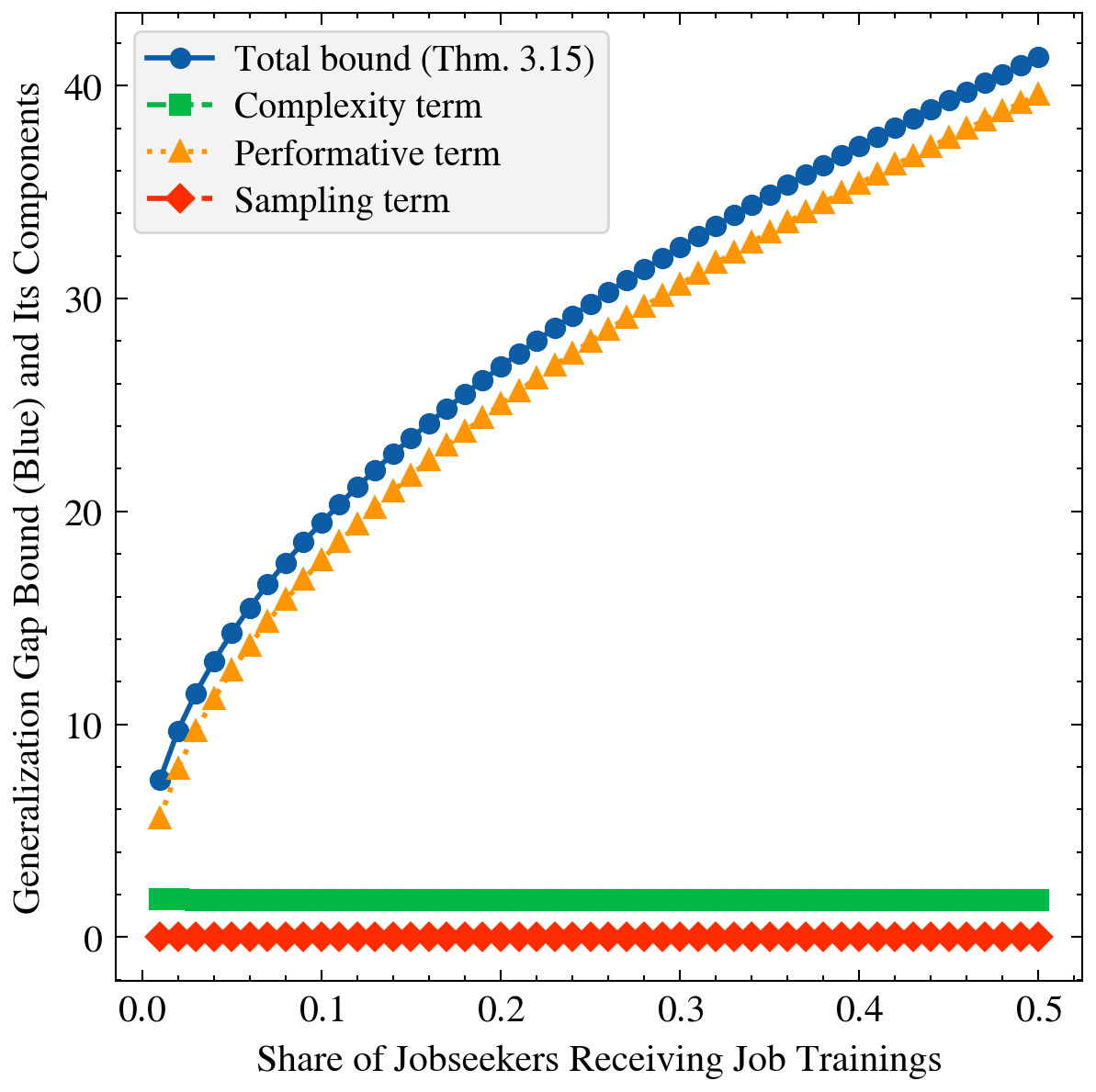}
    \caption{Generalization gap bound from Theorem~\ref{thm:gen-gap-stronger-ass-2} and its decomposition as a function of $\xi$ (the fraction of job seekers receiving training). The total bound (blue) is decomposed into complexity term $\mathsf{Comp}_B(\xi)$, sampling term $\mathsf{Samp}(\xi)$, and performative term $\mathsf{Perf}(\xi)$. All values are in logistic loss units. Parameters: $n=41585$ training samples, $B=10^{-3}$, $\delta=0.05$, $p=2$.}
    \label{fig:add-plot}
\end{figure}


\subsection{Further Results from the Semi-Simulation Study on Job Seeker Data}

While Figure~\ref{fig:routes-jobs}.3 showing the bound from Theorem~\ref{thm:gen-gap-stronger-ass} for different $\xi = \nicefrac{m}{n}$ is in the main paper, we include a visualization of how the generalization gap bound from Theorem~\ref{thm:gen-gap-stronger-ass-2} behaves for varying $\xi$ in what follows.

Namely, Figure~\ref{fig:add-plot} visualizes the bound from Theorem~\ref{thm:gen-gap-stronger-ass-2}:
\begin{equation*}
\frac{48}{\sqrt{n}}
\Big(
C_\infty(\mathcal F)
+
B\,2^{p-1}\Big(1+\frac{D_Z}{R(\xi)}\Big)^{p}
\Big)
+
F\sqrt{\frac{2\log(2/\delta)}{n}}
+
2L_\ell R(\xi),
\end{equation*}
which we decompose in the previous section into three components:
\begin{align}
\mathsf{Comp}_B(r)+\mathsf{Samp}(r)+\mathsf{Perf}(\xi).
\end{align}
As an illustration of our results, we compute this bound for varying $\xi$ and $B = 10^{-3}$ as detailed above.


For each $\xi$ (the share of job seekers receiving training), we plot the three components and the total bound. 
The x-axis is the sweep parameter $\xi=\nicefrac{m}{n}$.
The y-axis is the generalization-gap bound (and its components), all in logistic-loss units. We can see that the performative term is dominated by $\frac{48}{\sqrt{n}} C_\infty(\mathcal F)$ due to our choice of $B$. Code to compute those terms for datasets with varying $\xi$ and plot them can be found in \url{https://github.com/rodemann/plt-jobseekers}.

\section{Proofs}\label{app:proofs}

\subsection{Proof of Lemma~\ref{lemma-in-sample-shift}}\label{proof-lemma-perf-sample-shift}

We restate the result for ease of exposition.

\textbf{Lemma} (In-Sample Performative Shift Bound)\textbf{.}
    \emph{ }

\begin{proof}
We start by showing that under Conditions~\ref{cond:loss-lipschitz-convex} and~\ref{ass-cont-diff-feat-param}, 
\[
G:\Delta\to\Theta,\qquad G(d)=\arg\min_{\theta\in\Theta}R(d,\theta)
\]
is Lipschitz with respect to \(W_p\). This fact is also used in \citet{brown2022performative} as Lemma~2 and in Appendix~D of \citet{perdomo2020performative}. For the sake of completeness, we give an independent proof in the following:

Condition~\ref{ass-cont-diff-feat-param} implies that
for every fixed \(\theta\in\Theta\), the map
\[
z\mapsto \nabla_\theta \ell(z,\theta)
\]
is \(\kappa\)-Lipschitz. Throughout this argument, we use that \(\Theta\) is convex. This ensures
that the constrained minimizers \(G(d)\) and \(G(d')\) satisfy the usual
first-order variational inequalities.
Hence, for all \(d,d'\in\Delta\),
\[
\|\nabla_\theta \mathscr R(d,\theta)-\nabla_\theta \mathscr R(d',\theta)\|_2
\leq
\kappa W_p(d,d').
\]
Combining this with \(\gamma\)-strong
convexity of \(\mathscr R(d,\cdot)\), which is inherited from $\gamma$-strong convexity of the loss (Condition~\ref{cond:loss-lipschitz-convex}), yields
\[
\|G(d)-G(d')\|_2
\le
\frac{\kappa}{\gamma}W_p(d,d').
\]
Thus \(G\) is \(L_a\)-Lipschitz with \(L_a=\kappa/\gamma\).

    By the triangle inequality, 
    \begin{equation}\label{eq:dist-triangle}
            W_p(\widehat{d}_0, \widehat{d}_{T}) \leq \sum_{t=0}^{T-1} W_p(\widehat{d}_{t}, \widehat{d}_{t+1}).
    \end{equation}
    for any $\widehat d_0$. By the joint $(\varepsilon,p)$-sensitivity of $\mathrm{Tr}$ (Condition~\ref{cond:tr-sensitive})
    , we further have
    \begin{equation*}
        W_p(\widehat{d}_{t+1}, \widehat{d}_{t}) \leq \varepsilon  W_p(\widehat{d}_{t}, \widehat{d}_{t-1}) + \varepsilon  \| \widehat \theta_{t+1} - \widehat \theta_t \|.
    \end{equation*}
    Using Condition~\ref{cond:tr-sensitive} and the $L_a$-Lipschitzness of $G$ (see below), we have
    \begin{align}
        W_p(\widehat{d}_{t+1}, \widehat{d}_{t}) 
        &\leq \varepsilon  \left( W_p(\widehat{d}_{t}, \widehat{d}_{t-1}) + \| \widehat \theta_{t+1} - \widehat \theta_t \|\right) \nonumber \\
        &\leq \varepsilon  \left( W_p(\widehat{d}_{t}, \widehat{d}_{t-1}) +  L_a W_p(\widehat{d}_{t}, \widehat{d}_{t-1}) \|\right) \nonumber \\
        &= \varepsilon (1+ L_a) W_p(\widehat{d}_{t}, \widehat{d}_{t-1}) \\
        &\leq \varepsilon^2 (1+ L_a)^2 W_p(\widehat{d}_{t-1}, \widehat{d}_{t-2}) \\
              &\leq \ldots \leq \nonumber \\
               &\leq \varepsilon^t (1+ L_a)^t W_p(\widehat{d}_{1}, \widehat{d}_{0}) 
    \end{align}    
    Summing both sides over $t=0, \dots, T-1$ yields
    \begin{equation}\label{eq:geomseries-new}
        \sum_{t=0}^{T-1} W_p(\widehat{d}_{t+1}, \widehat{d}_{t}) \leq \sum_{t=0}^{T-1} \varepsilon^t (1+ L_a)^t W_p(\widehat{d}_{1}, \widehat{d}_{0})  = \frac{\varepsilon^T (1+ L_a)^T - 1}{\varepsilon(1+ L_a) - 1} \left(W_p(\widehat{d}_{1}, \widehat{d}_0)\right).
    \end{equation}
      Further observe that $W_p(\widehat{d}_{1}, \widehat{d}_0) = W_p(\mathrm{Tr}(\widehat{d}_0,\widehat{\theta}_1), \widehat{d}_0)$. Since $\mathrm{Tr}$ changes $m < n$ units in the sample, we can further bound
    \begin{equation}\label{eq:diameter-new}
        W_p(\widehat{d}_{1}, \widehat{d}_0) \leq \left(\frac{m }{n}\right)^{\frac{1}{p}} \mathscr D_{\mathcal{Z}},
    \end{equation}
    by the definition of the $p$-Wasserstein distance and by the fact that $\mathscr D_{\mathcal{Z}} = \sup_{z, z'} d_{\mathcal{Z}}(z,z') < \infty$. Combining Equations~\ref{eq:dist-triangle}, \ref{eq:geomseries-new} and~\ref{eq:diameter-new}, the result follows immediately.
\end{proof}

\subsection{Proof of Theorem~\ref{thm:exc-risk}}\label{proof:gen-bound-rq1}

We restate the result for ease of exposition.

\textbf{Theorem} (Excess Risk Bound,~\RQref{rq:1})\textbf{.}
    \emph{}

\begin{proof}
The idea of the proof is as follows: Bound 1. $\mathscr R(d, \widehat{\theta}_T) - \mathscr R(\widehat{d}_0, \widehat{\theta}_T)$; 2. $\mathscr R(\widehat{d}_0, \widehat{\theta}_T) - \mathscr R(\widehat{d}_0, \widehat{\theta}_0) $; as well as 3. $\mathscr R(\widehat{d}_0, \widehat{\theta}_0) - \inf _{\theta \in \Theta} \mathscr R(d, \theta)$ (all with high probability over $d_0$) and then combine via a union bound argument.

\begin{enumerate}
    \item    We need the Kantorovich-Rubinstein Lemma \citep{kantorovich1958space}, which states that, for any $K > 0$ and for any probability measures $\mu$ and $\nu$,  
    \[{\displaystyle W_{1}(\mu ,\nu )={\frac {1}{K}}\sup _{\|f\|_{L}\leq K}\mathbb {E} _{x\sim \mu }[f(x)]-\mathbb {E} _{y\sim \nu }[f(y)]},\] 
    where the supremum is over all Lipschitz continuous functions $f$ with Lipschitz constant at most $K$. (Here, and elsewhere, $\| \cdot \|_{L}$ denotes the Lipschitz norm.)
    
    By taking $f(\cdot)$ to be the loss function $\ell(\cdot,\theta)$, which is $L_\ell$-Lipschitz by Condition~\ref{ass-cont-diff-feat-param}, the Kantorovich-Rubinstein Lemma implies%
%


%
    \begin{equation}\label{lemma:kant-rubin}
        \left| \mathscr R(d, \widehat{\theta}_T) - \mathscr R(\widehat{d}_0, \widehat{\theta}_T)  \right| \le L_\ell W_1(d, \widehat{d}_0),
    \end{equation}
where $L_\ell$ is the Lipschitz constant of $\ell$.\footnote{There has been an error in this argument in a previous version of this manuscript, see arxiv note.}

We have $W_1 (d, \widehat{d}_0) \leq W_p(d, \widehat{d}_0)$ for $1\leq p \leq 2$. In other words, we can bound the risk difference $\mathscr R(d, \widehat{\theta}_T) - \mathscr R(\widehat{d}_0, \widehat{\theta}_T)$ via the $p$-Wasserstein distance $W_p(d, \widehat{d}_0)$ between their respective distributions $d$ and $\widehat{d}_0$. We thus have to bound $W_p(d, \widehat{d}_0)$. 



Further recall Lemma~\ref{lemma:wasserstein-conv}:

\begin{equation}\label{eq:lemma-wasserstein}
    W_p(d_0, \widehat{d}_0) \leq \left(\frac{\log \left(C_a / \delta\right)}{C_{b} n}\right)^{1 / \nu},
\end{equation}

with probability over $\widehat{d}_{0}$ of at least $1 - \delta$. 

%
Combining Equations~\ref{lemma:kant-rubin} and~\ref{eq:lemma-wasserstein} gives 
\begin{equation*}
\mathscr R(d_0, \widehat{\theta}_T) \leq  \mathscr R(\widehat{d}_0, \widehat{\theta}_T) + L_\ell \left(\frac{\log \left(C_a / \delta\right)}{C_{b} n}\right)^{1 / \nu},
\end{equation*}
with probability over $\widehat{d}_{0}$ of at least $1 - \delta$.

    \item  We will now bound $\|\widehat{\theta}_0 - \widehat{\theta}_T \|$, which will subsequently  allow us to bound $\mathscr R(\widehat{d}_0, \widehat{\theta}_T) - \mathscr R(\widehat{d}_0, \widehat{\theta}_0)$ via Lipschitz continuity of the loss in $\theta$.
   
    By the triangle inequality, we get
    \begin{equation*}
            \|\widehat{\theta}_0 - \widehat{\theta}_{T}\| \leq \sum_{t=0}^{T-1} \|\widehat{\theta}_{t} - \widehat{\theta}_{t+1}\|.
    \end{equation*}

By the \(L_a\)-Lipschitz continuity of \(G\) and Lemma~\ref{lemma-in-sample-shift},
 \begin{equation*}
\begin{aligned}
\|\widehat\theta_T-\widehat\theta_0\|_2
&=
\|G(\widehat d_T)-G(\widehat d_0)\|_2 \\
&\leq
L_a W_p(\widehat d_T,\widehat d_0) \\
&\leq
 L_a
\frac{[\varepsilon(1+L_a)]^T-1}{\varepsilon(1+L_a)-1}
\left(\frac mn\right)^{1/p} \mathscr D_{\mathcal Z} .
\end{aligned}
 \end{equation*}
Hence, by Lipschitzness of the loss in $\theta$,
\[
\mathscr R(\widehat d_0,\widehat\theta_T)- \mathscr R(\widehat d_0,\widehat\theta_0)
\leq
L_\ell L_a
\frac{[\varepsilon(1+L_a)]^T-1}{\varepsilon(1+L_a)-1}
\left(\frac mn\right)^{1/p} \mathscr D_{\mathcal Z} .
\]

    \item Define $\mathcal{L} := \ell \circ \mathcal{F}$ as the loss class and recall $\sup_{\theta \in \Theta, x \in \mathcal{X}} \|f_\theta(x)\|_2 \leq F < \infty$ with $f_\theta \in \mathcal{F}$ (see Section~\ref{sec:bounds}). As $\ell$ is $L_\ell$-Lipschitz (see 1.), we have that $\mathcal{L}$ is uniformly bounded by $2F L_\ell$. Define  

\begin{equation*}
\Re_{n}(\mathscr{F}):=\mathbb{E}\left[\sup _{f\in \mathscr{F}} \frac{1}{n} \sum_{i=1}^{n} \mathcal{E}_{i} f \left(X_{i}\right)\right],
\end{equation*}

as the expected Rademacher average\footnote{See, e.g., \citet{bousquet2002stability} and \citet{von2011statistical}.} of any function class $\mathscr{F}$ on $\mathcal{X}$ with Rademacher random variables $\mathcal{E}_1, \ldots, \mathcal{E}_n$ independent of $X_1, \ldots, X_n$. It follows with the standard symmetrization argument (see, e.g., Chapter 26 in \citet{shalev2014understanding}) for any function class $\mathscr F$ uniformly bounded by $B$ that
\begin{equation*}
    \frac{1}{2} \Re_n(\mathscr F) - \frac{B}{\sqrt{n}} \leq \mathbb E\left[ \sup_{f \in \mathscr F}\left[\mathbb E_{X \sim d} (f(X)) - \mathbb E_{X \sim \widehat{d}_0} (f(X)) \right] \right] \leq 2 \Re_n(\mathscr{F}).
\end{equation*}

Together with McDiamard's inequality \cite{mcdiarmid1989method}, we thus have for the standard empirical risk minimizer $\widehat \theta_0$ on $\widehat{d}_0 \overset{\text{i.i.d.}}{\sim} d$ and function the class $\mathscr{F} = \mathcal{L}$ with $B = F L_\ell$

\begin{equation}\label{eq:rademacher}
    \mathscr R(\widehat{d}_0, \widehat{\theta}_0) - \inf _{\theta \in \Theta} \mathscr R(d, \theta)  \leq 2 \Re_n(\mathcal{L}) + 2F L_\ell \sqrt{\frac{2 \ln(1/\delta)}{n}}, 
\end{equation}
with probability over $\widehat{d}_{0}$ of at least $1 - \delta$.
Talagrand's contraction lemma \cite{talagrand1995concentration} states that for $L_\ell$-Lipschitz loss (given by 1. above), we have $\Re_n(\mathcal{L}) \leq L_\ell \Re_n(\mathcal{F})$. 
We further have that $ \Re_n(\mathcal{F}) =   F \; \Re_n(\mathcal{F} / F)$ by definition of $\Re_n$. Hence, $2 \Re_n(\mathcal{L}) \leq 2 F L_\ell \Re_n(\mathcal{F}/F)$ in Equation~\ref{eq:rademacher}. Since all $f_\theta \in \mathcal{F}/F$ are upper-bounded by~1, we can use Dudley's theorem \cite{dudley1987universal} to obtain 

\begin{equation*}
    \Re_n(\mathcal{F} / F) \leq \frac{12}{\sqrt{n}} \sup_P \int_0^1 \sqrt{\log \mathcal{N}\left(\mathcal{F},\|\cdot\|_{L_2(P)}^2, \varepsilon\right)} \mathrm{d} \varepsilon,
\end{equation*}

with $\mathcal{N}$ the covering number from Definition~\ref{def:covering-n}, where $\|\cdot\|_{L_2(P)}^2$ was defined as 
$
    \|f\|_{L_2(P)}^2 = \mathbb E_{X \sim P} [f(X)^2] 
$
for some measure $P$.

Recall $\mathfrak{C}_{L_2}(\mathcal{F}):= \sup_P \int_{0}^{1} \sqrt{\log \mathcal{N}\left(\mathcal{F},\|\cdot\|_{L_2(P)}^2, \varepsilon\right)} \mathrm{d} \varepsilon.$ Thus,

\begin{equation*}
    \mathscr R(\widehat{d}_0, \widehat{\theta}_0) - \inf _{\theta \in \Theta} \mathscr R(d, \theta)  
    \leq 
    2 F L_\ell \frac{12}{\sqrt{n}} \mathfrak{C}_{L_2}(\mathcal{F})
    + 2F L_\ell \sqrt{\frac{2 \ln(1/\delta)}{n}} ,
\end{equation*}
with probability over $\widehat{d}_{0}$ of at least $1 - \delta$.



\end{enumerate}

\noindent The claim then follows by 1., 2., 3., together with a union bound argument. 
\end{proof}

\subsection{Proof of Corollary~\ref{cor:gen-err-bound}}\label{ass:proof-cor}
Once again, we restate the result for ease of exposition.

\textbf{Corollary.} \emph{}

\begin{proof}
Recall from Condition~\ref{ass-cont-diff-feat-param} that the loss $\ell(z, \theta)$ is $L_\ell$-Lipschitz in $z$. 
By the Kantorovich-Rubinstein Lemma \citep{kantorovich1958space}, we have
\begin{equation*}
    | \mathscr R(d_0, \widehat{\theta}_T) - \mathscr R(\widehat{d}_0, \widehat{\theta}_T) | \le L_\ell W_1(d_0, \widehat{d}_0),
\end{equation*}
where $L_\ell$ is the Lipschitz constant of $\ell$.

Since $W_1(d_0, \widehat{d}_0) \leq W_p(d_0, \widehat{d}_0)$ for $1\leq p \leq 2$, we can bound the risk difference via the $p$-Wasserstein distance between the distributions.

By the triangle inequality, we have
\begin{equation*}
    W_p(d_0, \widehat{d}_0) \leq W_p(d_0, \widehat{d}_0) + W_p(\widehat{d}_0, \widehat{d}_{T-1}).
\end{equation*}

However, since we want to relate $\mathscr R(d_0, \widehat{\theta}_T)$ to $\mathscr R(\widehat{d}_{T-1}, \widehat{\theta}_T)$, we observe that by the triangle inequality:
\begin{equation*}
    W_p(d_0, \widehat{d}_{T-1}) \leq W_p(d_0, \widehat{d}_0) + W_p(\widehat{d}_0, \widehat{d}_{T-1}).
\end{equation*}

From Lemma~\ref{lemma:wasserstein-conv}, we have
\begin{equation*}
    W_p(d_0, \widehat{d}_0) \leq \left(\frac{\log (C_a / \delta)}{C_b n}\right)^{1 / \nu},
\end{equation*}
with probability at least $1 - \delta$ over $\widehat{d}_0$.

From Lemma~\ref{lemma-in-sample-shift}, we have
\begin{equation*}
    W_p(\widehat{d}_0, \widehat{d}_{T-1}) \leq 
    \frac{\varepsilon^{T}(1+L_a)^T - 1}{\varepsilon(1+L_a) - 1} \, \left(\frac{m }{n}\right)^{\frac{1}{p}} \mathscr D_{\mathcal{Z}}
\end{equation*}

Note that for the bound involving $\widehat{d}_{T-1}$ and $\widehat{\theta}_T$, we use the fact that $T-1$ iterations have occurred to reach $\widehat{d}_{T-1}$, thus the exponent is $T-1$ rather than $T$ in the geometric series.

Combining these via the Kantorovich-Rubinstein Lemma yields
\begin{equation*}
\begin{aligned}
\mathscr R(d_0, \widehat{\theta}_T) - \mathscr R(\widehat{d}_{T-1}, \widehat{\theta}_T) 
&\leq L_\ell W_p(d_0, \widehat{d}_{T-1}) \\
&\leq L_\ell \left(\frac{\log (C_a / \delta)}{C_b n}\right)^{1 / \nu} + L_\ell \frac{\varepsilon^{T}(1+L_a)^T - 1}{\varepsilon(1+L_a) - 1} \, \left(\frac{m }{n}\right)^{\frac{1}{p}} \mathscr D_{\mathcal{Z}},
\end{aligned}
\end{equation*}

with probability at least $1 - \delta$, as required.
\end{proof}

\subsection{Proof of Lemma~\ref{lemma:pop-shift}}

We restate the result for ease of readability.

\textbf{Lemma} (Performative Population Shift Bound)\textbf{.} 
    \emph{}

\begin{proof}
    



By treating $m/n$ as an estimator of the parameter $s$ of a Bernoulli distribution (modeling whether statistical units react or not)\jb{I suppose this is a hidden assumption of the theorem -- that the population would change as the sample changes?}, we have by Wald's method \citep[see e.g.,][]{brown2001interval} that 
\begin{equation*}
     s < \frac{m}{n} +q(\delta) \sqrt{\frac{\frac{m}{n}(1 -\frac{m}{n})}{n}},
\end{equation*}
with probability $1-\delta$, as required.
\end{proof}

\subsection{Proof of Theorem~\ref{thm:perf-excess-risk}}\label{proof:perf-excess-risk}

\begin{proof}
Fix $d_1 = \mathrm{Tr}(d_0, \widehat \theta_{T})$. Denote the Bayes-optimal model (from the hypothesis class $\mathcal{F}$ induced by $\Theta$) on $d_0$ by $\theta_{0,B} := \inf _{\theta \in \Theta} \mathscr R(d_0, \theta)$ and the one on $d_1$ by $\theta_{1,B} := \inf _{\theta \in \Theta} \mathscr R(d_1, \theta)$, respectively. Invoking the Kantorovich-Rubinstein Lemma (see proof of Theorem~\ref{thm:exc-risk}) again, we have
\begin{equation}\label{eq:lemma1-recap}    
\mathscr R(d_0, \theta_{0,B}) - \mathscr R(d_1, \theta_{0,B}) \le L_\ell W_1(d_0, d_1),
\end{equation}
where $L_\ell$ is the Lipschitz constant of $\ell$.

We have by Lemma~\ref{lemma-in-sample-shift} that
\begin{equation*}
W_p(\widehat{d}_0, \widehat{d}_{T}) \leq  \frac{\varepsilon^{T}(1+L_a)^T - 1}{\varepsilon(1+L_a) - 1} \, \left(\frac{m }{n}\right)^{\frac{1}{p}} \mathscr D_{\mathcal{Z}} . 
\end{equation*}
It is easy to see that the same holds for populations $d_0$ and $d_T$:
\begin{equation*}
W_p({d}_0, {d}_{T}) \leq  \frac{\varepsilon^{T}(1+L_a)^T - 1}{\varepsilon(1+L_a) - 1} \, s^{\frac{1}{p}} \mathscr D_{\mathcal{Z}} , 
\end{equation*}
where $s \in [0,1]$ is the share of units in the population reacting to the predictions (i.e., the population analog of~$\nicefrac{m}{n}$) \jb{Ok, this sentence implies that my understanding of the setup was totally wrong}. By treating $\nicefrac{m}{n}$ as an estimator of the parameter $s$ of a Bernoulli distribution (modeling whether statistical units react or not)\jb{I suppose this is a hidden assumption of the theorem -- that the population would change as the sample changes?}, we have by Wald's method \citep[see e.g.,][]{brown2001interval} that 
\begin{equation*}
     s < \frac{m}{n}+ q(\delta) \sqrt{\frac{\frac{m}{n}(1 -\frac{m}{n})}{n}},
\end{equation*}
with probability $1-\delta$, where $q(\delta) = q_{1- \delta}$ is the $(1- \delta)$-quantile of a standard normal distribution (see Lemma~\ref{lemma:pop-shift})\jb{It looks like you are still figuring out the union bound to determine what $\delta$ is, but as it is written now, $q(\delta)$ really should be written as $q(1-0.5\delta)$.}\jr{done, but with subscript.}. Hence,
\begin{equation}\label{eq:estimated-w-bound}
    W_p({d}_0, {d}_{T}) \leq  \frac{\varepsilon^{T}(1+L_a)^T - 1}{\varepsilon(1+L_a) - 1} \, \left(\frac{m}{n} + \frac{q(\delta)\sqrt{m(n-m)}}{n^{3/2}}\right)^{1/p}
 \mathscr D_{\mathcal{Z}},
\end{equation}
with probability $1-\delta$. Combined with Equation~\ref{eq:lemma1-recap}, this gives, for $T=1$, \jb{In Equation~\ref{eq:lemma1-recap} you take $p=1$, but here you are assuming that Equation~\ref{eq:lemma1-recap} holds for any $p$?}
%
%
\begin{equation}\label{eq_7-11-25}    
\mathscr R(d_0, \theta_{0,B}) - \mathscr R(d_1, \theta_{0,B}) \le L_\ell \, \left(\frac{m}{n} + \frac{q(\delta)\sqrt{m(n-m)}}{n^{3/2}}\right)^{1/p} \mathscr D_{\mathcal{Z}},
\end{equation}
with probability $1-\delta$.

We will now bound $\|{\theta}_{0,B} - {\theta}_{1,B} \|$, 
which will allow us to bound $\mathscr R({d}_1, {\theta}_{0,B}) - \mathscr R({d}_1, {\theta}_{1,B})$ via Lipschitz continuity of the loss in $\theta$. In the next step, this bound on $\mathscr R({d}_1, {\theta}_{0,B}) - \mathscr R({d}_1, {\theta}_{1,B})$ together with the probabilistic bound on $\mathscr R(d_0, \theta_{0,B}) - \mathscr R(d_1, \theta_{0,B})$ from Equation~\ref{eq_7-11-25} will yield a high probability bound on $\mathscr R(d_0, \theta_{0,B}) - \mathscr R(d_1, \theta_{1,B}) := \inf _{\theta \in \Theta} \mathscr R(d_0, \theta) - \inf _{\theta \in \Theta} \mathscr R(\mathrm{Tr}(d_0, \widehat \theta_{T}), \theta)$. (Recall we fixed $d_1 = \mathrm{Tr}(d_0, \widehat \theta_{T})$.)

We have $\|{\theta}_{0,B} - {\theta}_{1,B} \| \leq L_a W_p({d}_{1}, {d}_0)$ by Lipschitz continuity of $G$ (see proof of Lemma~\ref{lemma-in-sample-shift}).
Further applying Lemma~\ref{lemma:pop-shift} with $T=1$ gives 
    \begin{equation*}
        \|{\theta}_{0,B} - {\theta}_{1,B} \| \leq L_a s^{\frac{1}{p}} \mathscr D_{\mathcal{Z}},
    \end{equation*}
where $L_a$ is the Lipschitz constant of $\Delta \rightarrow \Theta: d \mapsto \argmin_{\theta \in \Theta} \mathscr R(d, \theta)$ as above.

With reasoning analogous to Equation~\ref{eq:estimated-w-bound}, we can upper bound $s$ by our tail distribution of our estimator $\nicefrac{m}{n}$ with probability $1-\delta$, yielding
\begin{equation*}
     s < \frac{m}{n} +q(\delta) \sqrt{\frac{\frac{m}{n}(1 -\frac{m}{n})}{n}},
\end{equation*}
with probability $1-\delta$, where again $q(\delta) = q_{1- \delta}$ is the $(1- \delta)$-quantile of a standard normal distribution. Hence,
    \begin{equation*}
                \|{\theta}_{0,B} - {\theta}_{1,B} \| \leq L_a s^{\frac{1}{p}} \mathscr D_{\mathcal{Z}} \leq L_a \left(\frac{m}{n} + \frac{q(\delta)\sqrt{m(n-m)}}{n^{3/2}}\right)^{1/p} \mathscr D_{\mathcal{Z}},
    \end{equation*}
with probability $1-\delta$. It follows from the linearity of expectation and the loss being Lipschitz in $\theta$ that
\begin{equation*}
    \mathscr R({d}_1, {\theta}_{0,B}) - \mathscr R({d}_1, {\theta}_{1,B}) \leq L_\ell \|{\theta}_{0,B} - {\theta}_{1,B} \|.
\end{equation*}

Thus,
\begin{equation*}
    \mathscr R({d}_1, {\theta}_{0,B}) - \mathscr R({d}_1, {\theta}_{1,B}) \leq L_\ell L_a \cdot \left(\frac{m}{n} + \frac{q(\delta)\sqrt{m(n-m)}}{n^{3/2}}\right)^{1/p} \mathscr D_{\mathcal{Z}},
\end{equation*}
with probability $1-\delta$.

Together with Equation~\ref{eq_7-11-25}, this gives an upper bound with high probability on $\mathscr R(d_0, \theta_{0,B}) - \mathscr R(d_1, \theta_{1,B}) := \inf _{\theta \in \Theta} \mathscr R(d_0, \theta) - \inf _{\theta \in \Theta} \mathscr R(\mathrm{Tr}(d_0, \widehat \theta_{T}), \theta)$. Namely,  
\begin{equation}
    \inf _{\theta \in \Theta} \mathscr R(d_0, \theta) - \inf _{\theta \in \Theta} \mathscr R(d_1, \theta) \le  L_\ell L_a  \left(\frac{m}{n} + \frac{q(\delta)\sqrt{m(n-m)}}{n^{3/2}}\right)^{1/p} \mathscr D_{\mathcal{Z}} +  L_\ell \, \left(\frac{m}{n} + \frac{q(\delta)\sqrt{m(n-m)}}{n^{3/2}}\right)^{1/p} \mathscr D_{\mathcal{Z}},
\end{equation}
with probability $1 - \delta$. 
Or equivalently,
\begin{equation}\label{eq_7-11-25-3-2}
    \inf _{\theta \in \Theta} \mathscr R(d_0, \theta) - \inf _{\theta \in \Theta} \mathscr R(d_1, \theta) \le L_\ell \, \mathscr D_{\mathcal{Z}} \, \left(\frac{m}{n} + \frac{q(\delta)\sqrt{m(n-m)}}{n^{3/2}}\right)^{1/p} \left(1+  L_a  \right), 
\end{equation}
 with probability $1 - \delta$. 
%
Analogously to the proof of Theorem~\ref{thm:exc-risk}, we can now bound 1. $\mathscr R(d_0, \widehat{\theta}_T) - \mathscr R(\widehat{d}_0, \widehat{\theta}_T)$, 2. $\mathscr R(\widehat{d}_0, \widehat{\theta}_T) - \mathscr R(\widehat{d}_0, \widehat{\theta}_0) $, as well as 3. $\mathscr R(\widehat{d}_0, \widehat{\theta}_0) - \inf _{\theta \in \Theta} \mathscr R(d_0, \theta)$ (all with high probability over $\widehat d_0$) and then combine via the union bound to get
\begin{equation}\label{eq:ex-risk-recall}
 L_\ell \left( \frac{\log (C_a / \delta)}{C_b n} \right)^{1 / \nu}
+ L_\ell L_a \frac{[\varepsilon(1+L_a)]^T-1}{\varepsilon(1+L_a)-1}
\left( \frac{m}{n} \right)^{1/p} \mathscr{D}_{\mathcal{Z}} + \frac{F L_\ell}{\sqrt{n}} \left( 24 \mathfrak{C}_{L_2}(\mathcal{F}) +
2 \sqrt{2 \ln(1/\delta)} \right),
\end{equation}
as an upper bound on $\mathscr R(d_0, \widehat{\theta}_T)  - \inf _{\theta \in \Theta} \mathscr R(d_0, \theta)$ for any $T \in \mathbb N$ with probability of at least $1 - 2\delta $ under Conditions~\ref{cond:loss-lipschitz-convex} and~\ref{cond:tr-sensitive}.

Recall we want to bound the \textit{performative excess risk} 
 \begin{equation*}
\mathscr R(d_1,\widehat \theta_{T}) - \inf _{\theta \in \Theta} \mathscr R(d_1, \theta)   = \mathscr R(\mathrm{Tr}(d_0, \widehat \theta_{T}),\widehat \theta_{T}) - \inf _{\theta \in \Theta} \mathscr R(\mathrm{Tr}(d_0, \widehat \theta_{T}), \theta)
 \end{equation*}
 of a model $\widehat \theta_{T}$ trained on performative samples $\widehat d_0, \dots, \widehat d_{T}$, in which $m \leq n$ units react to the predictions. Obviously,

\begin{equation}\label{eq:full-stack}
    \mathscr R(d_1,\widehat \theta_{T}) - \inf _{\theta \in \Theta} \mathscr R(d_1, \theta) = \mathscr R(d_1,\widehat \theta_{T})- \mathscr R(d_0,\widehat \theta_{T}) + \mathscr R(d_0,\widehat \theta_{T}) - \inf _{\theta \in \Theta} \mathscr R(d_0, \theta) + \inf _{\theta \in \Theta} \mathscr R(d_0, \theta) - \inf _{\theta \in \Theta} \mathscr R(d_1, \theta).
\end{equation} 

We have, by reasoning from Equations~\ref{eq_7-11-25} and~\ref{eq:lemma1-recap} as well as the symmetry of the Wasserstein distance, that
\begin{equation*}
\mathscr R(d_1, \theta) - \mathscr R(d_0, \theta) \le L_\ell \, \left(\frac{m}{n} + \frac{q(\delta)\sqrt{m(n-m)}}{n^{3/2}}\right)^{1/p} \mathscr D_{\mathcal{Z}},
\end{equation*}
for any $\theta \in \Theta$ with probability $1-\delta$. Applying this to $\theta_T$ and using Equation~\ref{eq_7-11-25-3-2} as well as Equation~\ref{eq:ex-risk-recall}, we can upper bound our target $ \mathscr R(d_1,\widehat \theta_{T}) - \inf _{\theta \in \Theta} \mathscr R(d_1, \theta)$ (Equation~\ref{eq:full-stack}) by 
\begin{equation*}
\begin{split}
  &\underbrace{\left(\frac{m}{n} + \frac{q(\delta)\sqrt{m(n-m)}}{n^{3/2}}\right)^{1/p} }_{\geq \, \mathscr R(d_1, \theta) - \mathscr R(d_0, \theta)}
  + \underbrace{\,L_\ell \left(\frac{\log(C_a/\delta)}{C_b n}\right)^{\frac{1}{\nu}}
  + L_\ell L_a \frac{[\varepsilon(1+L_a)]^T-1}{\varepsilon(1+L_a)-1}
    \left(\frac{m}{n}\right)^{1/p} \mathscr{D}_{\mathcal{Z}}
  + \frac{F L_\ell}{\sqrt{n}} \left( 24\, \mathfrak{C}_{L_2}(\mathcal{F}) + 2 \sqrt{2 \ln(1/\delta)} \right)\,}_{\geq \, \mathscr R(d_0, \widehat{\theta}_T)  - \inf _{\theta \in \Theta} \mathscr R(d_0, \theta)}\\[2pt]
  &\quad + \underbrace{ (1+ L_a) L_\ell\, \mathscr D_{\mathcal{Z}}\, \left(\frac{m}{n} + \frac{q(\delta)\sqrt{m(n-m)}}{n^{3/2}}\right)^{1/p}
    \!}_{\geq \,  \inf _{\theta \in \Theta} \mathscr R(d_0, \theta) - \inf _{\theta \in \Theta} \mathscr R(d_1, \theta)},
\end{split}
\end{equation*}

for any $T \in \mathbb N$ with probability $1 - 4\delta$.

Equivalently,  $ \mathscr R(d_1,\widehat \theta_{T}) - \inf _{\theta \in \Theta} \mathscr R(d_1, \theta)$ is upper bounded by
\begin{equation*}
\begin{split}
  &L_\ell\!\left(\frac{\log(C_a/\delta)}{C_b n}\right)^{\frac{1}{\nu}}
   + L_\ell L_a \frac{[\varepsilon(1+L_a)]^T-1}{\varepsilon(1+L_a)-1}
     \left(\frac{m}{n}\right)^{\frac{1}{p}} \mathscr{D}_{\mathcal{Z}} \\[2pt]
  &\quad + \frac{F L_\ell}{\sqrt{n}}\!\left(24\,\mathfrak{C}_{L_2}(\mathcal{F}) + 2\sqrt{2\ln(1/\delta)}\right)
   +   (1 + L_\ell + L_a L_\ell) \, \mathscr D_{\mathcal{Z}}\, \left(\frac{m}{n} + \frac{q(\delta)\sqrt{m(n-m)}}{n^{3/2}}\right)^{1/p}.
\end{split}
\end{equation*}

Simplifying further, we obtain
\begin{equation*}
\begin{split}
\mathscr R(d_1,\widehat \theta_{T}) - \inf _{\theta \in \Theta} \mathscr R(d_1, \theta) 
\leq
A(m,\!n)
+ L_\ell \Bigg[
 C(n)
+ \frac{2F}{\sqrt{n}} \left( 12 \mathfrak{C}_{L_2}(\mathcal{F}) + \sqrt{2 \ln(1/\delta)} \right)
+ L_a K(T,\!m,\!n)  \Big)
\Bigg],
\end{split}
\end{equation*}

for any $T \in \mathbb N$ with probability $1- 4\delta $, where 
\begin{align*}
    A(m,n) &:= (1 + L_\ell + L_a L_\ell)  \mathscr D_{\mathcal Z}\left(\frac{m}{n} + \frac{q(\delta)\sqrt{m(n-m)}}{n^{3/2}}\right)^{1/p},\\
    C(n) &:= \left( \frac{\log (C_a / \delta)}{C_b n} \right)^{\tfrac{1}{\nu}},\\
    K(T,m,n) &:= \frac{[\varepsilon(1+L_a)]^T-1}{\varepsilon(1+L_a)-1} \left(\tfrac{m}{n}\right)^{1/p}\mathscr D_{\mathcal Z},
\end{align*}
only depend on constants as well as on $m$, $n$ and $T$, as required.
\end{proof}

\subsection{Proof of Corollary~\ref{cor:improve}}

We restate the corollary first. 

\textbf{Corollary} (Improving Bounds Under Performativity)\textbf{.}
    \emph{}

\begin{proof} 
We prove both parts separately.

\textbf{I)} From Theorem~3.10, the \(T\)-dependent part of the
performative excess risk bound is contained in
\[
K(T,m,n)
=
\frac{[\varepsilon(1+L_a)]^T-1}{\varepsilon(1+L_a)-1}
\left(\frac{m}{n}\right)^{1/p}\mathscr D_{\mathcal Z},
\]
with the usual convention that the fraction is \(T\) when
\(\varepsilon(1+L_a)=1\). Equivalently,
\[
K(T,m,n)
=
\left(\frac{m}{n}\right)^{1/p}\mathscr D_{\mathcal Z}
\sum_{j=0}^{T-1}[\varepsilon(1+L_a)]^j .
\]
Since \(\varepsilon>0\) and \(L_a>0\), this geometric sum is nondecreasing
in \(T\). Hence, the bound is minimized by using the initial fit
\(\widehat\theta_0\), i.e. before additional sample-performative retraining
rounds.

\textbf{II)}
In Lemma~\ref{lemma:pop-shift}, we bounded $s$ (the population performative response rate) using a single observation $m/n$:
\[
s < \frac{m}{n} + q(\delta) \sqrt{\frac{m/n(1 - m/n)}{n}}.
\]

However, when retraining $T$ times, we observe $m_1, \ldots, m_T$ at each iteration. Define $M_T = \sum_{t=1}^T m_t$. By treating each $m_t/n$ as an independent Bernoulli estimate of $s$ and pooling across iterations, we obtain $M_T/(Tn)$ as an improved estimator of $s$ based on effective sample size $Tn$.

Applying Wald's method to this estimator:
\[
s < \frac{M_t}{Tn} + q(\delta) \sqrt{\frac{M_T/(Tn) (1 - M_T/(Tn))}{Tn}} = \frac{M_t}{Tn} + q(\delta) \sqrt{\frac{M_T(Tn - M_T)}{T^2n^2}}.
\]

Since $M_T \leq \max\{m_1, \ldots, m_T\}T = mT$ and the variance term decreases with larger sample size, this bound is at least as tight as (and typically tighter than) the single-iteration bound from Lemma 3.9.

Substituting this improved bound into Theorem 3.10 yields a tighter performative excess risk bound.
\end{proof}

\subsection{Proof of Theorem \ref{thm:gen-gap-stronger-ass}}\label{app:proof-gen-gap-rq2}

We restate the theorem for ease of exposition.

\textbf{Theorem} (Generalization Gap I,~\RQref{rq:2})\textbf{.}
    \emph{%
    where $R := \max \left\{\, \left(q(\delta) \sqrt{\frac{\frac{m}{n}(1 -\frac{m}{n})}{n}}\right)^{\frac{1}{p}}\mathscr D_{\mathcal Z} ,\; \frac{\varepsilon^{T}(1+L_a)^T - 1}{\varepsilon(1+L_a) - 1} \, \left(\frac{m }{n}\right)^{\frac{1}{p}} \mathscr D_{\mathcal{Z}} \,\right\}.$}

\begin{proof}
    We want to bound the generalization gap under performativity $\mathscr R(\mathrm{Tr}(d_0, \widehat \theta_{T}), \widehat \theta_T) - \mathscr R(\mathrm{Tr}(\widehat d_0, \widehat \theta_{T}), \widehat \theta_T) $. Recall from Lemma~\ref{lemma-in-sample-shift},
    \[
W_p(\widehat{d}_0, \widehat{d}_{T}) \leq  L_\ell \frac{\varepsilon^{T}(1+L_a)^T - 1}{\varepsilon(1+L_a) - 1} \, \left(\frac{m }{n}\right)^{\frac{1}{p}} \mathscr D_{\mathcal{Z}}, 
\] and analogously for the population,
\[
W_p({d}_0, {d}_{T}) \leq  \frac{\varepsilon^{T}(1+L_a)^T - 1}{\varepsilon(1+L_a) - 1} \, \left(s\right)^{\frac{1}{p}} \mathscr D_{\mathcal{Z}}. 
\]
Slightly overloading notation, denote $d_1 = \mathrm{Tr}(d_0, \widehat \theta_{T})$ and $\widehat d_1 = \mathrm{Tr}(\widehat d_0, \widehat \theta_{T})$, respectively. We have $W_p({d}_0, {d}_{1}) \leq  s^{1/p} \mathscr D_{\mathcal{Z}}$ and $W_p({\widehat d}_0, {\widehat d}_{1}) \leq  \left(\frac{m}{n}\right)^{1/p} \mathscr D_{\mathcal{Z}}$. This implies

\begin{equation*}
    \mathscr R(\mathrm{Tr}(d_0, \widehat \theta_{T}), \widehat \theta_T) - \mathscr R(\mathrm{Tr}(\widehat d_0, \widehat \theta_{T}), \widehat \theta_T) \leq \sup_{d : W_p(d_0,d) \leq s^{\frac{1}{p}} \mathscr D_{\mathcal{Z}}} \mathscr R(d, \widehat \theta_T) - \inf_{d : W_p(\widehat d_0,d) \leq \left(\frac{m}{n}\right)^{\frac{1}{p}} \mathscr D_{\mathcal{Z}}} \mathscr R(d, \widehat \theta_T).
\end{equation*}

Note that by reasoning from the proof of Theorem~\ref{thm:perf-excess-risk} we have that 
\begin{equation*}
     s < \frac{m}{n}+q(\delta) \sqrt{\frac{\frac{m}{n}(1 -\frac{m}{n})}{n}},
\end{equation*}
with probability $1-\delta$, where $q(\delta) = q_{1- \delta}$ is the $(1- \delta)$-quantile of a standard normal distribution.

Informed by this result, choose a common radius
$R := \max \left\{\, \left(\frac{m}{n}+q(\delta) \sqrt{\frac{\frac{m}{n}(1 -\frac{m}{n})}{n}}\right)^{\frac{1}{p}}\mathscr D_{\mathcal Z} ,\; \frac{\varepsilon^{T}(1+L_a)^T - 1}{\varepsilon(1+L_a) - 1} \, \left(\frac{m }{n}\right)^{\frac{1}{p}} \mathscr D_{\mathcal{Z}} \,\right\}.$

Then trivially 
\begin{equation*}
\{ d : W_p(\widehat d_0,d) \le r_2 \}
\subseteq
\{ d : W_p(\widehat d_0,d) \le R \}.
\end{equation*}
It also holds with probability $ 1 - \delta$ that 
\begin{equation*}
\{ d : W_p(d_0,d) \le r_1 \}
\subseteq
\{ d : W_p(d_0,d) \le R \}.
\end{equation*}

Therefore,
\begin{equation}\label{eq:key-ins-dro}
\begin{aligned}
&\mathscr R(\mathrm{Tr}(d_0,\widehat\theta_T),\widehat\theta_T)
-
\mathscr R(\mathrm{Tr}(\widehat d_0,\widehat\theta_T),\widehat\theta_T)
\\
&\qquad\le
\sup_{W_p(d_0,d)\le R}\mathscr R(d,\widehat\theta_T)
-
\inf_{W_p(\widehat d_0,d)\le R}\mathscr R(d,\widehat\theta_T),
\end{aligned}
\end{equation}
with probability $1 -\delta$. The right-hand side of the previous equation is equal to
\begin{equation*}
    \underbrace{\sup_{W_p(d_0,d)\le R}\mathscr R(d,\widehat\theta_T)
-
\sup_{W_p(\widehat d_0,d)\le R}\mathscr R(d,\widehat\theta_T)}_{(a)}
+\underbrace{
\sup_{W_p(\widehat d_0,d)\le R}\mathscr R(d,\widehat\theta_T)
-
\inf_{W_p(\widehat d_0,d)\le R}\mathscr R(d,\widehat\theta_T)}_{(b)}
\end{equation*}

We will first bound (a), then (b). In (a), we are looking at two supremum risk functionals with respect to two Wasserstein balls with same radius $R$, but with different centers $\widehat d_0$ and $d_0$. In (b), we are looking at two infimum/supremum risk functionals with respect to the same Wasserstein ball (same radius $R$ and same center $\widehat d_0$).

\begin{itemize}
    \item[(a)] Observe that $\sup_{W_p(d_0,d)\le R}\mathscr R(d,\widehat\theta_T)$ and $
\sup_{W_p(\widehat d_0,d)\le R}\mathscr R(d,\widehat\theta_T)$ are distributionally robust risk functionals. 
We can thus use the following dual characterization of a distributionally robust risk from \citet{gao2023distributionally} for supremum risks centered around some $d_c$:
\begin{equation}\label{eq:dro-risk}
\sup_{W_p(d,d_c)\le R}\mathscr R(d,\theta) =
 \min _{\lambda \geq 0}\left\{\lambda R^p+\mathbb{E}_{d_c}\left[\varphi_{\lambda, \theta}(Z)\right]\right\}, 
\end{equation}



where 
\begin{equation*}
\varphi_{\lambda, \theta}(z):=\sup _{z^{\prime} \in \mathcal{Z}}\left\{ \ell(f_\theta(x'), y') -\lambda d_{\mathcal{Z}}^{p}\left(z, z^{\prime}\right)\right\},   
\end{equation*}
for any $\theta \in \Theta$ and any $\lambda \geq 0$ and $d_{\mathcal Z}$ a metric on $\mathcal Z$. 
\citet{gao2023distributionally} prove this characterization holds for any upper semi-continuous function (including, importantly for our case, $\ell \circ f_\theta: \mathcal{Z} \rightarrow \mathbb{R}$) and for any $d$ with $p$-finite moments, which holds in our setup with $1 \leq p \leq 2$. Since $f_\theta$ is continuous by Condition~\ref{cond:regularity-models} and $\ell$ is continuous in the data by Condition~\ref{ass-cont-diff-feat-param}, we can apply this result to (a) with respect to both true law $d_0$ and sample $\widehat d_0$ and obtain 
%
%
%
%
%
\begin{equation}    \label{eq_emp-proc-bound-1}
\begin{aligned}
\sup_{W_p(d_0,d)\le R}\mathscr R(d,\widehat\theta_T)
-
\sup_{W_p(\widehat d_0,d)\le R}\mathscr R(d,\widehat\theta_T) 
=
\min _{\lambda \geq 0}\left\{\lambda R^p+ \mathbb E_{d_0}\left[ \varphi_{\lambda, \widehat{\theta}_T}(Z)\right]\right\}
    - \min _{\lambda \geq 0}\left\{\lambda R^p+ \mathbb E_{\widehat d_0}\left[ \varphi_{\lambda, \widehat{\theta}_T}(Z)\right]\right\}.
\end{aligned}
\end{equation}


Defining $ \widehat{\lambda}:=\underset{\lambda}{\argmin }\left\{\lambda R^p+\mathbb{E}_{\widehat{d}_0}\left[ \varphi_{\lambda, \widehat{\theta}_T}(Z)\right]\right\}$, we get
\begin{equation}\label{eq_emp-proc-bound-2}
\begin{aligned}
&\min_{\lambda \ge 0}\!\left\{\lambda R^p + 
\!\int_{z} \varphi_{\lambda,\widehat\theta_T}(z)\, d_0(\mathrm{d}z)\right\}
-
\min_{\lambda \ge 0}\!\left\{\lambda R^p + 
\!\int_{z} \varphi_{\lambda,\widehat\theta_T}(z)\, \widehat d_0(\mathrm{d}z)\right\} \\
&=
\min_{\lambda \ge 0}\!\left\{\lambda R^p + 
\!\int_{z} \varphi_{\lambda,\widehat\theta_T}(z)\, d_0(\mathrm{d}z)\right\}
-
\left( \widehat\lambda\, R^p + 
\!\int_{z} \varphi_{\widehat \lambda,\widehat\theta_T}(z)\, \widehat d_0(\mathrm{d}z)\right)\\
&\leq \!\int_{z} \varphi_{\widehat \lambda,\widehat\theta_T}(z)\, (d_0 -  \widehat d_0)(\mathrm{d}z).
\end{aligned}
\end{equation}

We will now show---closely following Lemma 1 of \citet{lee2018minimax}---that $\widehat \lambda \leq L_\ell L_f R^{1 - p} $, where $L_\ell L_f$ is the Lipschitz constant of $\ell \circ f_\theta$ as above.

Clearly,
\begin{equation*}
\widehat \lambda  R^p \leq \widehat \lambda  R^p+\mathbb{E}_{\widehat{d}_0}\left[\sup _{z^{\prime} \in \mathcal{Z}}\left\{f_{\widehat{\theta}_T} \circ \ell\left(z^{\prime}\right)-f_{\widehat{\theta}_T} \circ \ell(Z)-\widehat \lambda \ d_{z}^{p}\left(z, z^{\prime}\right)\right\}\right].
\end{equation*}

 Since $\widehat \lambda$ is optimal with respect to $\widehat{\theta}_T $ (see definition above) and due to $\ell \circ f_\theta$ being $L_\ell L_f$-Lipschitz we have that, for all $\lambda \geq 0$,
$$
\begin{aligned}
& \widehat \lambda  R^p+\mathbb{E}_{\widehat{d}_0}\left[\sup _{z^{\prime} \in \mathcal{Z}}\left\{f_{\widehat{\theta}_T} \circ \ell\left(z^{\prime}\right)-f_{\widehat{\theta}_T} \circ \ell(Z)-\widehat \lambda  d_{\mathcal Z}^{p}\left(z, z^{\prime}\right)\right\}\right] \\
& \leq \lambda  R^p+\mathbb{E}_{\widehat{d}_0}\left[\sup _{z^{\prime} \in \mathcal{Z}}\left\{f_{\widehat{\theta}_T} \circ \ell\left(z^{\prime}\right)-f_{\widehat{\theta}_T} \circ \ell(Z)-\lambda  d_{\mathcal Z}^{p}\left(z, z^{\prime}\right)\right\}\right] \\
& \leq \lambda  R^p+\mathbb{E}_{\widehat{d}_0}\left[\sup _{z^{\prime} \in \mathcal{Z}}\left\{L_\ell L_f  d_{\mathcal{Z}}\left(z, z^{\prime}\right)-\lambda  d_{\mathcal Z}^{p}\left(z, z^{\prime}\right)\right\}\right] \\
& \leq \lambda  R^p+\sup _{t \geq 0}\left\{L_\ell L_f  t-\lambda  t^{p}\right\},
\end{aligned}
$$
where we parametrize $t=d_{\mathcal{Z}}\left(z, z^{\prime}\right)$ as in \citet{lee2018minimax}. If $p=1$, we have
\[\widehat \lambda  R \leq L_\ell L_f  R+\sup _{t \geq 0}\{L_\ell L_f  t-L_\ell L_f  t\}=L_\ell L_f  R,\]
by setting $L_\ell L_f = \lambda$, which gives $\widehat \lambda \leq L_\ell L_f$. In the case of $p>1$, we have
$$
\widehat \lambda  R \leq \lambda  R^p+L_\ell L_f^{\frac{p}{p-1}} p^{-\frac{p}{p-1}}(p-1) \lambda^{-\frac{1}{p-1}},
$$ via the fact that $ \arg\sup_{t \geq 0} \{L_\ell L_f  t-L_\ell L_f  t\} =(L_\ell L_f / p \lambda)^{1 /(p-1)}$.
Minimizing the right-hand side over $\lambda \geq 0$ with the choice of $\lambda=L_\ell L_f / p R^{p-1}$, we get the above stated bound:
$$
\widehat \lambda  \leq L_\ell L_f R  R^{-p}.
$$

In analogy to Theorem 2 of \citet{lee2018minimax}, we can now define the function class
$$\Phi := \{ \varphi_{\lambda, \theta} : \lambda \leq  L_\ell L_f R^{1 -p}, f_\theta \in \mathcal{F} \},$$
such that 
\begin{equation*}
\begin{aligned}
\sup_{W_p(d_0,d)\le R}\mathscr R(d,\widehat\theta_T)
-
\sup_{W_p(\widehat d_0,d)\le R}\mathscr R(d,\widehat\theta_T) 
\leq 
\int_{z} \varphi_{\widehat \lambda,\widehat\theta_T}(z)\, (d_0 -  \widehat d_0)(\mathrm{d}z) 
\leq 
\sup_{\varphi \in \Phi}  \int_{\mathcal Z} \varphi(Z)\, (d_0 -  \widehat d_0),
\end{aligned}
\end{equation*}
where we used Equations~\ref{eq_emp-proc-bound-1} and~\ref{eq_emp-proc-bound-2}.

We can eventually obtain the following Rademacher bound by standard symmetrization, following the proof of Theorem~3 in \citet{lee2018minimax}:
\begin{equation*}
\begin{aligned}
    \sup_{\varphi \in \Phi}  \int_{\mathcal Z} \varphi(Z)\, (d_0 -  \widehat d_0) 
    \leq
     2 \Re_{n}(\Phi) +  F \sqrt{\frac{2 \log (2 / \delta)}{n}},
\end{aligned}
\end{equation*}
%
%
with probability of at least $1- 2\delta$. Here $F$ is the upper bound of all $f_\theta \in \mathcal{F}$ (see Section~\ref{sec:bounds}) and 
\begin{equation*}
\Re_{n}(\Phi):=\mathbb{E}\left[\sup _{\varphi \in \Phi} \frac{1}{n} \sum_{i=1}^{n} \varepsilon_{i} \varphi\left(Z_{i}\right)\right]
\end{equation*}
is the standard empirical Rademacher average (see, e.g., Chapter 26 in \citealt{shalev2014understanding}, \citealt{bousquet2002stability} or \citealt{von2011statistical}) of the above defined function class $\Phi$ with Rademacher random variables $\varepsilon_1, \dots, \varepsilon_n$, independent of the random variables describing the data.

All in all, we obtain for Part (a):

\begin{equation*}
    \sup_{W_p(d_0,d)\le R}\mathscr R(d,\widehat\theta_T)
-
\sup_{W_p(\widehat d_0,d)\le R}\mathscr R(d,\widehat\theta_T) 
\leq 
  2 \Re_{n}(\Phi) +  F \sqrt{\frac{2 \log (2 / \delta)}{n}}.
\end{equation*}

    \item[(b)] Recall the classic Kantorovich-Rubinstein Lemma (see the proof of Theorem~\ref{thm:exc-risk}):
\begin{equation*}
\mathscr R(d', \theta) - \mathscr R(d'', \theta) \le L_\ell W_1(d', d''),
\end{equation*}
for any $\theta$ and any $d',d''$, where $L_\ell$ is the Lipschitz constant of $\ell$. Also, it trivially holds that for any $d',d''$ such that $W_p(\widehat d_0,d')\le R$ and $W_p(\widehat d_0,d'')\le R$ (i.e., for any $d',d''$ in the Wasserstein ball around $\widehat d_0$ of radius $R$) that $W_p(d',d'') \leq 2R$. This directly implies that 
\begin{equation*}
\sup_{W_p(\widehat d_0,d)\le R}\mathscr R(d,\widehat\theta_T)
-
\inf_{W_p(\widehat d_0,d)\le R}\mathscr R(d,\widehat\theta_T) \leq 2 L_\ell R.
\end{equation*}

\end{itemize}

Combining Parts~(a) and~(b), we conclude that $\mathscr R(\mathrm{Tr}(d_0, \widehat \theta_{T}), \widehat \theta_T) - \mathscr R(\mathrm{Tr}(\widehat d_0, \widehat \theta_{T}), \widehat \theta_T)$ is at most equal to
\begin{equation}\label{almost-final-eq}
    \underbrace{\sup_{W_p(d_0,d)\le R}\mathscr R(d,\widehat\theta_T)
-
\sup_{W_p(\widehat d_0,d)\le R}\mathscr R(d,\widehat\theta_T)}_{\leq 2 \Re_{n}(\Phi) +  F \sqrt{\frac{2 \log (2 / \delta)}{n}}}
+\underbrace{
\sup_{W_p(\widehat d_0,d)\le R}\mathscr R(d,\widehat\theta_T)
-
\inf_{W_p(\widehat d_0,d)\le R}\mathscr R(d,\widehat\theta_T)}_{\leq 2 L_\ell R},
\end{equation}
with probability $1-2\delta$.
All that is left to do now is to bound the Rademacher average $\Re_{n}(\Phi)$ of the above defined function class $\Phi := \{ \varphi_{\lambda, \theta} : \lambda \leq  L_\ell L_f R^{1 -p}, f_\theta \in \mathcal{F} \}$. To this end, define the $\Phi$-indexed process $X = (X_{\varphi})_{\varphi \in \Phi}$ via
$
  X_{\varphi}
  := \frac{1}{\sqrt{n}} \sum_{i=1}^n \varepsilon_i \varphi(Z_i),
$
with $\mathbb{E}[X_{\varphi}] = 0$ for all $\varphi \in \Phi$.  
Following Lemma~5 of \citet{lee2018minimax}, we first show that $X$ is a subgaussian
process with respect to a suitable pseudometric.

For $\varphi = \varphi_{\lambda,f}$ and $\varphi' = \varphi_{\lambda',f'}$, define
$
  d_{\Phi}(\varphi,\varphi')
  := \|f - f'\|_{L_\infty(d_0)}
     + \mathscr D_{\mathcal{Z}}^{\,p} \, |\lambda - \lambda'|
$, for which it is easy to see that 
$
  \|\varphi - \varphi'\|_{L_\infty(d_0)}
  \le d_{\Phi}(\varphi,\varphi').
$
Using Hoeffding's lemma and the fact that
$(\varepsilon_i, Z_i)$ are i.i.d., we obtain that, for any $c \in \mathbb R$,
\begin{equation*}
\begin{aligned}
  \mathbb{E}\bigl[\exp\bigl(c(X_{\varphi} - X_{\varphi'})\bigr)\bigr]
  &= \mathbb{E}\left[
      \exp\left(
        \frac{c}{\sqrt{n}}
        \sum_{i=1}^n \varepsilon_i\bigl(\varphi(Z_i) - \varphi'(Z_i)\bigr)
      \right)
    \right] \\
  &= \left\{
      \mathbb{E}\left[
        \exp\left(
          \frac{c}{\sqrt{n}}
          \varepsilon_1\bigl(\varphi(Z_1) - \varphi'(Z_1)\bigr)
        \right)
      \right]
    \right\}^{n} \\
  &\le \exp\!\left(
      \frac{c^2 d_{\Phi}(\varphi,\varphi')^2}{2}
    \right).
\end{aligned}
\end{equation*}

Hence, $X$ is subgaussian with respect to $d_{\Phi}(\varphi,\varphi')$, and therefore the Rademacher
average $R_n(\Phi)$ can be upper-bounded by the Dudley entropy integral \citep[see e.g.,][]{talagrand2014upper}:
\[
  R_n(\Phi)
  \le \frac{12}{\sqrt{n}}
     \int_{0}^{\infty}
       \sqrt{\log \mathcal N\bigl(\Phi, d_{\Phi}, u\bigr)}\,du,
\]
where $\mathcal N(\Phi, d_{\Phi}, \cdot)$ denotes the covering numbers of $(\Phi, d_{\Phi})$.

From the definition of $d_{\Phi}(\varphi,\varphi')$ above, we have
\[
  \mathcal N\bigl(\Phi, d_{\Phi}, u\bigr)
  \le
  \mathcal N\!\left(\mathcal F, \|\cdot\|_{L_\infty(d_0)}, \frac{u}{2}\right)
  \cdot
  \mathcal N\!\left(
    [0,\, L_\ell L_f R^{1-p} ], |\cdot|,
    \frac{u}{2\,\mathscr D_{\mathcal{Z}}^{\,p}}
  \right),
\]
and therefore
\begin{align*}
  R_n(\Phi)
  &\le \frac{12}{\sqrt{n}}
      \left\{
        \int_{0}^{\infty}
          \sqrt{\log \mathcal N\!\left(\mathcal F, \|\cdot\|_{L_\infty(d_0)}, \frac{u}{2}\right)}\,du
        +
        \int_{0}^{\infty}
          \sqrt{
            \log
            \mathcal N\!\left(
              [0,\, L_\ell L_f R^{1-p} ], |\cdot|,
              \frac{u}{2\,\mathscr D_{\mathcal{Z}}^{\,p}}
            \right)
          }\,du
      \right\}.
\end{align*}

Since $[0,\, L_\ell L_f R^{1-p} ]$ is a compact interval, it is straightforward to upper-bound the
second integral:
\begin{equation}\label{eq1:boundwithlemma1}
\begin{aligned}
  \int_{0}^{\infty}
    \sqrt{
      \log
      \mathcal N\!\left(
        [0,\, L_\ell L_f R^{1-p} ], |\cdot|,
        \frac{u}{2\,\mathscr D_{\mathcal{Z}}^{\,p}}
      \right)
    }\,du
  &\le
    2\,L_\ell L_f R^{1-p}\,\mathscr D_{\mathcal{Z}}^{\,p}
    \int_{0}^{1/2} \sqrt{\log(1/u)}\,du \\
  &= 2c\,L_\ell L_f R^{1-p}\,\mathscr D_{\mathcal{Z}}^{\,p},
\end{aligned}
\end{equation}
where
\(
   L_\ell L_f R^{1-p}
\)
is the length of the interval $[0,\, L_\ell L_f R^{1-p} ]$, and
\[
  c
  = \frac{1}{2}\sqrt{\log 2}
    + \sqrt{\pi}\,\operatorname{erfc}\bigl(\sqrt{\log 2}\bigr)
  < 1.
\]
Thus,
\[
  R_n(\Phi)
  \le \frac{12}{\sqrt{n}}
     \left\{
       \int_{0}^{\infty}
         \sqrt{\log \mathcal N\!\left(\mathcal F, \|\cdot\|_{L_\infty(d_0)}, \frac{u}{2}\right)}\,du
       + 2\, L_\ell L_f R^{1-p}\,\mathscr D_{\mathcal{Z}}^{\,p}
     \right\}.
\]
Using Definition~\ref{def:covering-n} of $\mathfrak{C}_{\infty(d_0)}(\mathcal{F}) :=\int_{0}^{\infty} \sqrt{\log \mathcal{N}\left(\mathcal{F},\|\cdot\|_{L_\infty(d_0)}, \varepsilon\right)} \mathrm{d} \varepsilon$ and integration by substitution (reverse chain rule), we obtain
\[
  R_n(\Phi)
  \le
  \frac{24}{\sqrt{n}}\,\left(\mathfrak{C}_{\infty(d_0)}(\mathcal{F}) 
  + \, L_\ell L_f R^{1-p}\,\mathscr D_{\mathcal{Z}}^{\,p}\right).
\]

Plugging this into Equation~\ref{almost-final-eq} yields
\begin{equation*}
    \mathscr R(\mathrm{Tr}(d_0, \widehat \theta_{T}), \widehat \theta_T) - \mathscr R(\mathrm{Tr}(\widehat d_0, \widehat \theta_{T}), \widehat \theta_T) 
    \leq
     \frac{48}{\sqrt{n}}\,\left(\mathfrak{C}_{\infty(d_0)}(\mathcal{F}) 
  + \, L_\ell L_f R^{1-p}\,\mathscr D_{\mathcal{Z}}^{\,p}\right) +  F \sqrt{\frac{2 \log (2 / \delta)}{n}} + 2 L_\ell R,
\end{equation*}
with probability $1-3\delta$ via union bound, as required.
\end{proof}

\subsection{Proof of Theorem~\ref{thm:gen-gap-stronger-ass-2}}

We restate the theorem for readability.

\textbf{Theorem} (Generalization Gap II,~\RQref{rq:2})\textbf{.} 
    \emph{}

\begin{proof}
    The reasoning mirrors the proof of Theorem~\ref{thm:gen-gap-stronger-ass}, except that we upper bound $\widehat \lambda$ by $ B2^{p-1} (1 + \mathscr{D}_{\mathcal{Z}}/ R)^p)$, where $B$ is from Condition~\ref{cond:weaker-regularity} and 
    \[
    R := \max \left\{\, \left(\frac{m}{n}+q(\delta) \sqrt{\frac{\frac{m}{n}(1 -\frac{m}{n})}{n}}\right)^{\frac{1}{p}}\mathscr D_{\mathcal Z} ,\; \frac{\varepsilon^{T}(1+L_a)^T - 1}{\varepsilon(1+L_a) - 1} \, \left(\frac{m }{n}\right)^{\frac{1}{p}} \mathscr D_{\mathcal{Z}} \,\right\},
    \]
    as in the proof of Theorem~\ref{thm:gen-gap-stronger-ass}.

Recall $ \widehat{\lambda}:=\underset{\lambda}{\argmin }\left\{\lambda R^p+\mathbb{E}_{\widehat{d}_0}\left[ \varphi_{\lambda, \widehat{\theta}_T}(Z)\right]\right\}$. Since $\varphi_{\lambda, \widehat{\theta}_T} \geq 0$ for all, we clearly have $\widehat \lambda \leq \sup_{W_p(d_0,d)\le R}\mathscr R(d,\theta) / R^p$.

    Further recall from Equation~\ref{eq:dro-risk}, the dual characterization of a locally supremum risk by \citet{gao2023distributionally}:
    \begin{equation*}
\sup_{W_p(d_0,d)\le R}\mathscr R(d,\theta) =
 \min _{\lambda \geq 0}\left\{\lambda R^p+\mathbb{E}_{d_c}\left[\varphi_{\lambda, \theta}(Z)\right]\right\}, 
\end{equation*}
where 
\begin{equation*}
\varphi_{\lambda, \theta}(z)=\sup _{z^{\prime} \in \mathcal{Z}}\left\{ \ell(f_\theta(x'), y') -\lambda  d_{\mathcal{Z}}^{p}\left(z, z^{\prime}\right)\right\}.   
\end{equation*}

Thus,
\begin{equation*}
    \inf_\theta \sup_{W_p(d_0,d)\le R}\mathscr R(d,\theta) = \inf_\theta  \min _{\lambda \geq 0}\left\{\lambda R^p+\mathbb{E}_{d_c}\left[\varphi_{\lambda, \theta}(Z)\right]\right\}.
\end{equation*}

Hence, we can apply Lemma~2 of \citet{lee2018minimax} to yield $\widehat \lambda \leq B2^{p-1} (1 + \mathscr{D}_{\mathcal{Z}}/ R)^p)$. We can the define the function class
\begin{equation*} 
\tilde \Phi := \{ \varphi_{\lambda, \theta} : \lambda\leq B2^{p-1} (1 + \mathscr{D}_{\mathcal{Z}}/ R)^p), f_\theta \in \mathcal{F} \},
\end{equation*}
instead of 
\begin{equation*} 
\Phi := \{ \varphi_{\lambda, \theta} : \lambda \leq  L_\ell L_f R^{1 -p}, f_\theta \in \mathcal{F} \},
\end{equation*}
as in the proof of Theorem~\ref{thm:gen-gap-stronger-ass}. The remainder of the proof directly follows from applying the reasoning in the proof of Theorem~\ref{thm:gen-gap-stronger-ass} to $\tilde \Phi$ in place of $\Phi$.

In particular, in place of Equation~\ref{eq1:boundwithlemma1}, we get 

\begin{equation*}
\begin{aligned}
  \int_{0}^{\infty}
    \sqrt{
      \log
      \mathcal N\!\left(
        [0,\, B2^{p-1} (1 + \mathscr{D}_{\mathcal{Z}}/ R)^p) ], |\cdot|,
        \frac{u}{2\,\mathscr D_{\mathcal{Z}}^{\,p}}
      \right)
    }\,du
  &\le
    2\,B2^{p-1} (1 + \mathscr{D}_{\mathcal{Z}}/ R)^p)\,\mathscr D_{\mathcal{Z}}^{\,p}
    \int_{0}^{1/2} \sqrt{\log(1/u)}\,du \\
  &= 2c\,B2^{p-1} (1 + \mathscr{D}_{\mathcal{Z}}/ R)^p)\,\mathscr D_{\mathcal{Z}}^{\,p},
\end{aligned}
\end{equation*}
where
\(
   B2^{p-1} (1 + \mathscr{D}_{\mathcal{Z}}/ R)^p)
\)
is the length of the interval $[0,\, B2^{p-1} (1 + \mathscr{D}_{\mathcal{Z}}/ R)^p) ]$, and
\[
  c
  = \frac{1}{2}\sqrt{\log 2}
    + \sqrt{\pi}\,\operatorname{erfc}\bigl(\sqrt{\log 2}\bigr)
  < 1.
\]
Thus,
\[
  R_n(\tilde \Phi)
  \le \frac{12}{\sqrt{n}}
     \left\{
       \int_{0}^{\infty}
         \sqrt{\log \mathcal N\!\left(\mathcal F, \|\cdot\|_{L_\infty(d_0)}, \frac{u}{2}\right)}\,du
       + 2\, B2^{p-1} (1 + \mathscr{D}_{\mathcal{Z}}/ R)^p)\,\mathscr D_{\mathcal{Z}}^{\,p}
     \right\}.
\]
Using Definition~\ref{def:covering-n} of $\mathfrak{C}_{\infty(d_0)}(\mathcal{F}) :=\int_{0}^{\infty} \sqrt{\log \mathcal{N}\left(\mathcal{F},\|\cdot\|_{L_\infty(d_0)}, \varepsilon\right)} \mathrm{d} \varepsilon$ and integration by substitution, we obtain
\[
  R_n(\tilde \Phi)
  \le
  \frac{24}{\sqrt{n}}\,\left(\mathfrak{C}_{\infty(d)}(\mathcal{F}) 
  + \, B2^{p-1} (1 + \mathscr{D}_{\mathcal{Z}}/ R)^p)\,\mathscr D_{\mathcal{Z}}^{\,p}\right).
\]

Analogous to the proof of Theorem~\ref{thm:gen-gap-stronger-ass}, we then get
\begin{equation*}
    \mathscr R(\mathrm{Tr}(d_0, \widehat \theta_{T}), \widehat \theta_T) - \mathscr R(\mathrm{Tr}(\widehat d_0, \widehat \theta_{T}), \widehat \theta_T) 
    \leq
     \frac{48}{\sqrt{n}}\,\left(\mathfrak{C}_{\infty(d)}(\mathcal{F}) 
  + \, B2^{p-1} (1 + \mathscr{D}_{\mathcal{Z}}/ R)^p)\,\mathscr D_{\mathcal{Z}}^{\,p}\right) +  F \sqrt{\frac{2 \log (2 / \delta)}{n}} + 2 L_\ell R,
\end{equation*}
with probability $1-\delta$, as required.

\end{proof}

\subsection{Proof of Theorem~\ref{thm:cumul-perf-excess-risk}}\label{proof-cumul-oerf-exc-risk}

For ease of exposition, we start by restating the result.

\textbf{Theorem} (Cumulative Performative Excess Risk Bound)\textbf{.}
    \emph{}

\begin{proof}
The idea of the proof is to leverage the recursive argument that $\theta_t$ is the (population) risk minimizer under $d_{t-1}$, which determines $d_t$ via the unknown $\mathrm{Tr}$.

We want to bound the \textit{cumulative performative excess risk} 
    \[
    \sum_{t=T}^{\tilde T} \mathscr R(d_{t}, \theta_{t}) - \inf _{\theta \in \Theta} \mathscr R(d_{t}, \theta),
    \]
    with $d_{t} = \mathrm{Tr}(d_{{t}-1}, \theta_{{t}})$, $\theta_T = \widehat \theta_T$ and $d_{T-1} = d_0$. 

This term is equivalent to
\begin{equation*}
\left( \mathscr R(\mathrm{Tr}(d_0, \widehat \theta_{T}),\widehat \theta_{T}) - \inf _{\theta \in \Theta} \mathscr R(\mathrm{Tr}(d_0, \widehat \theta_{T}), \theta)  \right) +
       \sum_{t=T+1}^{\tilde T} \mathscr R(d_{t}, \theta_{t}) - \inf _{\theta \in \Theta} \mathscr R(d_{t}, \theta). 
\end{equation*}

We have an upper bound with high probability on $\mathscr R(\mathrm{Tr}(d_0, \widehat \theta_{T}),\widehat \theta_{T}) - \inf _{\theta \in \Theta} \mathscr R(\mathrm{Tr}(d_0, \widehat \theta_{T}), \theta)$ by Theorem~\ref{thm:perf-excess-risk}. In order to bound $\sum_{t=T+1}^{\tilde T} \mathscr R(d_{t}, \theta_{t}) - \inf _{\theta \in \Theta} \mathscr R(d_{t}, \theta)$, we first observe from Definition~\ref{def:RERM} that $\theta_t$ is the (population) risk minimizer on $d_{t-1}$. 
In other words,
\begin{equation*}
\theta_{t}\in \arg\inf_{\theta\in\Theta}\mathscr R(d_{t-1},\theta),
\end{equation*}
and analogously
\begin{equation*}
\theta_{t+1}\in \arg\inf_{\theta\in\Theta}\mathscr R(d_t,\theta).
\end{equation*}
We can thus write $\sum_{t=T+1}^{\tilde T} \mathscr R(d_{t}, \theta_{t}) - \inf _{\theta \in \Theta} \mathscr R(d_{t}, \theta) = \sum_{t=T+1}^{\tilde T} \mathscr R(d_{t}, \arg\inf_{\theta\in\Theta}\mathscr R(d_{t-1},\theta)) - \inf _{\theta \in \Theta} \mathscr R(d_{t}, \theta)$, or more helpfully, 

\begin{equation*}
    \sum_{t=T+1}^{\tilde T} \mathscr R(d_{t}, \theta_{t}) - \inf _{\theta \in \Theta} \mathscr R(d_{t}, \theta) = \sum_{t=T+1}^{\tilde T} \mathscr R(d_{t}, \theta_{t}) - \mathscr R(d_{t}, \theta_{t+1}).
\end{equation*}

Since $\mathrm{Tr}$ modifies a share $s\in[0,1]$ of the population and $\mathcal Z$ is bounded, an argument analogous to the in-sample case gives
\begin{equation*}
\|\theta_{t}-\theta_{t+1}\|\le L_a\, s^{1/p}\mathscr D_{\mathcal Z},
\end{equation*}
where $L_a$ is the Lipschitz constant of the map $P\mapsto \arg\min_{\theta\in\Theta}\mathscr R(P,\theta)$. We need to estimate the performative propensity in the population, i.e., the share $s$ of the population reacting to predictions. Lemma~\ref{lemma:pop-shift} gives 
\begin{equation*}    
  s < \frac{m}{n}+ q(\delta) \sqrt{\frac{m}{n^2}(1 -\frac{m}{n})},
\end{equation*}  
with probability $1-\delta$, where $q(\delta) = q_{1- \delta}$ is the $(1- \delta)$-quantile of a standard normal distribution and $\nicefrac{m}{n}$ is the share of the sample that changes. 
Using this high-probability upper bound on $s$ as above, we obtain
\begin{equation*}
\|\theta_{t}-\theta_{t+1}\|\le
L_a\, \left(\frac{m}{n}+q(\delta) \sqrt{\frac{\frac{m}{n}(1 -\frac{m}{n})}{n}}\right)^{\frac{1}{p}}\mathscr D_{\mathcal Z},
\end{equation*}
with probability at least $1-\delta$. Since $\ell$ is Lipschitz in $\theta$ (and thus $\mathscr R$ as an integral over $\ell$), we have
\begin{equation*}
\mathscr R(d_{t},\theta_{t})-\mathscr R(d_{t},\theta_{t+1})
\le
L_\ell\,\|\theta_{t}-\theta_{t+1}\|,
\end{equation*}
and hence
\begin{equation*}
\mathscr R(d_{t},\theta_{t})-\mathscr R(d_{t},\theta_{t+1})
\le
L_\ell L_a\, \left(\frac{m}{n}+q(\delta) \sqrt{\frac{\frac{m}{n}(1 -\frac{m}{n})}{n}}\right)^{\frac{1}{p}}\mathscr D_{\mathcal Z},
\end{equation*}
with probability at least $1-\delta$. This implies
\begin{equation*}
\sum_{t= T-1}^{\tilde T} \mathscr R(d_{t},\theta_{t})-\mathscr R(d_{t},\theta_{t+1})
\le
(\tilde T - T + 1) L_\ell L_a\, \left(\frac{m}{n}+q(\delta) \sqrt{\frac{\frac{m}{n}(1 -\frac{m}{n})}{n}}\right)^{\frac{1}{p}}\mathscr D_{\mathcal Z},
\end{equation*}
with probability at least $1-\delta$. 

With the bound on $\mathscr R(\mathrm{Tr}(d_0, \widehat \theta_{T}),\widehat \theta_{T}) - \inf _{\theta \in \Theta} \mathscr R(\mathrm{Tr}(d_0, \widehat \theta_{T}), \theta)$, Theorem~\ref{thm:perf-excess-risk} implies that
\begin{equation*}
        \sum_{t=T}^{\tilde T} \mathscr R(d_{t}, \theta_{t}) - \inf _{\theta \in \Theta} \mathscr R(d_{t}, \theta) 
        \leq
        \mathscr B(T,m,n) +
        (\tilde T - T + 1) L_\ell L_a\,\left(\frac{m}{n}+q(\delta) \sqrt{\frac{\frac{m}{n}(1 -\frac{m}{n})}{n}}\right)^{\frac{1}{p}}\mathscr D_{\mathcal Z},
\end{equation*}
with probability at least $1-5\delta$, where $\mathscr B(T,m,n)$ is the performative excess risk bound from Theorem~\ref{thm:perf-excess-risk}.
\end{proof}

\end{document}